\def\eqref#1{equation~\ref{#1}}
\def\1{\bm{1}}
\DeclareMathAlphabet{\mathsfit}{\encodingdefault}{\sfdefault}{m}{sl}
\SetMathAlphabet{\mathsfit}{bold}{\encodingdefault}{\sfdefault}{bx}{n}
\newcommand{\R}{\mathbb{R}}
\DeclareMathOperator*{\argmin}{arg\,min}
\newcommand{\Diff}{\mathrm{Diff}}
\newcommand{\Prob}{\mathrm{Prob}}
\newcommand{\SPD}{S_d^{++}}
\newcommand{\trace}{\mathrm{Tr}}
\newcommand{\tmin}{t_{\text{min}}}
\newcommand{\tmax}{t_{\text{max}}}
\newcommand{\lmin}{\lambda_{\text{min}}}
\newcommand{\lmax}{\lambda_{\text{max}}}
\DeclareMathOperator{\hor}{Hor}
\DeclareMathOperator{\tr}{tr}
\DeclareMathOperator{\id}{id}
\DeclareMathOperator{\ver}{Ver}
\newtheorem{proposition}{Proposition}
\newtheorem{lemma}{Lemma}
\title{On the Wasserstein Geodesic Principal Component Analysis of probability measures}
\author{Nina Vesseron \\
CREST-ENSAE, IP Paris\\
\texttt{\{nina.vesseron\}@ensae.fr} \\
\And
Elsa Cazelles \thanks{These authors contributed equally to this work.} \\
CNRS, IRIT, Université de Toulouse \\
\texttt{\{elsa.cazelles\}@irit.fr} \\
\And
Alice Le Brigant ${}^\ast$ \\
Université Paris 1 Panthéon Sorbonne \\
\texttt{\{alice.le-brigant\}@univ-paris1.fr} \\
\And
Thierry Klein \\
ENAC, IMT, Université de Toulouse \\
\texttt{\{thierry.klein\}@math.univ-toulouse.fr} 
}
\begin{document}

\maketitle

\begin{abstract}
\textbf{On the Wasserstein Geodesic Principal Component Analysis of probability measures}
This paper focuses on Geodesic Principal Component Analysis (GPCA) on a collection of probability distributions using the Otto-Wasserstein geometry. The goal is to identify geodesic curves in the space of probability measures that best capture the modes of variation of the underlying dataset. We first address the case of a collection of Gaussian distributions, and show how to lift the computations to the space of invertible linear maps. For the more general setting of absolutely continuous probability measures, we leverage a novel approach to parameterizing geodesics in Wasserstein space with neural networks. Finally, we compare to classical tangent PCA through various examples and provide illustrations on real-world datasets. 
\end{abstract}

\section{Introduction}
\label{sec:intro}
In this paper, we are interested in computing the main modes of variation of a dataset of absolutely continuous (a.c.) probability measures supported in $\R^d$. For data points living in an arbitrary Hilbert space, the classical approach defined by Principal Component Analysis (PCA) consists in finding a sequence of nested affine subspaces on which the projected data retain a maximal part of the variance of the original dataset, or equivalently, yield best lower-dimensional approximations. When dealing with a set of a.c. probability distributions, a natural choice is to identify the probability measures with their probability density functions and to perform PCA on these using the $L^2$ Hilbert metric. Unfortunately, as highlighted in \cite{cazelles2018geodesic}, the components computed in this manner fail to capture the intrinsic structure %
of the dataset: the projections onto the components most likely result in non-positive and un-normalized functions. Using the Wasserstein metric $W_2$ instead has proven to overcome these limitations by taking into account the geometry of the space of distributions. %

The Wasserstein metric endows the space of probability distributions with a \emph{Riemannian-like} structure, framing the problem as PCA on a (positively) curved Riemannian manifold.
A first approach to solve this task, known as \emph{Tangent PCA} (TPCA), consists in embedding the data into the tangent space at a reference point, and applying classical PCA in this flat space, as in \cite{fletcher2003statistics}. TPCA is computationally advantageous but can generically induce distortion in the embedded data, depending on the curvature of the manifold at the reference point and the dispersion of the data. A more geometrically coherent approach is \emph{Geodesic PCA} (GPCA) proposed for Riemannian manifolds in \cite{huckemann2010intrinsic, huckemann2006principal}, where principal modes of variations are geodesics that minimize the variance of the projection residuals. Following this approach, the first geodesic component of a set of probability measures $\nu_1,\ldots,\nu_n$ in the Wasserstein space solves
\begin{equation}\label{eq:pca_intro}
    \inf_{t\mapsto \mu(t) \text{ geodesic}}\ \sum_{i=1}^n\inf_{t_i}\,\, W_2^2(\mu(t_i),\nu_i).
\end{equation}
Interestingly, unlike in the Hilbert setting, this criterion is \emph{not} equivalent to maximizing the variance of the projections, which leads to a different notion of PCA on Riemannian manifolds (see \cite{sommer2010differential, sommer2014optimization}). 

\paragraph{Related works} TPCA in the Wasserstein space was considered by \cite{wang2013linear} through the use of the linearized Wasserstein distance. In a similar approach, \cite{boissard2015distribution} restrict to distributions that can be obtained by deforming a single template measure. For one-dimensional probability measures, \cite{bigot2017geodesic} have shown that GPCA and its linearized approximation TPCA coincide, as the embedding into a tangent space is then an isometry when constrained to a convex set. An algorithm in this case has been proposed in \cite{cazelles2018geodesic}, with an approximate extension in dimension 2. For higher-dimensional measures, \cite{seguy2015principal} solve an approximate version of GPCA, replacing geodesics by generalized geodesics as defined in \cite{ambrosio2008gradient}. Despite all this, a method to solve the \emph{exact} GPCA problem described in \eqref{eq:pca_intro} is still missing for $\R^d$-valued probability measures. The goal of this paper is to fill this gap.

\paragraph{Main contributions} In this paper, we introduce two algorithms to solve the exact GPCA problem in the Wasserstein space of (1) centered Gaussian distributions and (2) a.c. probability measures on $\R^d$. Our methods are exact in the sense that they do not rely on a linearization of the Wasserstein space, and the components are true geodesics that minimize the cost in \eqref{eq:pca_intro}. In the Gaussian case, we leverage the Bures-Wasserstein geometry to lift the computations to the flat space of invertible matrices. We show an example where GPCA and TPCA differ significantly, and relate this effect to curvature. 
In the general case of a.c. probability distributions, we lift the probability distributions to the space of (non necessarily optimal) maps that pushforward a given reference measure, as described by \cite{otto2001geometry}. This approach is independent of the chosen reference measure and yields a convenient way to parametrize geodesic components and define orthogonality with respect to the Wasserstein metric. In practice, we parametrize geodesic components using multilayer perceptrons (MLPs), trained to minimize the cost in \eqref{eq:pca_intro}.  We show illustrations on images and 3D point clouds. Along the way, we prove that for univariate Gaussian distributions, GPCA yields the same results whether it is performed in the space of a.c. distributions or restricted to the Gaussian submanifold.

\paragraph{Organization of the paper} In Section \ref{sec:background}, we present the Wasserstein metric and its restriction to Gaussian distributions, as well as the related Bures-Wasserstein and Otto-Wasserstein geometries. We present GPCA for centered Gaussian distributions in Section~\ref{sec:GPCA_gaussian}, and the general case of a.c. probability measures in Section~\ref{sec:GPCA_general}. Experiments are presented in Section \ref{sec:expe}, and the paper ends with a discussion in Section \ref{sec:discussion}. All the proofs and additional experiments are deferred to the appendices.

\section{Background}
\label{sec:background}

\paragraph{The Wasserstein distance}

Optimal transport is about finding the optimal way to transport mass from one distribution $\mu$ on $\R^d$ to another $\nu$ with respect to a ground cost, say the Euclidean squared distance. The total transport cost defines the \emph{Wasserstein distance} $W_2$ between a.c. measures $\mu,\nu$ with moment of order $2$, whose \cite{monge1781memoire} formulation is given by
\begin{equation}
\label{monge}
W_2^2(\mu,\nu) = \int_{\R^d}\Vert x-T_{\mu}^{\nu}(x)\Vert^2d\mu(x),
\end{equation}
and where the map $T_{\mu}^{\nu}$ is the $\mu$-a.s. unique gradient of a convex function verifying $T_{\mu}^{\nu}{\#}\mu=\nu$, as proven by \cite{brenier1991polar}. When the distributions $\mu$ and $\nu$ are centered (non-degenerate) Gaussian distributions, they can be identified with their covariance matrices $\Sigma_\mu,\Sigma_\nu$ and the induced distance on the manifold $S_d^{++}$ of symmetric positive definite (SPD) matrices is called the
 \emph{Bures-Wasserstein distance} $BW_2$  (see e.g. \cite{modin2016geometry,bhatia2019bures}):
\begin{equation}\label{bures-wasserstein}
    BW_2^2(\Sigma_\mu,\Sigma_\nu) = \text{tr}\left[\Sigma_\mu + \Sigma_\nu - 2(\Sigma_\mu^{1/2}\Sigma_\nu\Sigma_\mu^{1/2})^{1/2}\right].
\end{equation}
Both distances %
can be induced by a Riemannian metric on their respective manifolds, i.e. the space of a.c. distributions and $S_d^{++}$, as we will see in the following. For more details, see Appendix \ref{appendix:otto}.

\paragraph{Bures-Wasserstein geometry of centered Gaussian distributions} The set of centered non-degenerate Gaussian distributions on $\R^d$ is identified with the manifold $S_d^{++}$ of SPD matrices. The Riemannian geometry of the Bures-Wasserstein metric in \eqref{bures-wasserstein} can be described by considering $S_d^{++}$ as the quotient of the manifold $GL_d$ of invertible matrices by the right action of the orthogonal group $O_d$. In this geometry, $GL_d$ is decomposed into equivalence classes called \emph{fibers}. The fiber over $\Sigma\in S_d^{++}$ is defined to be the pre-image of $\Sigma$ under the projection 
\begin{equation}\label{pi-gaussian}
\pi\colon A\in GL_d\mapsto AA^\top\in S_d^{++},
\end{equation}
and can be obtained as the result of the action of $O_d$ on a representative, e.g. $\Sigma^{1/2}$ the only SPD square root of $\Sigma$: $\pi^{-1}(\Sigma)=\{A\in GL_d,\,\, AA^\top=\Sigma\}=\Sigma^{1/2}O_d.$
\vspace{-0.1cm}\begin{figure}[htp]
    \begin{minipage}[c]{0.55\linewidth}
    \vspace{0pt}
Tangent vectors to $GL_d$ are said to be \emph{horizontal} if they are orthogonal to the fibers with respect to the Frobenius metric, i.e. if they belong to the space 
\begin{align}
\hor_A &\colon=\{X\in\R^{d\times d}, \,X^\top A-A^\top X=0\},
\end{align}
for a given point $A\in GL_d$.
Then the projection $\pi$ in \eqref{pi-gaussian} defines an isometry between the horizontal subspace $\hor_A$ equipped with the Frobenius inner product $\langle X,Y\rangle \colon= \tr(X Y^\top)$, and $S_d^{++}$ equipped with a Riemannian metric that induces the Bures-Wasserstein distance (\eqref{bures-wasserstein}) as the geodesic distance. In particular, this means that moving horizontally along straight lines in the top space $GL_d$ is equivalent to moving along geodesics in the bottom space $S_d^{++}$ (see Figure \ref{fig:otto_geometry}), as recalled in the following proposition.
\end{minipage} \hfill
    \begin{minipage}[c]{0.4\linewidth}
    \centering
    \includegraphics[scale = 0.45]{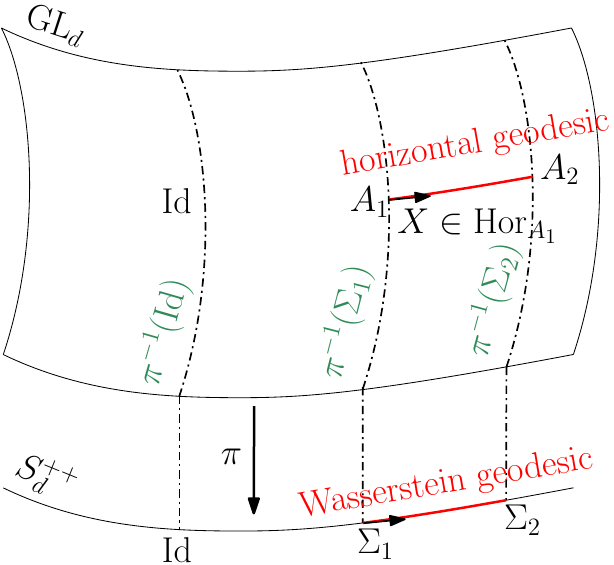}
    \caption{The Bures-Wasserstein geometry of centered Gaussian distributions, inspired by \cite{khesin2021geometric}.}
    \label{fig:otto_geometry}
    \end{minipage}
\end{figure}
\vspace{-0.4cm}\begin{proposition}[\cite{takatsu2011wasserstein, malago2018wasserstein, bhatia2019bures}]\label{prop:gaussian_geodesic}
    Any geodesic $t\mapsto\Sigma(t)$ in $S_d^{++}$ for the Bures-Wasserstein metric in \eqref{bures-wasserstein} is the $\pi$-projection of a horizontal line segment in $GL_d$, that is
    \begin{equation}\label{geod_gauss}
    \Sigma(t)=\pi(A+tX)=(A+tX)(A+tX)^\top, \quad A\in GL_d, \,\, X\in \hor_A,
    \end{equation}
    where $t$ is defined in a certain time interval $(\tmin, \tmax)$.    Also, the Bures-Wasserstein distance between two covariance matrices $\Sigma_1,\Sigma_2\in S_d^{++}$ is given by the minimal Euclidean distance between their fibers
    \begin{equation}\label{eq:bures_otto}
        BW_2(\Sigma_1,\Sigma_2) = \inf_{Q_1,Q_2\in O_d} \ \Vert \Sigma_1^{1/2}Q_1-\Sigma_2^{1/2}Q_2\Vert = \inf_{Q\in SO_d} \ \Vert \Sigma_1^{1/2}-\Sigma_2^{1/2}Q\Vert,
    \end{equation}
where $\|\cdot\|$ is the Frobenius norm and $SO_d$ is the special orthogonal group.
\end{proposition}
It is essential to note that the geodesic \eqref{geod_gauss} cannot be extended for all time $t\in\R$ (the only geodesic lines are those obtained by translation \cite[Proposition 3.6]{kloeckner2010geometric}). Therefore, \eqref{geod_gauss} is only defined on a time interval $(\tmin, \tmax)$ that depends on the eigenvalues of $XA^{-1}$ (see  Appendix \ref{app:geodesic_param}). More details on the Bures-Wasserstein geometry can be found in Appendix \ref{sec:appendix_otto_gauss}.

\paragraph{Otto-Wasserstein geometry of a.c. probability measures} The Riemannian structure described for Gaussian distributions is a special case of \cite{otto2001geometry}'s more general construction : the bottom space becomes the space  $\Prob(\Omega)$ of a.c. distributions supported on a compact %
set $\Omega\subset\R^d$ while the top space is the space of diffeomorphisms $\Diff(\Omega)$ endowed with the $L^2$ metric with respect to a fixed reference measure $\rho$ (see Figure \ref{fig:otto_geometry_annex} in Appendix \ref{appendix:otto}). %
The \emph{fibers} of $\Diff(\Omega)$ are then defined to be the pre-images under the projection 
\begin{equation}\label{pi-ac}
\pi \colon\varphi\in\Diff(\Omega)\mapsto \pi(\varphi) = \varphi_\#\rho\in\Prob(\Omega).
\end{equation}
In this setting, \emph{horizontal} displacements in $\Diff(\Omega)$ are along vector fields that are gradients of functions. The projection $\pi$ defines an isometry between the horizontal subspace equipped with the $L^2(\rho)$-inner product and $\Prob(\Omega)$ equipped with a Riemannian metric that induces the Wasserstein distance as the geodesic distance. In particular, we have the following result. %
\begin{proposition}[\cite{otto2001geometry}]
    \label{prop:geodesic_otto}
    Any geodesic $t\mapsto\mu(t)$ for the Wasserstein metric given in \eqref{monge} is the $\pi$-projection of a line segment in $\Diff(\Omega)$ going through a diffeomorphism $\varphi$ at horizontal speed $\nabla f\circ\varphi$ for some smooth function $f\in\mathcal{C}(\R^d)$. That is, for $t$ defined in a certain interval $(\tmin,\tmax)$,
    \begin{equation}\label{geod_otto}
    \mu(t) = \pi(\varphi + t\nabla f\circ\varphi) =(\id+t\nabla f)_\#(\varphi_\#\rho).
    \end{equation}
Another geodesic $\tilde\mu(t)=\pi(\varphi + t\nabla\tilde f\circ\varphi)$ is orthogonal to $\mu(t)$ at $t=0$ for the Riemannian metric inducing the Wasserstein distance if and only if $\langle \nabla f\circ\varphi, \nabla \tilde f\circ \varphi\rangle_{L^2(\rho)}=0$. 
\end{proposition}
We emphasize that $f$ need not be convex in \eqref{geod_otto}, %
unlike in the more classical parametrization of geodesics due to \cite{mccann1997convexity} between two distributions $\mu_0$ and $\mu_1=\nabla u_{\#}\mu_0$ : 
\begin{equation}\label{geod_mccann}
    \mu(t)=(\id+t(\nabla u-\id))_\#\mu_0, \mbox{with } t\in[0,1] \mbox{ and } u \mbox{ a convex function}.
\end{equation}
Note that \eqref{geod_otto} parametrizes geodesics provided that $\id+t\nabla f$ is a diffeomorphism, and thus it is defined on a time interval that depends on the eigenvalues of the Hessian of $f$. On the other hand, the convexity condition on the function $u$ in the parametrization of \eqref{geod_mccann} ensures that time $t$ is defined on $[0,1]$. Both are completely equivalent (see Appendix \ref{app:geodesic_param} for details).

\section{Geodesic PCA on centered Gaussian distributions}
\label{sec:GPCA_gaussian}

In this section, we %
consider the exact GPCA problem for the Bures-Wasserstein metric in \eqref{bures-wasserstein}. The data are $n$ centered Gaussian distributions identified with their covariance matrices $\Sigma_1,\ldots,\Sigma_n\in S_d^{++}$. Following \cite{huckemann2010intrinsic}, we define the first component as the geodesic $t\mapsto\Sigma(t)\in S_d^{++}$ that minimizes the sum of squared residuals of the $BW_2$-projections of the data:
\begin{equation}\label{gaussian_gpca}
\underset{t\mapsto\Sigma(t) \text{ geodesic}}{\inf}\ \sum_{i=1}^n\inf_{t_i}\,\, BW_2^2(\Sigma(t_i),\Sigma_i).
\end{equation}
The second principal component is defined to be the geodesic that minimizes the same cost function, with the constraint of intersecting the previous component orthogonally. The subsequent principal components have the additional constraint of going through the intersection of the first two principal geodesics. This definition does not impose that the geodesic components go through the Wasserstein barycenter %
(see \cite{agueh2011barycenters}), and in Section~\ref{sec:expe} we show an example where this is indeed not verified. This gives an observation of the phenomenon already described in \cite{huckemann2006principal} for spherical geometry. The proofs of this section are deferred to Appendix \ref{sec:appendix_GPCA_gauss}.

\paragraph{Learning the geodesic components} Following Propositon~\ref{prop:gaussian_geodesic}, we lift the GPCA problem in \eqref{gaussian_gpca} to the total space $GL_d$ of Otto's fiber bundle. This has several advantages: the Bures-Wasserstein distance in the cost function of \eqref{gaussian_gpca} is replaced by the Frobenius norm $\|\cdot\|$, the geodesic is replaced by a horizontal line segment, and the projection times $t_i$ become explicit. The price to pay is an optimization over variables $(Q_i)_{i=1}^n$ in $SO_d$, needed to represent the covariance matrices $\Sigma_i$ by invertible matrices $\Sigma_i^{1/2}Q_i$ in their respective fibers.
\begin{proposition}\label{prop:equivalent_problem}
Let $\pi\colon GL_d\rightarrow S_d^{++}$, $A\mapsto AA^\top$ and $(A_1, X_1, (Q_i)_{i=1}^n)$ be a solution of 
\begin{equation}
\label{eq:1st_gaussian}
\begin{aligned}
&\inf\,\, F(A_1,X_1,(Q_i)_{i=1}^n)\colon=\sum_{i=1}^n\|A_1+p_{A_1,X_1}(t_i)X_1-\Sigma_i^{1/2}Q_i\|^2,\\
\text{subject to}\quad
&A_1\in GL_d,\,\, X_1\in\hor_{A_1},\,\, \|X_1\|^2=1,\,\,Q_1,\hdots, Q_n\in SO_d.
\end{aligned}
\end{equation}
Then there exist $\tmin,\tmax\in\R$ such that the geodesic $\Sigma:t\in[\tmin,\tmax]\mapsto \pi(A_1+tX_1)$ in $S_d^{++}$ minimizes \eqref{gaussian_gpca}.
\end{proposition}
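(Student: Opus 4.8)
The plan is to show that the infimum of $F$ in \eqref{eq:1st_gaussian} equals the infimum in \eqref{gaussian_gpca}, and that the correspondence $(A_1,X_1,(Q_i)_{i=1}^n)\mapsto\big(t\mapsto\pi(A_1+tX_1)\big)$ sends a minimizer of the former to a minimizer of the latter. First I would invoke Proposition~\ref{prop:gaussian_geodesic} to rewrite the infimum over geodesics of $S_d^{++}$ as an infimum over pairs $(A,X)$ with $A\in GL_d$, $X\in\hor_A$, the geodesic being $\Sigma(t)=\pi(A+tX)$ on the maximal interval $(\tmin,\tmax)\ni 0$ on which $A+tX$ is invertible. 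Rescaling $X$ only reparametrizes the curve in time, and the cost in \eqref{gaussian_gpca} depends only on the image of the geodesic, so the normalization $\|X\|^2=1$ costs nothing; and a constant geodesic is never strictly optimal, since a geodesic line through the Bures--Wasserstein barycenter $\bar\Sigma$ already has cost at most $\sum_i BW_2^2(\bar\Sigma,\Sigma_i)$ (take all projection times equal to $0$). I would also record two elementary facts used throughout: $\hor$ is $O_d$-equivariant, $\hor_{AS}=\hor_A\,S$, and $\pi(AS+tXS)=\pi(A+tX)$ for every $S\in O_d$.

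Next I would prove the easy inequality $\inf F\ge\inf\eqref{gaussian_gpca}$: for any feasible $(A,X,(Q_i)_i)$ and any $i$, the matrix $A+p_{A,X}(t_i)X$ lies in the fiber $\pi^{-1}\big(\pi(A+p_{A,X}(t_i)X)\big)$ and $\Sigma_i^{1/2}Q_i$ lies in $\pi^{-1}(\Sigma_i)=\Sigma_i^{1/2}O_d$ (since $Q_i\in SO_d\subset O_d$), so by the fiber-distance formula \eqref{eq:bures_otto} the $i$-th term of $F$ is at least $BW_2^2\big(\pi(A+p_{A,X}(t_i)X),\Sigma_i\big)\ge\inf_t BW_2^2(\Sigma(t),\Sigma_i)$; summing over $i$ shows $F(A,X,(Q_i)_i)$ dominates the GPCA cost of $\Sigma(\cdot)=\pi(A+\cdot\,X)$, and since this holds for every feasible point, $\inf F\ge\inf\eqref{gaussian_gpca}$.

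For the reverse inequality I would fix $\varepsilon>0$, take a geodesic $\Sigma(\cdot)$ that is $\varepsilon$-optimal for \eqref{gaussian_gpca}, lift it to some $(A,X)$, normalize $\|X\|=1$, and use the $O_d$-action on $(A,X)$ to arrange $\det A>0$; then $\det(A+tX)>0$ for all $t\in(\tmin,\tmax)$ by the intermediate value theorem, since $A+tX$ never leaves $GL_d$ on that connected interval. For each $i$, picking a near-optimal projection time $t_i^\ast\in(\tmin,\tmax)$, the orthogonal Procrustes problem $\min_{Q\in O_d}\|A+t_i^\ast X-\Sigma_i^{1/2}Q\|$ is attained at a genuine rotation $Q_i\in SO_d$, because $\det\big(\Sigma_i^{1/2}(A+t_i^\ast X)\big)>0$, with optimal value $BW_2(\Sigma(t_i^\ast),\Sigma_i)$ by \eqref{eq:bures_otto}. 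Since $BW_2(\Sigma(t),\Sigma_i)\le\|A+tX-\Sigma_i^{1/2}Q_i\|$ for every $t$, the time $t_i^\ast$ is, up to the same slack, the orthogonal projection time $p_{A,X}$ of $\Sigma_i^{1/2}Q_i$ onto the line $s\mapsto A+sX$; hence $(A,X,(Q_i)_i)$ is feasible for \eqref{eq:1st_gaussian} with $F$-value within $O(\varepsilon)$ of the GPCA cost of the $\varepsilon$-optimal $\Sigma(\cdot)$, and letting $\varepsilon\to 0$ gives $\inf F\le\inf\eqref{gaussian_gpca}$.

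Combining both bounds yields $\inf F=\inf\eqref{gaussian_gpca}$. Given then the minimizer $(A_1,X_1,(Q_i)_i)$ of \eqref{eq:1st_gaussian} in the statement, I would let $(\tmin,\tmax)$ be the maximal admissible interval of $t\mapsto\pi(A_1+tX_1)$, which exists by Proposition~\ref{prop:gaussian_geodesic} and is governed by the eigenvalues of $X_1A_1^{-1}$; applying the $\ge$ inequality to this point shows the geodesic $t\in[\tmin,\tmax]\mapsto\pi(A_1+tX_1)$ has GPCA cost at most $F(A_1,X_1,(Q_i)_i)=\inf F=\inf\eqref{gaussian_gpca}$, so it solves \eqref{gaussian_gpca}. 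I expect the main obstacle to be the $SO_d$-versus-$O_d$ bookkeeping in the Procrustes step: the Frobenius lower bound on $BW_2^2$ holds for arbitrary fiber representatives, but upgrading it to an equality realized by a rotation in $SO_d$ forces one to first normalize $\det A>0$ and then track the sign of $\det(A+tX)$ over the whole admissible interval, and one must likewise make sure the optimal projection times stay inside $(\tmin,\tmax)$ so that all the quantities above are well defined.
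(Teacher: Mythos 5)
Your proposal is correct and follows essentially the same route as the paper: lift via Proposition~\ref{prop:gaussian_geodesic}, use the fiber-distance formula \eqref{eq:bures_otto} to identify each residual with a Frobenius distance to the fiber $\Sigma_i^{1/2}O_d$, and observe that clipping the orthogonal projection time realizes the inner minimization over $t\in[\tmin,\tmax]$; the paper simply writes this as a single chain of equalities per horizontal line rather than your two-inequality, $\varepsilon$-optimal-geodesic argument. Your extra bookkeeping on the $O_d$ versus $SO_d$ issue (normalizing $\det A_1>0$ so the Procrustes optimum is a genuine rotation) is a point the paper's proof leaves implicit, and it is handled correctly here.
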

Here the $t_i$ are projection times given by $t_i=\langle \Sigma_i^{1/2}Q_i-A_1,X_1\rangle$, and $p_{A,X}$ is a projection operator that clips any $t\in\R$ onto a closed interval $[\tmin,\tmax]$ depending on $A$ and $X$, such that $A+p_{A,X}(t)X$ is invertible for any $t$ in this interval (see  Appendix \ref{app:geodesic_param}). Clipping the time parameter of the line segment is necessary to ensure it remains within $GL_d$ and projects onto a geodesic in $S_d^{++}$. %

The second component is a geodesic of $S_d^{++}$ that orthogonally intersects the first component. Lifting again the problem to $GL_d$, this boils down to searching for a horizontal line $t\mapsto A_2+tX_2$ where $A_2=(A_1+t^\ast X_1)R^\ast$ for a rotation matrix $R^\ast$, a time $t^*\in[\tmin,\tmax]$ and a horizontal vector $X_2\in\hor_{A_2}$ such that $\langle X_2,X_1R^*\rangle=0$. The equation for $A_2$ ensures that the $\pi$-projections of the first two horizontal lines intersect, while the condition on $X_2$ ensures that they intersect orthogonally (since $X_1R^\ast$ is horizontal at $A_2$). See Figure~\ref{fig:second_comp}. \color{black}Since different choices of $R^\ast$ yield the same projected component in $S_d^{++}$, we fix $R^\ast = I_d$ in our algorithm to remove this ambiguity. \color{black}

\begin{figure}[h!]
\begin{minipage}[t]{0.42\textwidth}
\vspace{0pt}
The second component is thus defined by $\Sigma_2(t)=\pi(A_2+tX_2)$, found by solving:
\begin{equation}
\label{eq:2nd_gaussian}
\begin{aligned}
&\qquad \inf\,\, F(A_2,X_2,(Q_i)_{i=1}^n)\\
&\color{black} \text{subject to}\quad A_2=A_1+t^\ast X_1,\color{black} \,\, \\
& t^\ast\in[\tmin,\tmax], \,\, X_2\in\hor_{A_2},\,\, \|X_2\|^2=1,\\
& \color{black} \langle X_2,X_1\rangle=0, \color{black}\,\, Q_1,\hdots,Q_n\in SO_d.
\end{aligned}
\end{equation}
Note that this step requires to find new rotation matrices $(Q_i)_{i=1}^n$. The first two components fix the intersection point $\pi(A_2)$ through which all other geodesic components will pass, see Figure~\ref{fig:second_comp}.
\end{minipage}
\hfill
\begin{minipage}[t]{0.55\textwidth}
    \vspace{0pt}
    \centering
    \includegraphics[scale=0.6]{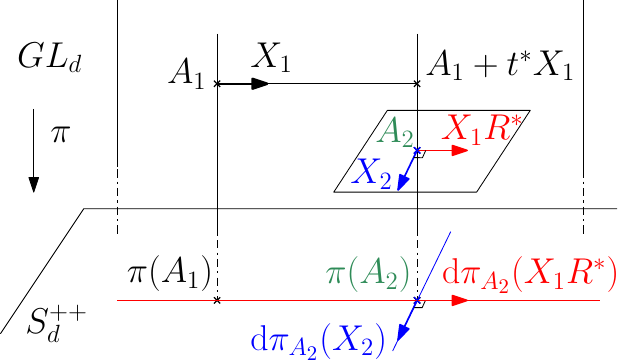}
     \caption{First (red) and second (blue) geodesic components of Gaussian GPCA, where $\mathrm{d}\pi_{A}$ denotes the differential of the projection $\pi\colon A\mapsto AA^{\top}$ at $A\in GL_d$.}
    \label{fig:second_comp}
\end{minipage}
\end{figure}

 For every higher order component, we search for a velocity vector $X_k$ that is horizontal at some point in the fiber over $\pi(A_2)$ and orthogonal to the lifts of the velocity vectors of the previous components. Details on the implementation of these components are given in Appendix \ref{app:implementation_gauss}.

\paragraph{Quantifying the difference between TPCA and GPCA}
In the following, we quantify the distortion induced by linearization in the case of covariances matrices with same eigenvalues.
\begin{proposition}\label{prop:distortion_gauss}
Let $\Sigma\in S_2^{++}$ with eigenvalues $a^2, b^2$ and $\Sigma'=P_\theta \Sigma P_{\theta}^\top$ where $P_\theta$ is the rotation matrix of angle $\theta \neq 0 \pmod{2\pi}$. Then, denoting $\bar \Sigma=\left((a+b)/2\right)^2I$, we have 
\begin{equation}\label{curvature_approx}
\frac{BW_2^2(\Sigma,\Sigma')}{BW_{2,\bar \Sigma}^2(\Sigma,\Sigma')}=1-\left(\frac{a-b}{a+b}\right)^2\cos^2\theta+O((a-b)^4),
\end{equation}
where $BW_{2,\bar \Sigma}$ is the linearized Bures-Wasserstein distance at $\bar\Sigma$ recalled in \eqref{eq:linearized_bures}.
\end{proposition}
For a given $\theta$, \eqref{curvature_approx} shows that the distorsion is most important for $\frac{|a-b|}{|a+b|}$ close to $1$, which corresponds to matrices that are close to the border of the cone, as illustrated in Section \ref{sec:expe_gaussian}.

 \paragraph{On the restriction to the space of Gaussian distributions}
Geodesic PCA can also be defined in the more general space of a.c. probability distributions, as presented in Section \ref{sec:GPCA_general}. A natural question that arises is whether performing GPCA in the whole space of probability distributions gives the same result as restricting to the space of Gaussian distributions, which is totally geodesic. %
The answer is yes in dimension one, as shown in Appendix~\ref{sec:appendix_GPCA_gauss}.
\begin{proposition}
    \label{prop:gaussian_pca}
    Let $\nu_i = \mathcal{N}(m_i,\sigma_i^2)$ for $ i=1,\ldots, n$, be $n$ univariate Gaussian distributions. The first principal geodesic component $t\in[0,1]\mapsto \mu(t)$ solving \eqref{eq:pca_intro} remains in the space of Gaussian distributions for all $t\in [0,1]$. 
\end{proposition}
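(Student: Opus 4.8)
The plan is to use the fact that in dimension one the Wasserstein space is flat, so that \eqref{eq:pca_intro} becomes an ordinary principal component analysis in a Hilbert space, and then to exploit that the Gaussian quantile functions span a two-dimensional subspace which is stable under orthogonal projection onto it. Concretely, I would first recall the isometric embedding $\nu\mapsto F_\nu^{-1}$ of the one-dimensional Wasserstein space onto the closed convex cone $\mathcal{C}\subset L^2([0,1])$ of non-decreasing square-integrable functions, under which $W_2^2(\mu,\nu)=\int_0^1|F_\mu^{-1}-F_\nu^{-1}|^2$; this is the isometry on convex sets alluded to around \cite{bigot2017geodesic}. Under this identification a Wasserstein geodesic is exactly a line segment contained in $\mathcal{C}$, and $\inf_t W_2^2(\mu(t),\nu_i)$ is the squared $L^2$-distance from the point $q_i:=F_{\nu_i}^{-1}$ to that segment. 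Hence \eqref{eq:pca_intro} becomes: find the segment $\gamma\subset\mathcal{C}$ minimizing $\sum_{i=1}^n \mathrm{dist}_{L^2}(q_i,\gamma)^2$, i.e. a first principal component of the cloud $\{q_1,\dots,q_n\}$ constrained to stay in $\mathcal{C}$.

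\emph{The Gaussian plane and its stability.} The quantile function of $\NN(m,\sigma^2)$ is $q_{m,\sigma}=m\,\mathbf{1}+\sigma\,\Phi^{-1}$, where $\mathbf{1}$ is the constant function and $\Phi^{-1}$ the standard normal quantile function; since $\NN(0,1)$ has zero mean and unit variance, $\{\mathbf{1},\Phi^{-1}\}$ is orthonormal in $L^2([0,1])$. Thus all the data $q_i$ lie in the plane $V:=\mathrm{span}\{\mathbf{1},\Phi^{-1}\}$, and $V\cap\mathcal{C}=\{m\,\mathbf{1}+\sigma\,\Phi^{-1}:m\in\R,\ \sigma\ge 0\}$ is precisely the set of quantile functions of (possibly degenerate) Gaussians. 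The key point is that the orthogonal projection $P_V$ onto $V$ maps $\mathcal{C}$ into itself: for non-decreasing $q$,
\[
P_V q=\Big(\int_0^1 q\Big)\mathbf{1}+\Big(\int_0^1 q\,\Phi^{-1}\Big)\Phi^{-1},\qquad \int_0^1 q\,\Phi^{-1}\ \ge\ \Big(\int_0^1 q\Big)\Big(\int_0^1\Phi^{-1}\Big)=0,
\]
the inequality being the monotone-correlation (Chebyshev sum) inequality applied to the two non-decreasing functions $q$ and $\Phi^{-1}$; a non-negative coefficient on $\Phi^{-1}$ makes $P_V q$ non-decreasing, hence $P_V q\in V\cap\mathcal{C}$.

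\emph{Conclusion.} Let $\gamma\subset\mathcal{C}$ solve \eqref{eq:pca_intro}. Since $P_V(\mathcal{C})\subseteq\mathcal{C}$, the image $P_V\gamma$ is again a segment inside $\mathcal{C}$, hence a Wasserstein geodesic, and it lies in $V\cap\mathcal{C}$, i.e. it is a path of (possibly degenerate) Gaussian distributions, generically non-degenerate. Moreover, for every $i$, using $q_i\in V$ and $\|P_V z\|\le\|z\|$,
\[
\mathrm{dist}(q_i,P_V\gamma)^2=\inf_{p\in\gamma}\|P_V(q_i-p)\|^2\le\inf_{p\in\gamma}\|q_i-p\|^2=\mathrm{dist}(q_i,\gamma)^2,
\]
so the cost of $P_V\gamma$ does not exceed that of $\gamma$; therefore $P_V\gamma$ is also optimal, and a routine strict-inequality argument whenever $\gamma\not\subseteq V$ (barring the degenerate case where all $\nu_i$ coincide, in which the first component is not well defined) forces the minimizer itself to lie in $V$, which is the claim.

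\emph{Main obstacle.} The conceptual heart is the stability $P_V(\mathcal{C})\subseteq\mathcal{C}$, i.e. that projecting a quantile function onto $V$ cannot destroy monotonicity — this is exactly what makes the Gaussian family behave like a linear subspace for one-dimensional GPCA, and its non-obvious input is the monotone-correlation inequality. The remaining delicate point is the bookkeeping of the reduction: that geodesics correspond to segments (not full lines) in $\mathcal{C}$, that the $[0,1]$-parametrization is immaterial, and that the infimum is attained (which follows since, after projecting onto $V$, one is minimizing over a classical — hence solvable — two-dimensional PCA); once these are in place the rest is routine.
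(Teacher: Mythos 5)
Your proposal is correct and follows essentially the same route as the paper: isometric embedding via quantile functions, identification of Gaussians with the half-plane spanned by the orthonormal pair $\{\mathbf 1,\Phi^{-1}\}$, orthogonal projection onto that plane which preserves monotonicity, and the cost-decrease/optimality argument forcing the optimal geodesic into the Gaussian family. The only cosmetic difference is that you establish the nonnegativity of the $\Phi^{-1}$-coefficient by Chebyshev's correlation inequality for comonotone functions, whereas the paper rewrites it as $\mathbb{E}(XT(X))$ for the increasing Monge map $T$ — the same positivity fact in different clothing.
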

Up to our knowledge, this remains an open question in higher dimension.

\section{Geodesic PCA on a.c. probability measures: \textsc{GPCAgen}}
\label{sec:GPCA_general}

We now tackle the task of performing GPCA on a set of a.c. probability measures $\nu_1,\hdots,\nu_n$ using the Otto-Wasserstein geometry. %
We propose a parameterization of the geodesic principal components based on Otto's formulation, leveraging neural networks. Additionally, we introduce a dedicated cost function to optimize the different geodesic components. 
\paragraph{Parameterizing geodesics} Following Proposition~\ref{prop:geodesic_otto} and \eqref{geod_otto}, any geodesic $t \mapsto \mu(t)$ in the Wasserstein space $(\Prob(\Omega), W_2)$ can be expressed as $\mu(t) = (\varphi + t \nabla f \circ \varphi)_\# \rho$, for $t$ in some interval $[t_{\min}, t_{\max}]$, $\varphi: \R^d \rightarrow \R^d$ a diffeomorphism, $f:\R^d \rightarrow \R$ a smooth function, and $\rho$ a fixed reference measure, taken to be the standard Gaussian distribution in this work. Using multilayer perceptrons (MLPs) to parametrize the functions $\varphi$ and $f$, denoted $\varphi_\theta$ and $f_\psi$, respectively, the curve
\[
t \mapsto \mu_{\theta, \psi}(t) = (\mathrm{id} + t \nabla f_\psi)_\# (\varphi_\theta{}_\# \rho)
\]
is a geodesic for $t \in [t_{\min}, t_{\max}]$, provided that $\mathrm{id} + t \nabla f_\psi \in \mathrm{Diff}(\Omega)$ for all $t$ in this interval. Equivalently, this condition holds if the Hessian matrix $I_d + t H_{f_\psi}(x)$ is positive definite for all $x \in \mathbb{R}^d$ and $t \in [t_{\min}, t_{\max}]$, where $H_{f_\psi}(x)$ denotes the Hessian of $f_\psi$ at $x$. In practice, we enforce this constraint by monitoring the eigenvalues of $I_d + t H_{f_\psi}(x)$ (see Appendix \ref{app:geodesic_param}) and either clipping $t$ or adjusting the interval $[t_{\min}, t_{\max}]$ to ensure that all eigenvalues remain positive. This representation enables to sample from the distributions along the geodesic. Specifically, given the learned vector field $\varphi_\theta$ and function $f_\psi$, one can sample from $\mu_{\theta, \psi}(t)$ by first drawing $x \sim \rho$ and then applying the transformations $\varphi_{\theta}$ and $\mathrm{id} + t \nabla f_\psi$ sequentially as $
\varphi_\theta(x) + t \nabla f_\psi(\varphi_\theta(x)) \sim \mu_{\theta, \psi}(t).
$

\paragraph{Learning the geodesic components}
The first principal component in GPCA minimizes the objective in \eqref{eq:pca_intro}. The scalar variables $t_i$ specify the projection time of each distribution $\nu_i$ onto the geodesic $t \mapsto \mu(t)$.
Leveraging the explicit form of Otto's geodesic, \eqref{eq:pca_intro} can be reformulated as:
\begin{equation}\label{eq:pca_base_optim_t}
    \inf_{\substack{f \in\mathcal{C}(\R^d), \varphi \in \text{Diff}(\Omega) \\ t_1 , \dots, t_n \in [t_{\min}, t_{\max}]}} \mathcal{L}(f, \varphi, t_1, \dots, t_n) \colon=  \sum_{i=1}^n\,\, W_2^2((\id+t_i \nabla f)_\#(\varphi_\#\rho),\nu_i).
\end{equation}
We jointly learn the parameters $t_i$ together with the neural networks $\varphi_\theta$ and $f_\psi$ to minimize the objective in \eqref{eq:pca_base_optim_t}. In practice, we use the Sinkhorn divergence $S_\varepsilon$ that has been proven to be a differentiable and computationally efficient approximation of the squared Wasserstein distance $W_2^2$, see \cite{frogner2015learning, genevay2018learning, chizat2020faster}, and represent the distributions $\rho$ and $\nu_i$ using batches of $m$ samples $x_k \sim \rho$ and $y_j \sim \nu_i$.
The optimization proceeds by updating the parameters based on a single distribution $\nu_i$ sampled at each iteration, as detailed in Algorithm~\ref{alg:ac_proba}. To compute $\tmin$ and $\tmax$ on line 5 of Algorithm~\ref{alg:ac_proba}, we approximate the extremal eigenvalues of $H_{f_\psi}$ by evaluating the largest and smallest eigenvalues over the finite set $\{H_{f_\psi}(x_k)\}_{k=1}^m$, and substitute these estimates into the theoretical bounds from Appendix~\ref{app:geodesic_param}.
\begin{algorithm}[H]
\begin{algorithmic}[1]
\STATE Initialize $\varphi_{\theta}$, $f_\psi$ and the $t_i$ for $1\leq i \leq n$
\WHILE{not converged}
\FOR{$i = 1$ to $n$}
        \STATE Draw $m$ i.i.d samples $y_j^{(i)} \sim \nu_i$ and draw $m$ i.i.d samples $x_k \sim \rho$  $\quad 1\leq j, k \leq m$
        \STATE Estimate $\tmin, \tmax$ with $\{H_{f_\psi}(x_k)\}_{k=1}^m$ and set $t_i' = \min(\max(t_i, \tmin), \tmax)$
        \STATE $z_k^{(i)} \leftarrow (\id + t_i' \nabla f_\psi)\circ(\varphi_\theta)(x_k)\;$ for $1\leq k \leq m$
       \STATE $\mathcal{L}_{\theta, \psi, t_i} \leftarrow S_\varepsilon \left(\frac{1}{m} \sum_{k=1}^m \delta_{z_k^{(i)}}, \frac{1}{m} \sum_{j=1}^m \delta_{y_j^{(i)}} \right)$
       \STATE Update $\varphi_{\theta}$, $f_\psi$ and the $t_i$ with $\nabla \mathcal{L}_{\theta, \psi, t_i}$
\ENDFOR
\ENDWHILE
\end{algorithmic}
\caption{Geodesic PCA algorithm for a.c. measures: \textsc{GPCAgen}}
\label{alg:ac_proba}
\end{algorithm}
The second principal component minimizes the objective in \eqref{eq:pca_intro} subject to the constraint that it intersects the first component orthogonally. Similar to the first component, we use two MLPs, $f_{\psi_2}$ and $\varphi_{\theta_2}$, to parameterize the geodesic $t \mapsto \mu_{\theta_2, \psi_2}(t)$, along with $n$ scalar variables $t_i^2$, to optimize the objective in \eqref{eq:pca_base_optim_t}. We also introduce two additional scalar variables, $t_{\text{inter}}^1$ and $t_{\text{inter}}^2$, which define the intersection times of the two geodesics, along with the regularization terms:
\[ \mathcal{I}(\xi_1, \xi_2, t_{\text{inter}}^1, t_{\text{inter}}^2) = \|\xi_1(t_{\text{inter}}^1)-\xi_2(t_{\text{inter}}^2)\|_2^2\quad \color{black} \text{and} \quad \mathcal{O}(g, h) = \frac{\langle g, h \rangle_{L^2(\rho)}^2} {\|g\|^2_{L^2(\rho)} \|h\|^2_{L^2(\rho)}},\]
where $\mathcal{I}$ enforces the two geodesics in $\Diff(\Omega)$, $\xi_1(t) = (\id + t \nabla f_\psi)\circ \varphi_\theta$ and $\xi_2(t) = (\id + t \nabla f_{\psi_2})\circ \varphi_{\theta_2}$, to intersect at the respective times $t_{\text{inter}}^1$ and $t_{\text{inter}}^2$\color{black} while $\mathcal{O}(g, h)$ ensures orthogonality between the corresponding horizontal vector fields $g = \nabla f_\psi(\varphi_\theta)$ and $h = \nabla f_{\psi_2}(\varphi_{\theta_2})$ in $L^2(\rho)$. The total objective used to optimize the second principal component incorporates these regularization terms and is given by: 
\[\mathcal{L}(f_{\psi_2}, \varphi_{\theta_2}, t_1^2, \dots, t_n^2) + \lambda_I \mathcal{I}(\xi_{\theta, \psi}, \xi_{\theta_2, \psi_2}, t_{\text{inter}}^1, t_{\text{inter}}^2) + \lambda_O \mathcal{O}(\nabla f_\psi(\varphi_\theta), \nabla f_{\psi_2}(\varphi_{\theta_2}))\]
with $\xi_{\theta, \psi}(t)=(\id + t \nabla f_\psi)\circ \varphi_\theta$ and  $\xi_{\theta_2, \psi_2}(t)=(\id + t \nabla f_{\psi_2})\circ \varphi_{\theta_2}$ and where \color{black} $\lambda_I$ and $\lambda_O$ are the regularization parameters controlling the trade-off between the intersection and orthogonality regularization terms, respectively. Note that in virtue of Proposition~\ref{prop:geodesic_otto}, the $L^2(\rho)$ inner product in the regularization term $\mathcal{O}$ truly enforces orthogonality of the geodesic components with respect to the Riemannian metric associated to the Wasserstein distance. Note also that $\mathcal{I}$ enforces the geodesics to intersect in $\Diff(\Omega)$ which means that, at the intersection time, the geodesics $\mu_{\theta_1, \psi_1}$ and $\mu_{\theta_2, \psi_2}$ in $\Prob(\Omega)$ intersect and share the same representative. An alternative implementation would be to enforce the intersection of the geodesics $\mu_{\theta_1, \psi_1}$ and $\mu_{\theta_2, \psi_2}$ in $\Prob(\Omega)$ and to impose the orthogonality of $\nabla f_\psi(\varphi_\theta)\circ R^*$ and $\nabla f_{\psi_2}(\varphi_{\theta_2})$ in $L^2(\rho)$, where $R^* = \xi_{\theta_2, \psi_2}(t_{\text{inter}}^2) \circ \xi_{\theta_1, \psi_1}(t_{\text{inter}}^1)^{-1}$. This approach is the one used in the Gaussian case. However, computing $R^*$ is computationally expensive, and we therefore preferred to impose $\xi_{\theta_1, \psi_1}(t_{\text{inter}}^1) = \xi_{\theta_2, \psi_2}(t_{\text{inter}}^2)$ which directly yields $R^* = \id$. \color{black}

The training algorithm used to optimize the second principal component follows the same structure as Algorithm~\ref{alg:ac_proba}, except for the seventh line, where the regularization terms, estimated using the minibatch $x_k \sim \rho$, are added to the loss function. Higher-order components can be computed similarly. %

\section{Experiments}
\label{sec:expe}
\subsection{Experiments on centered Gaussian distributions}\label{sec:expe_gaussian}

In this section, we consider toy examples in $S_2^{++}$ and compare GPCA to its widely used linearized approximation, TPCA (see Appendix \ref{app:TPCA}). We use two coordinate systems for matrices in $S_2^{++}$: the first comes from the spectral decomposition, %
and the second maps any SPD matrix to a point in the interior of the cone $\mathcal C=\{(x,y,z)\in\R^3,\,\,z>0, \,\,z^2>x^2+y^2\}$: %
\begin{equation}\label{eq:cone}
\Sigma=P_\theta\left(\begin{matrix}a^2 & 0\\ 0 & b^2\end{matrix}\right)P_\theta^\top=\left(\begin{matrix} z+y & x\\ x & z-y\end{matrix}\right), \quad (a,b,\theta)\in \R^*_+\times\R^*_+\times \R, \quad (x,y,z)\in\mathcal C,
\end{equation}
where $P_\theta$ is the rotation matrix of angle $\theta$. Generically, GPCA and TPCA yield very similar results: for sets of $n=50$ covariance matrices randomly generated using a uniform distribution on the parameters $(a,b,\theta)$, GPCA reduces the objective in \eqref{gaussian_gpca} of less than $1\%$ w.r.t. TPCA, on average for $100$ trials. This suggests that TPCA is generally a very good approximation of GPCA. Two extreme cases are described below: (i) GPCA and TPCA are equivalent and (ii) GPCA and TPCA drastically differ.

 \paragraph{Matrices with same orientation}

If we consider a set of covariance matrices that live in the subspace $\theta=\text{constant}$ in notations of \eqref{eq:cone}, then both GPCA and TPCA yield exactly the same results, 

\begin{figure}[h!]
\begin{minipage}[t]{0.35\textwidth}
    \vspace{0pt}
namely that of linear PCA in the $(a,b)$-coordinates. This is because any such subspace has zero curvature for the Wasserstein metric, and geodesics are straight lines in the $(a,b)$-coordinates (Appendix~\ref{app:proof_gauss}).
Figure~\ref{fig:grid_example} shows the geodesic components obtained for a set of matrices in the subspace $\theta=0$ that form a regular rectangular grid in the $(a,b)$ coordinates, i.e. $\Sigma_{ij}=\text{diag}(a_i^2,b_j^2)$ 
where the $a_i$'s and $b_j$'s are equally spaced. They are indeed straight lines that capture the variations in $a$ and $b$ respectively. \end{minipage}
\hfill
\begin{minipage}[t]{0.6\textwidth}
    \vspace{0pt}
    \centering
    \includegraphics[width=0.4\textwidth]{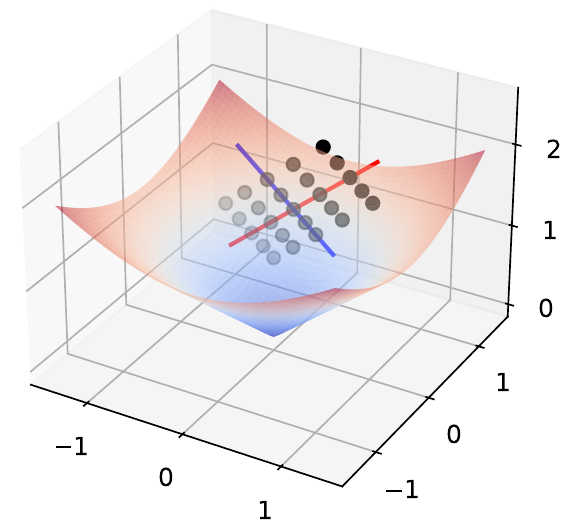}
    {\includegraphics[width=0.45\textwidth]{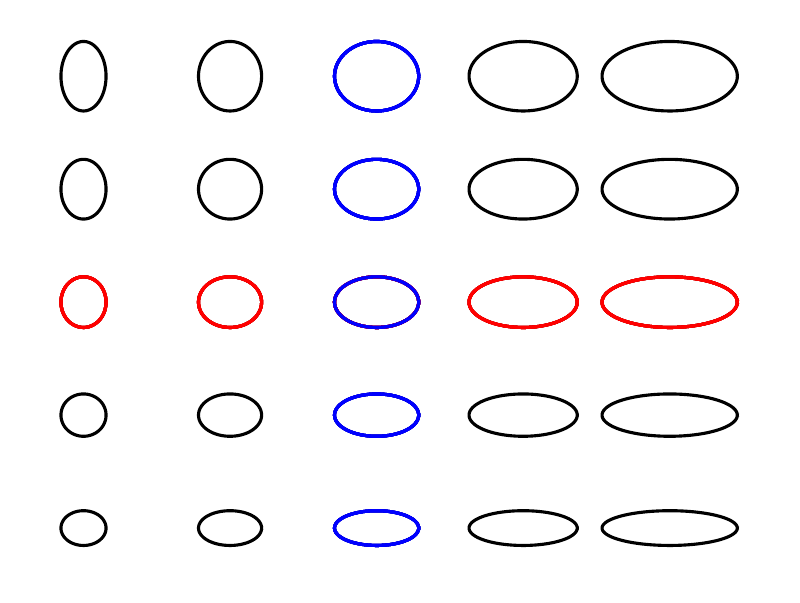}}
    \caption{GPCA on a set of diagonal covariance matrices $\Sigma_{ij}$ with varying eigenvalues $1\leq a_i^2\leq 3$, $1\leq b_j^2\leq 2$. The matrices form a planar grid inside the cone $\mathcal C$ of SPD matrices in \eqref{eq:cone} \textbf{(left)}, and correspond to ellipses of varying width and height \textbf{(right)}. The first component (red) captures the variation in $a$, while the second component (blue) captures the variation in $b$.}
    \label{fig:grid_example}
\end{minipage}
\end{figure}
\vspace{-0.4cm}
\paragraph{Matrices with same eigenvalues}

Now we consider covariance matrices that all have the same eigenvalues but different orientations. Specifically, we choose $\Sigma_i=P_{\theta_i}\text{diag}(a^2,b^2)P_{\theta_i}^\top$, for positive reals $a>b$, $\theta_i=i\pi/n$ for $i=0,\hdots,n-1$ and an even number $n$. In the $(x,y,z)$ coordinates (\eqref{eq:cone}), the covariance matrices are displayed on a circle of equation $z=\text{cst}$ (constant trace) and $x^2+y^2=\text{cst}$ (constant determinant), as shown in Figure~\ref{fig:circle_example} (in practice, we choose a slightly open circle to break the symmetry). Then the Bures-Wasserstein barycenter %
of the covariance matrices $\Sigma_1,\hdots,\Sigma_n$ is given by $\bar \Sigma=(a+b)^2/4 \,I$ (see Proposition~\ref{prop:bures_wass_barycenter} in Appendix \ref{app:proof_gauss}). When performing TPCA on $\Sigma_1,\hdots,\Sigma_n$ at the barycenter $\bar \Sigma$, the radial distances between $\bar \Sigma$ and $\Sigma_i$ are preserved, but not the pairwise distances between the $\Sigma_i$'s. Proposition \ref{prop:distortion_gauss} evaluates the level of this distorsion. Note that since  $(a-b)^2/(a+b)^2=(x^2+y^2)/z^2$, the distorsion is most important when covariance matrices are close to the border of the cone, 
see Figure~\ref{fig:circle_example} (\textbf{left}). Indeed, in that case, the results of GPCA can be very different from those of TPCA and the first component may not even go through the Wasserstein barycenter $\bar{\Sigma}$, see Figure \ref{fig:circle_example} (\textbf{middle}) and Figure~\ref{fig:circle_example_app} in Appendix~\ref{app:figures}. In that case GPCA may be seen as worse-behaved as TPCA, as some of the Gaussian distributions will project onto the first geodesic component boundaries, yielding a poor separation. Figure \ref{fig:circle_example} (\textbf{right}) shows the percentage of improvement of the cost in \eqref{gaussian_gpca} (in terms of minimization) of GPCA with respect to TPCA, in the setting previously described for different values of the ratio $|a-b|/|a+b|$. %
on average for 10 runs per value of the ratio. The blue strip indicates standard deviation.
\begin{figure}[t]
\centering
\includegraphics[width=0.3\linewidth]{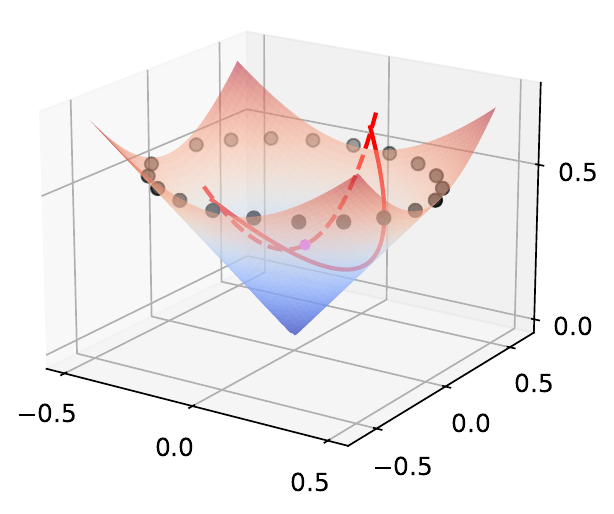}
\hspace*{1em}
\raisebox{1em}{\includegraphics[width=0.2\linewidth]{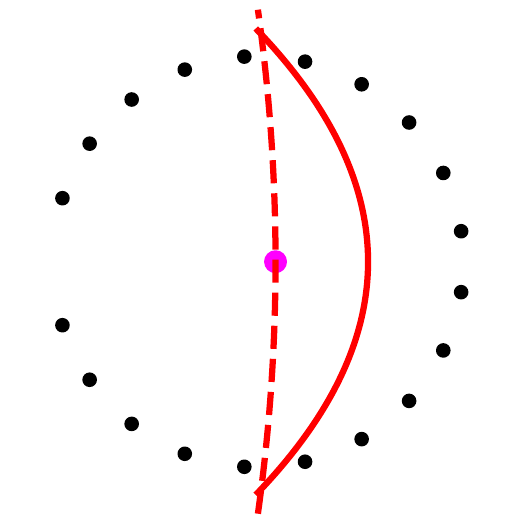}}
\hspace*{1em}
\includegraphics[width=0.3\linewidth]{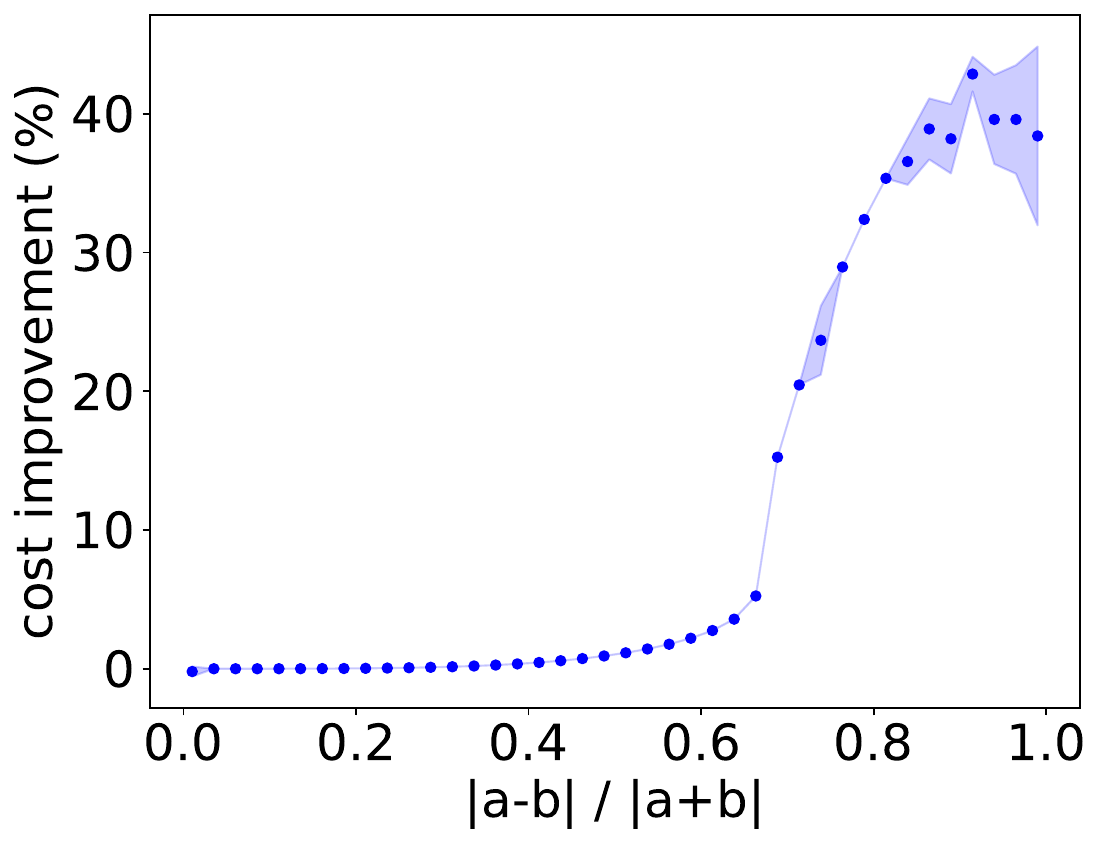}
\caption{Comparison between tangent and geodesic PCA on a set of $n=20$ covariance matrices with same eigenvalues $a^2, b^2$ and different orientations $\theta$. \textbf{(left)} They are equally spaced on an (open) circle in a horizontal plane inside the cone of SPD matrices. The first component of TPCA (dashed red line) goes through the Fréchet mean $\bar{\Sigma}$ (magenta dot), a multiple of the identity, while the component of GPCA (solid red line) does not. Here $|a-b|/|a+b|\approx 0.8$. \textbf{(middle)} Representation of the left figure in the $(x,y)$ coordinates. \textbf{(right)} Evolution of the first component cost improvement (in the sense of minimization) of GPCA with respect to TPCA, as a function of the ratio $|a-b|/|a+b|$.}
\label{fig:circle_example}
\end{figure}

\paragraph{Weather dataset} In this paragraph, we use the Weather CORGIS Dataset to illustrate GPCA based on empirical covariance matrices. The dataset provides weekly measures of precipitation and wind speed recorded from March to January 2016 across the 50 U.S. states and the territory of Puerto Rico. From these measures, we construct two histograms for each state: one for precipitation and one for wind speed. We then compute the 50 empirical covariances from these histograms. We show in Figure \ref{fig:weather_gpca} the projection of each state onto the two first GPCA components computed from the empirical covariance matrices. We can clearly identify clusters of different weather behavior among the states.

\subsection{Experiments on absolutely continuous distributions}\label{sec:expe_general}
\begin{figure}[b]
    \centering
\begin{minipage}[b]{0.485\linewidth}
    \centering
    \includegraphics[width=1.0\linewidth]{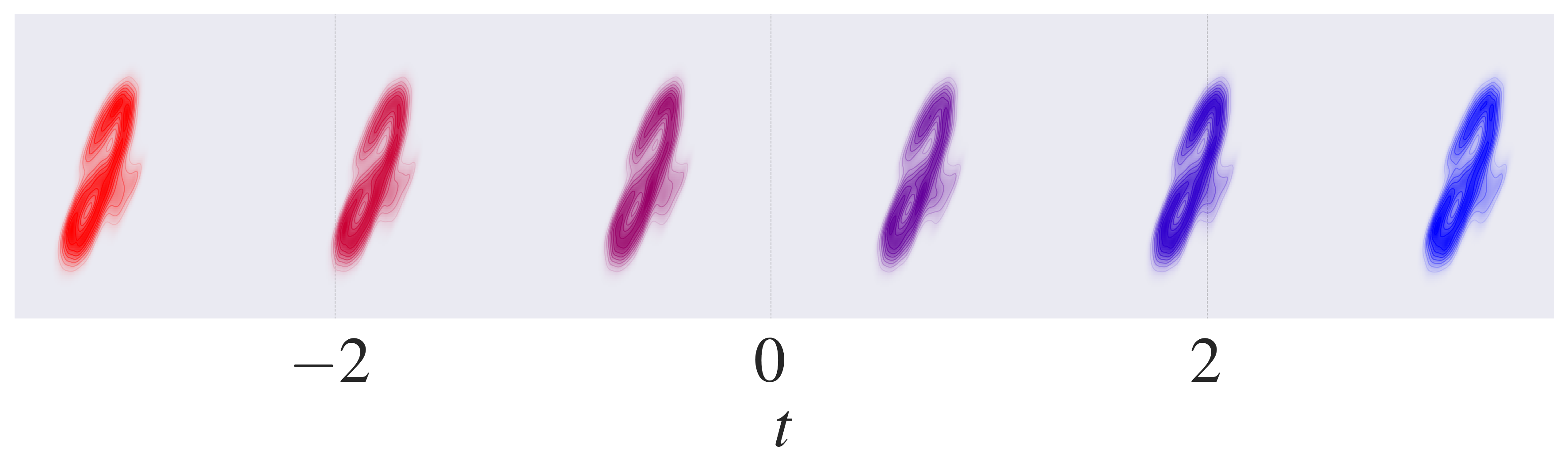}
\end{minipage}%
\begin{minipage}[b]{0.485\linewidth}
    \centering
    \includegraphics[width=1.0\linewidth]{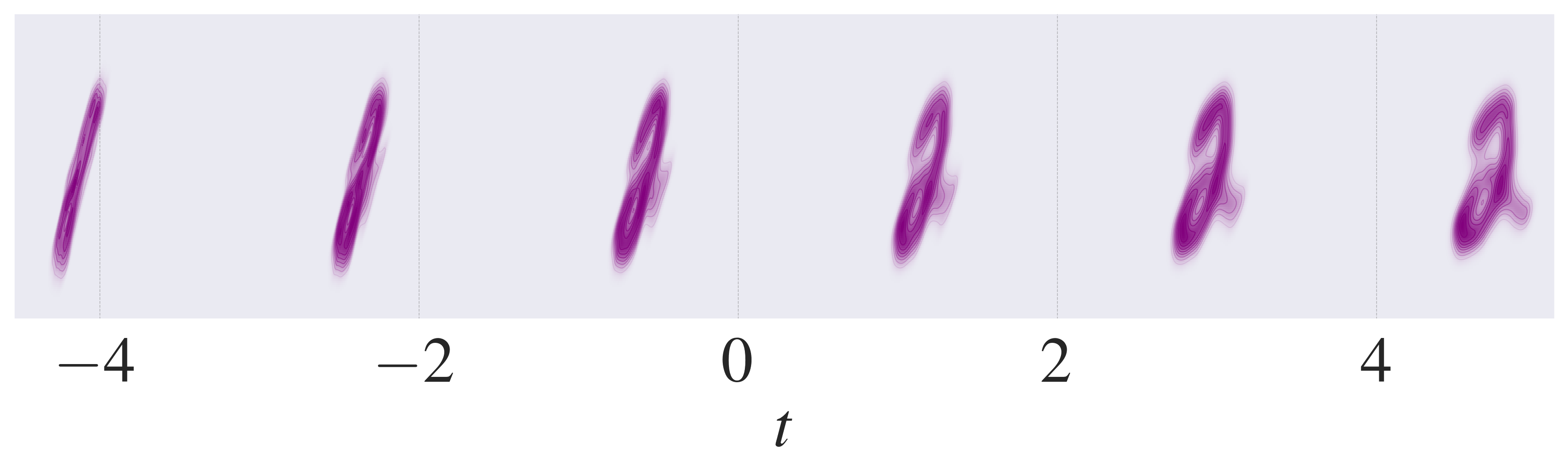}
\end{minipage}
\caption{Densities of probability distributions uniformly sampled along the first and second principal geodesics components.
\textsc{GPCAgen} successfully recovers the two orthogonally intersecting geodesics constructed from MNIST data. The first component (\textbf{left}) captures variation in color space, while the second component (\textbf{right}) recovers the interpolation from the digit "1" to the digit "2".}
    \label{fig:comp_mnist_geod_example}
\end{figure}\vspace{-0.2pt}
We conduct a preliminary experiment on a synthetic dataset with known geodesics to verify that our algorithm, \textsc{GPCAgen} (Section~\ref{sec:GPCA_general}), accurately recovers the first two principal components. We then apply \textsc{GPCAgen} to 3D point clouds from the ModelNet40 dataset (\cite{wu20153dshapenetsdeeprepresentation}) and to color distributions of images from the Landscape Pictures dataset (\cite{landscapepictureskaggle}). An additional experiment in Appendix\ref{app:gpca_outliers} demonstrates how GPCA can be used for outlier detection. For these experiments, $f_\psi$ and $\varphi_\theta$ are MLPs with four hidden layers of size $128$ and an output layer of size $1$ and $d$ respectively. We found that setting the regularization coefficients $\lambda_I$ and $\lambda_O$ to $1.0$ ensures the algorithm works as expected in all experiments. A discussion of the regularization coefficients, along with details on the architecture and hyperparameters, is provided in Appendix~\ref{appendix:hyperparams}.

\paragraph{MNIST geodesics.}
We represent each image from the MNIST dataset (\cite{lecun2010mnist}) as a probability measure over $\mathbb{R}^4$. The grayscale pixel intensities define a normalized density over spatial coordinates $(x, y) \in \mathbb{R}^2$, and we %
further assign each pixel two additional values corresponding to red and blue color channels. We construct two orthogonal geodesics: the first one interpolates between a digit "1" and a digit "2", both assigned a fixed purple by setting the color channels to $0.5$. The second one is defined from the midpoint of the first, by linearly interpolating the color from red to blue. As shown in Figures~\ref{fig:comp_mnist_geod_example} and~\ref{fig:mnist_geod_components}, \textsc{GPCAgen} successfully recovers the two geodesics intersecting orthogonally. A second experiment on the MNIST dataset is displayed in Appendix \ref{app:figures}.
\paragraph{3D point cloud.} We use the ModelNet40 3D point cloud dataset (\cite{wu20153dshapenetsdeeprepresentation}) and apply \textsc{GPCA} to a subset of 100 randomly selected lamp point clouds. Figure~\ref{fig:comp_pca} (\textbf{middle row}) and Figure~\ref{fig:pca_image_lamp_example} (\textbf{left}) demonstrate that the first principal component captures the distinction between hanging lamps (chandeliers) and standing lamps (floor lamps), while the second component reflects variations in the thickness of the lamp structure. We conduct a similar experiment on 100 point clouds from ModelNet40 representing different chairs. As shown in Figure~\ref{fig:comp_pca} (\textbf{top row}) and Figure~\ref{fig:pca_chair_example}, the first principal component distinguishes between chairs and armchairs, while the second component captures the height of the seat.

\begin{figure}[t]
    \centering
\begin{minipage}[b]{0.485\linewidth}
    \centering
    \includegraphics[width=1.0\linewidth, trim=0 0 0 0.0cm, clip]{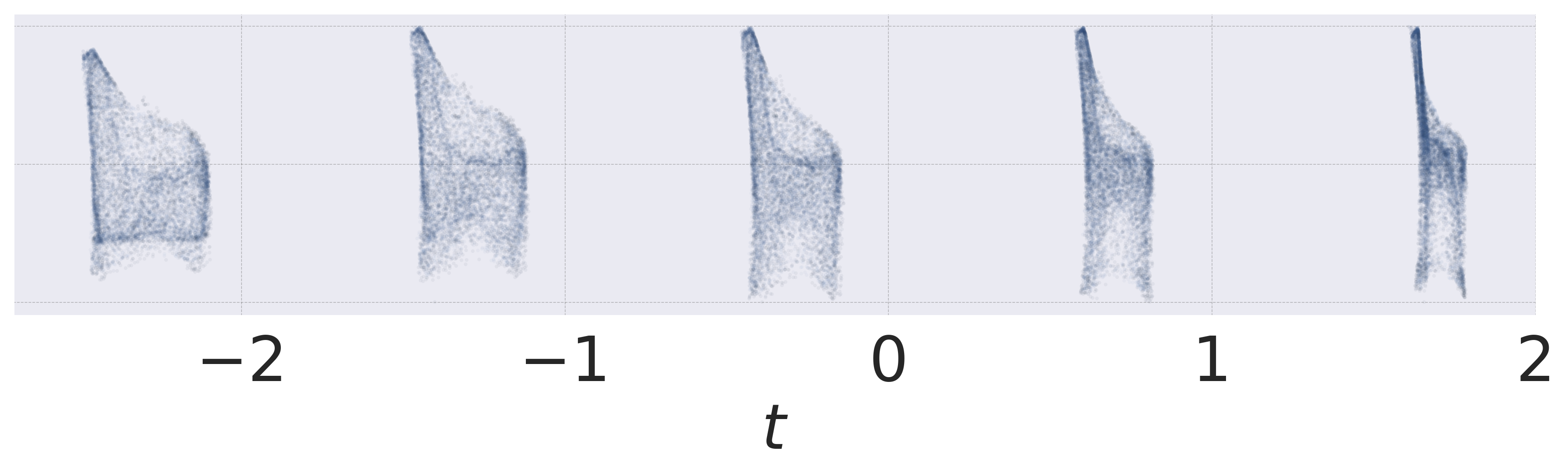}
    \includegraphics[width=1.0\linewidth]{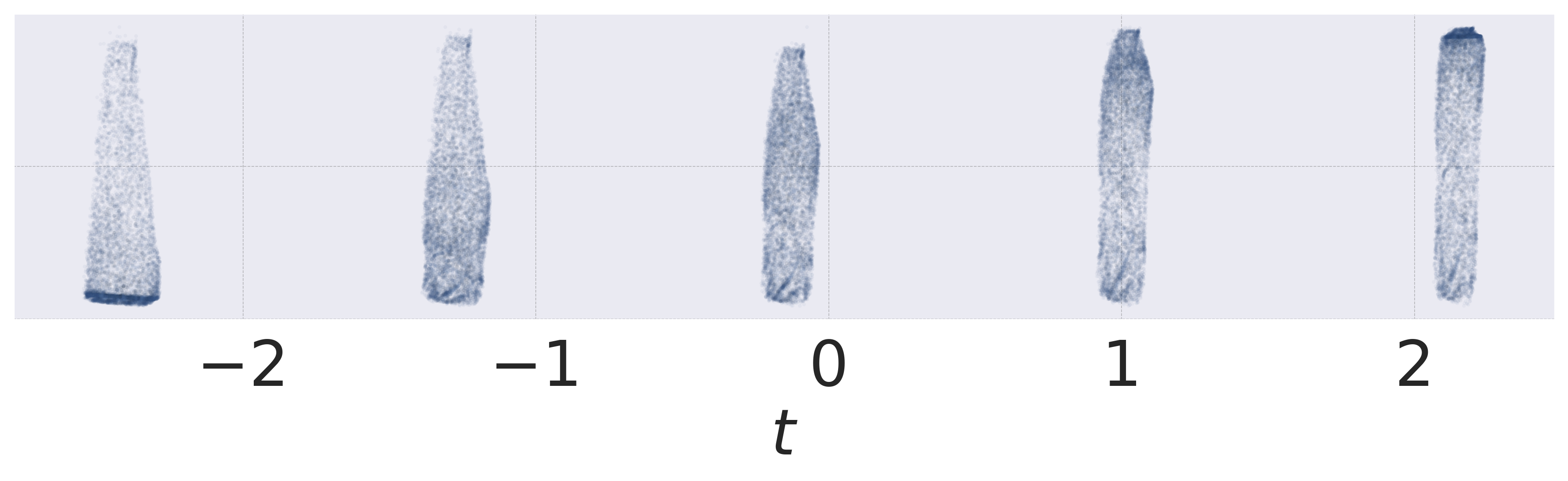}
    \includegraphics[width=1.0\linewidth]{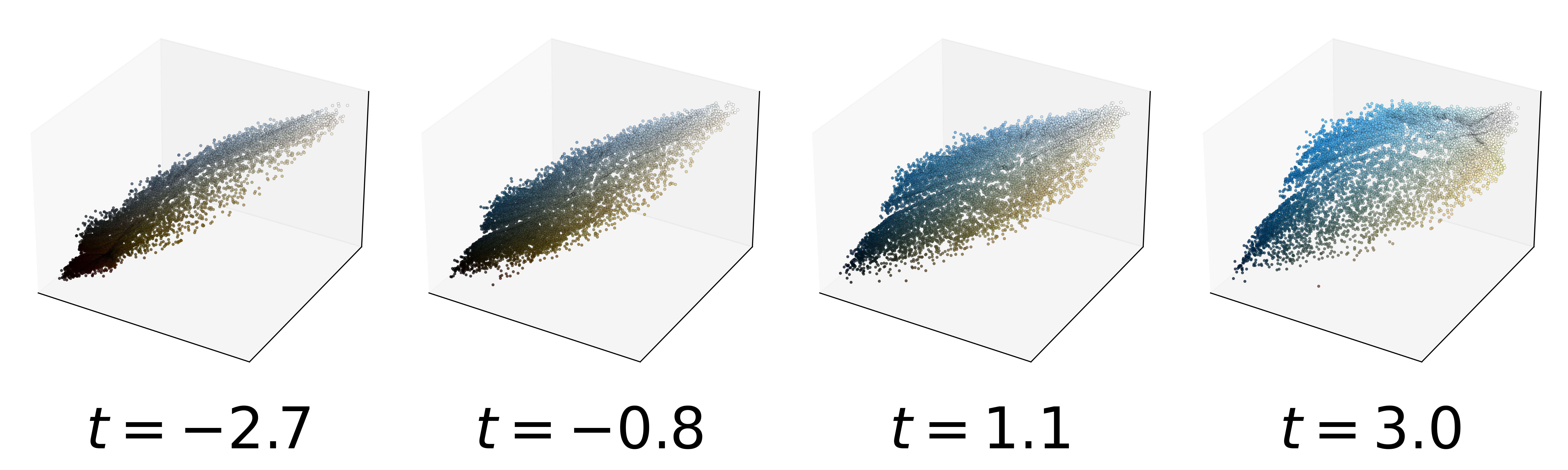}
\end{minipage}%
\hfill
\vline
\hfill
\begin{minipage}[b]{0.485\linewidth}
    \centering
    \includegraphics[width=1.0\linewidth, trim=0 0 0 0.0cm, clip]{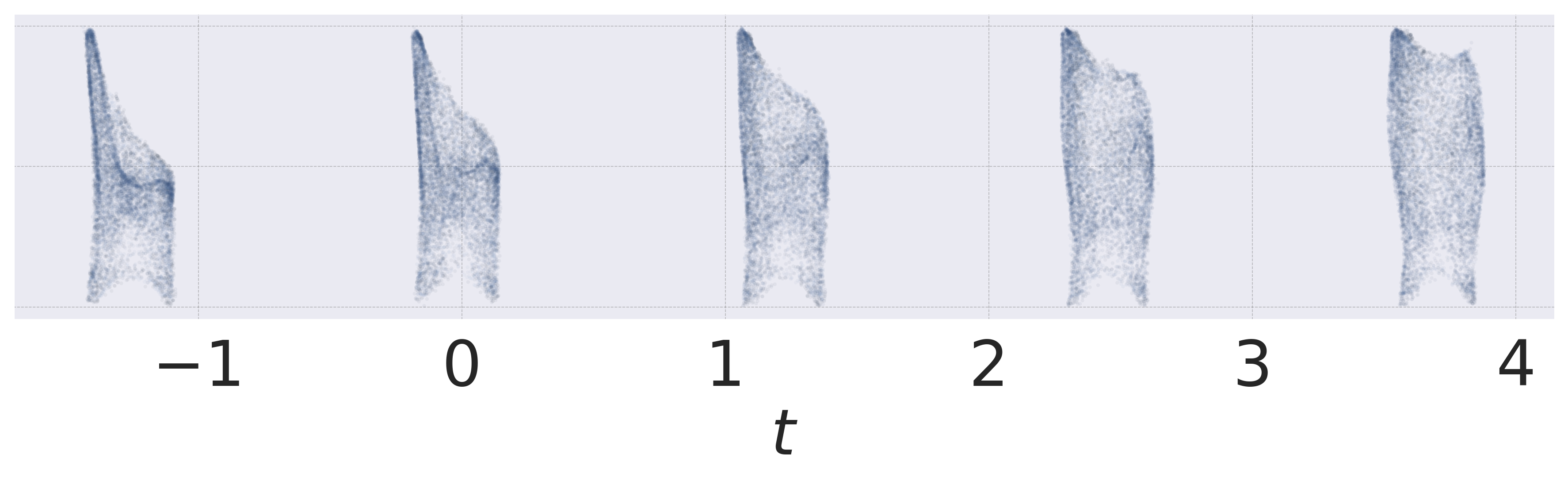}
    \includegraphics[width=1.0\linewidth]{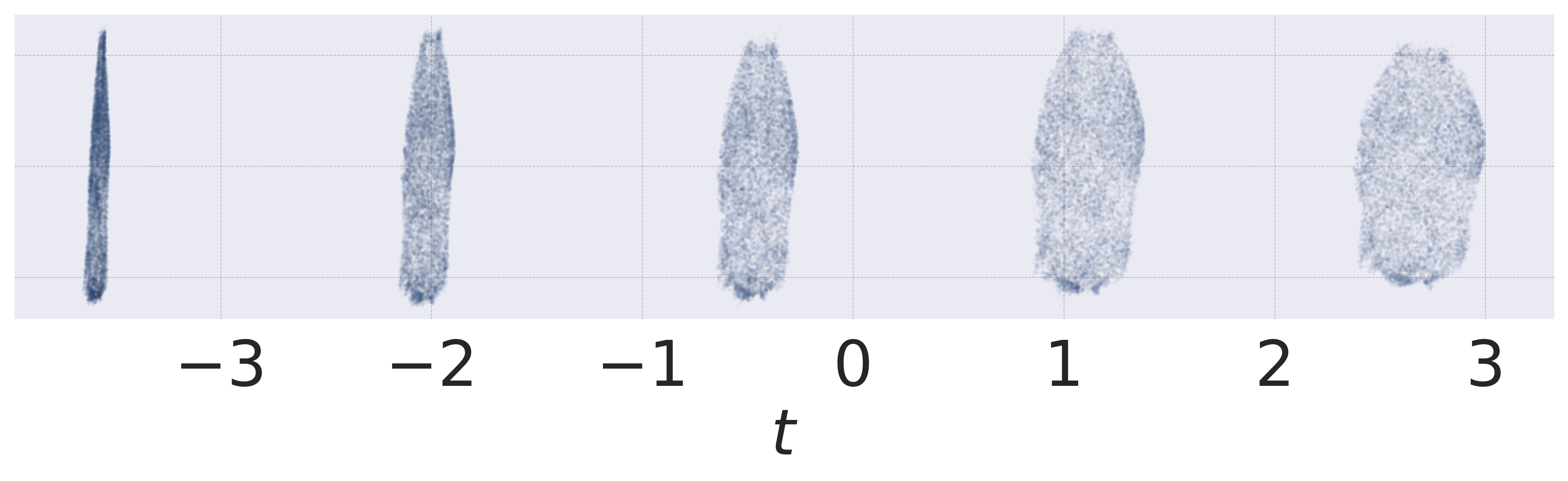}
    \includegraphics[width=1.0\linewidth]{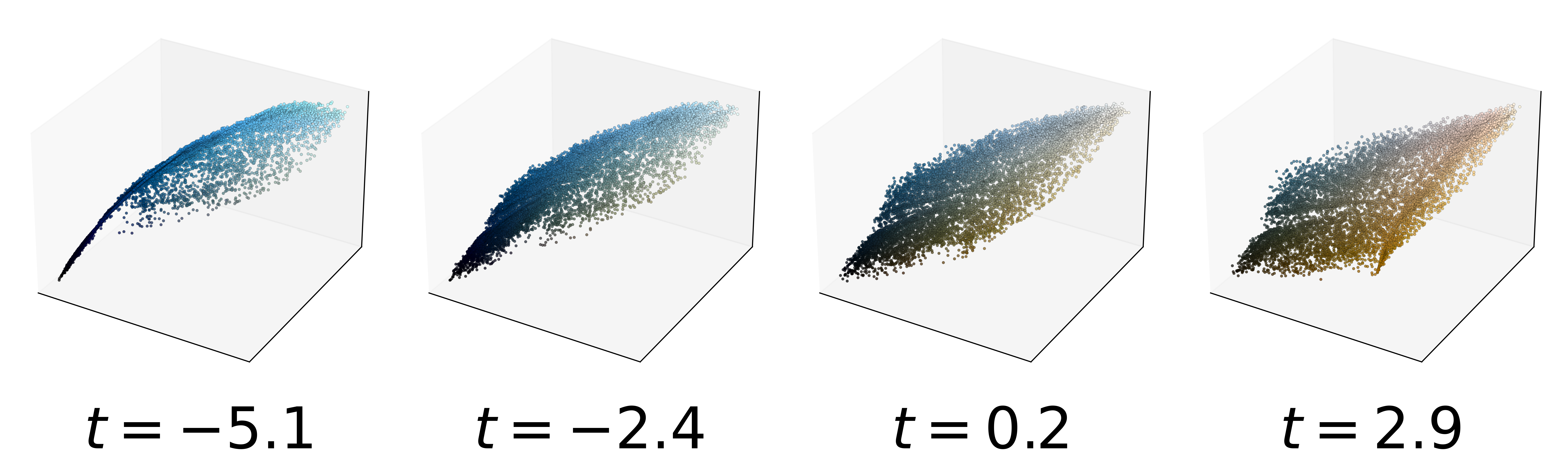}
\end{minipage}
\caption{Empirical distributions uniformly sampled along the geodesics corresponding to the first (\textbf{left}) and second (\textbf{right}) principal components, as computed by \textsc{GPCAgen} in the 3D point cloud of chairs experiment (\textbf{top row}), the 3D point cloud of lamps experiment (\textbf{middle row}) and the Landscape images experiment (\textbf{bottom row}).}
    \label{fig:comp_pca}
\end{figure}\vspace{-0.2pt}
\paragraph{Landscape images.} We took 39 images from the Landscape Pictures dataset (\cite{landscapepictureskaggle}) and use \textsc{GPCAgen} on the corresponding point clouds, where each point cloud represents color distribution in the image. Figure~\ref{fig:comp_pca} (\textbf{bottom row}) and Figure~\ref{fig:pca_image_lamp_example} (\textbf{right}) show that the first component captures variations in overall brightness, ranging from dark to bright images, while the second component separates mostly blue images from mostly green ones.

\paragraph{Baselines} An obvious baseline for \textsc{GPCAgen} is TPCA.
Unlike \textsc{GPCAgen}, which learns continuous geodesics from empirical distributions of absolutely continuous measures, TPCA acts on discrete measures. A direct numerical comparison between the two methods is therefore not meaningful. However, we include in Appendix~\ref{app:tpca_figures} the two principal components returned by TPCA on the 3D point cloud experiments. We observe in Figure~\ref{fig:comp_tpca} that the discrete nature of TPCA produces artifacts, including holes in certain regions and excessive mass concentration in others.

Another natural baseline consists in embedding point clouds into a latent space of dimension $d$ then performing standard PCA on the resulting latent vectors. This approach, in addition to being computationally expensive, does not produce meaningful modes of variation, as shown in Section \ref{app:pointnet_pca} of the appendices.

\color{black}
\paragraph{Code availability.}
The code for the Gaussian experiments is available at
\url{https://github.com/alebrigant/bures-wasserstein-gpca},
and the code for the a.c. probability measures is available at
\url{https://github.com/nvesseron/wasserstein-geodesic-pca}. \color{black}

\begin{figure}[h]
    \centering
    \begin{minipage}[t]{0.46\linewidth}
        \centering
        \includegraphics[width=1.0\linewidth]{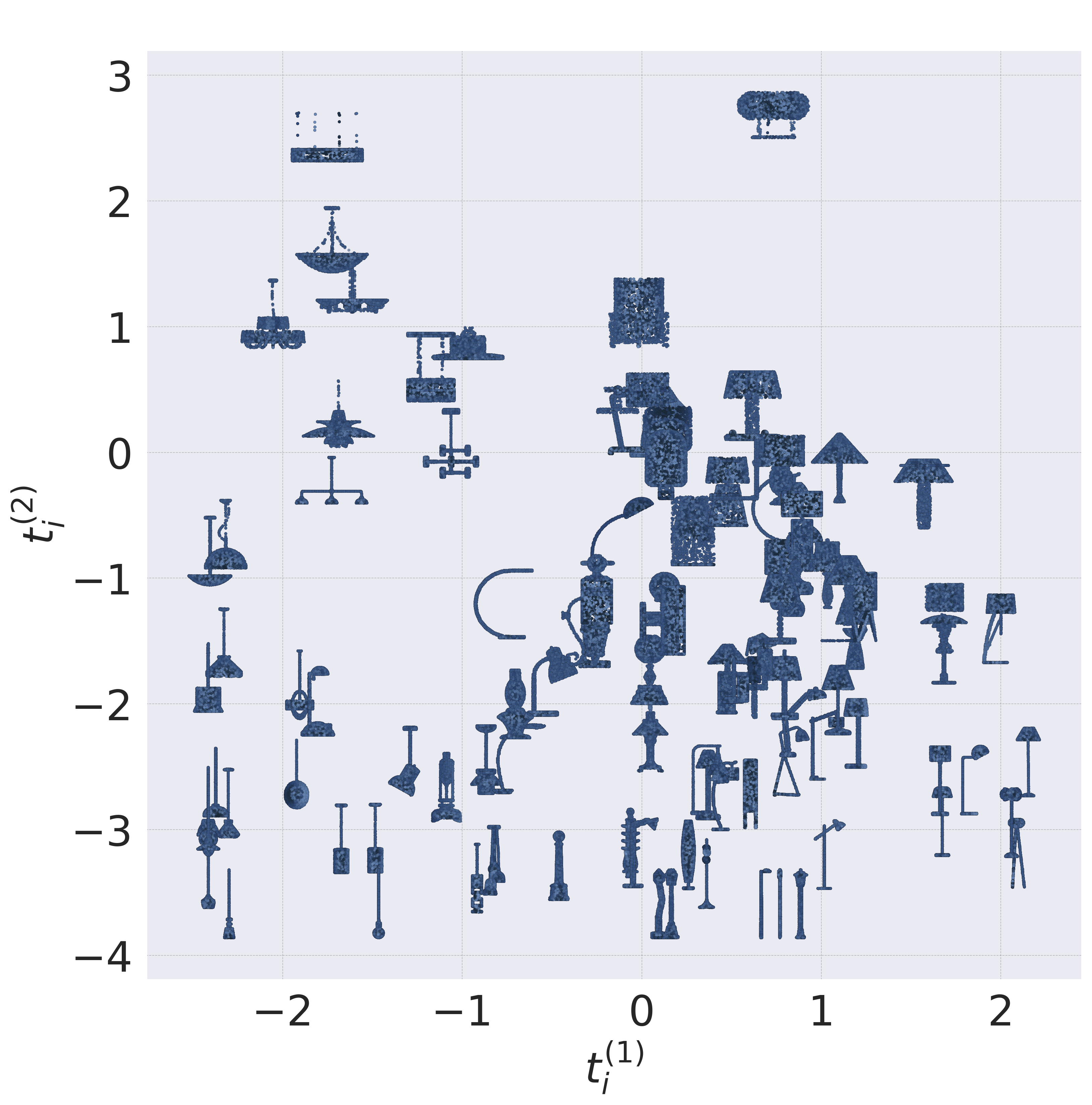}
    \end{minipage}%
    \begin{minipage}[t]{0.46\linewidth}
        \centering
        \includegraphics[width=1.0\linewidth, trim=0 0cm 0 0, clip]{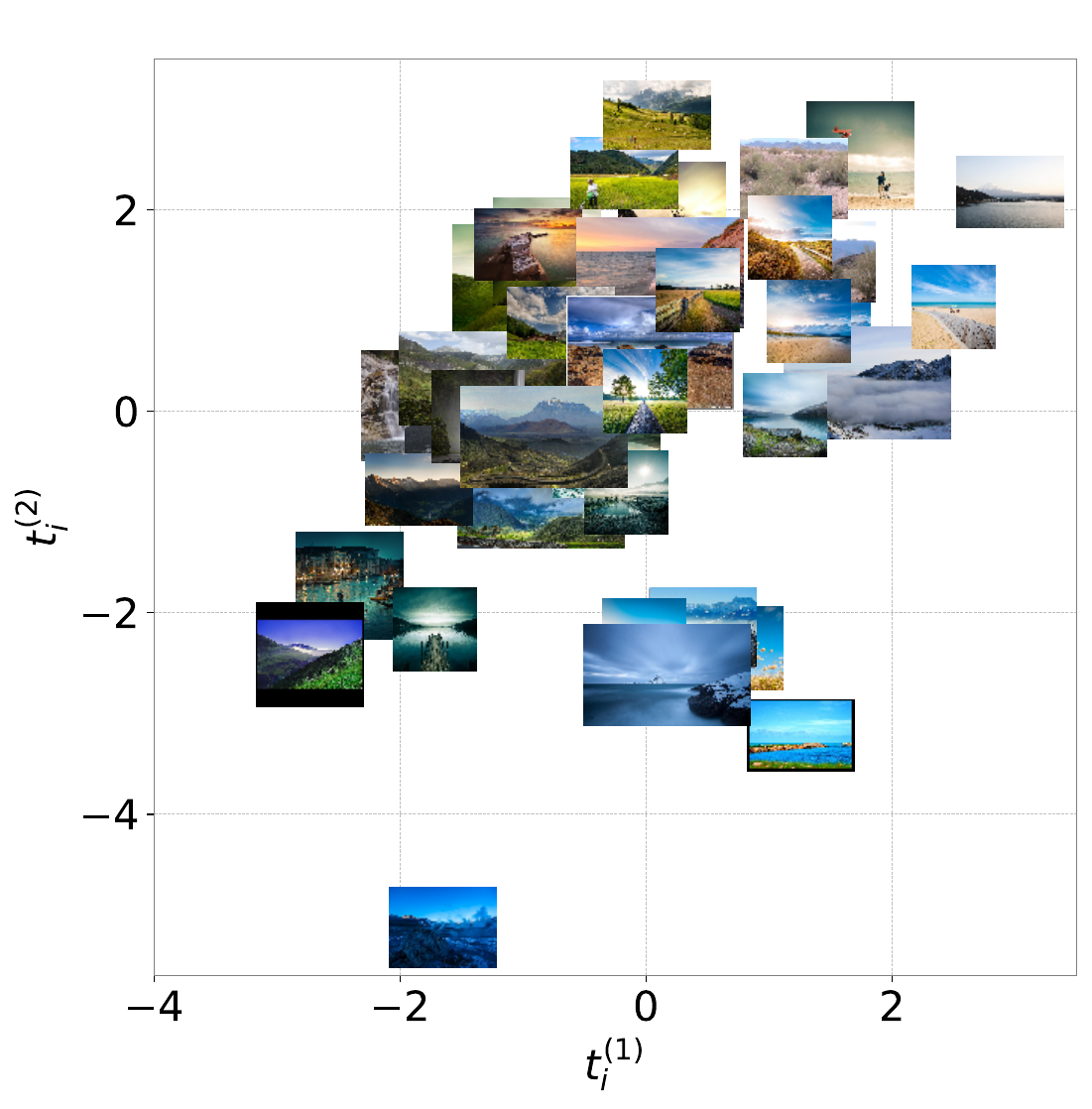}
    \end{minipage}
    \caption{Each lamp point cloud (\textbf{left}) and each image (\textbf{right}) is embedded in the plane according to its projection times onto the first and second principal components computed by \textsc{GPCAgen}.}
        \label{fig:pca_image_lamp_example}
\end{figure}
\section{Discussion}\label{sec:discussion}

GPCA is a statistical approach for learning the main modes of variations of a set of probability distributions. The first components capture meaningful structure for data lying on a curved space, which then enables downstream tasks such as classification, clustering, and outlier detection. In this work, we have proposed two methods for computing exact GPCA : one tailored for Gaussian distributions and the other for the more general case of a.c. probability distributions. In the Gaussian case, our experiments suggest that GPCA and TPCA generically yield very similar results, except for distributions with covariance matrices that are close to the boundary of the SPD cone, for which GPCA can yield undesirable effects as suggested by the pathological example of Figure \ref{fig:circle_example}. In the general case of a.c. probability measures, a key advantage of our approach is that it operates directly on continuous distributions, avoiding the need for empirical approximations of the $\nu_i$, which would require equal sample sizes and can introduce discretization artifacts in the recovered components. Additionally, our method enables sampling from any point along the geodesic components—something not possible with discrete approximations commonly used in TPCA. Otto's parametrization also allowed us to avoid relying on input convex neural networks (ICNNs) by not requiring convex functions, with the trade-off being the need to estimate the eigenvalues of the Hessian of $f$. This perspective opens new directions for parametrizing convex functions without imposing hard architectural constraints.

\section*{Acknowledgements}
This work benefited from the support of the Agence nationale de la recherche, through the PEPR PDE-AI project (ANR-23-PEIA-0004). This work was performed using HPC resources
from GENCI–IDRIS (Grant 2023-103245). This work was partially supported by Hi! Paris through the PhD funding of
Nina Vesseron.

\newpage
\section*{Reproducibility Statement}
All implementation details of our proposed method, including model architectures, training procedures, and hyperparameter settings, are provided in Section~\ref{sec:expe} of the main paper and in Appendix~\ref{appendix:hyperparams} and~\ref{app:implementation_gauss}. 
Original theoretical results are presented with complete proofs in Appendix~\ref{sec:appendix_GPCA_gauss}. The datasets used in our experiments are publicly available. 

\bibliography{iclr2026_conference}
\bibliographystyle{iclr2026_conference}

\newpage

\appendix
\section{Additional experiments and figures} \label{app:figures}

\subsection{Geodesic PCA}

Here we present additional figures to further explain the experiments described in the paper. Figure~\ref{fig:circle_example_app} concerns the experiment on Gaussian distributions with diagonal covariances described in Section~\ref{sec:expe_gaussian} corresponding to Figure~\ref{fig:circle_example}. It shows all three principal components found by tangent PCA (\textbf{left}) and geodesic PCA, in two equally optimal solutions (\textbf{middle, right}).

\begin{figure}[h!]
    \centering
    \includegraphics[width=0.3\textwidth]{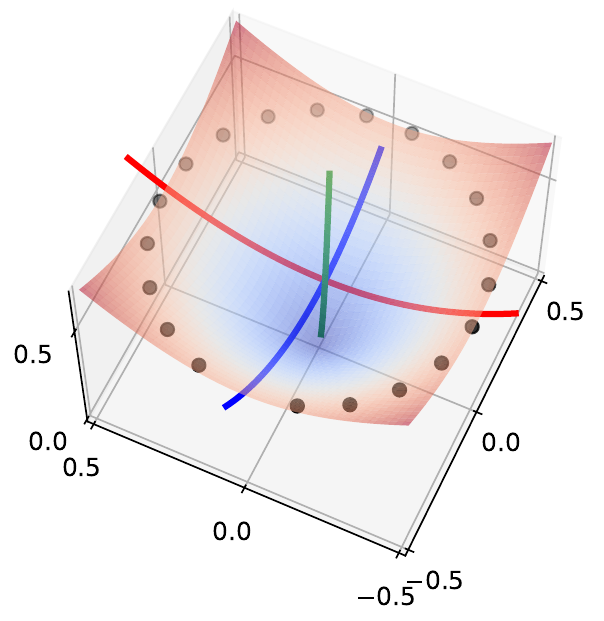}
    \includegraphics[width=0.3\textwidth]{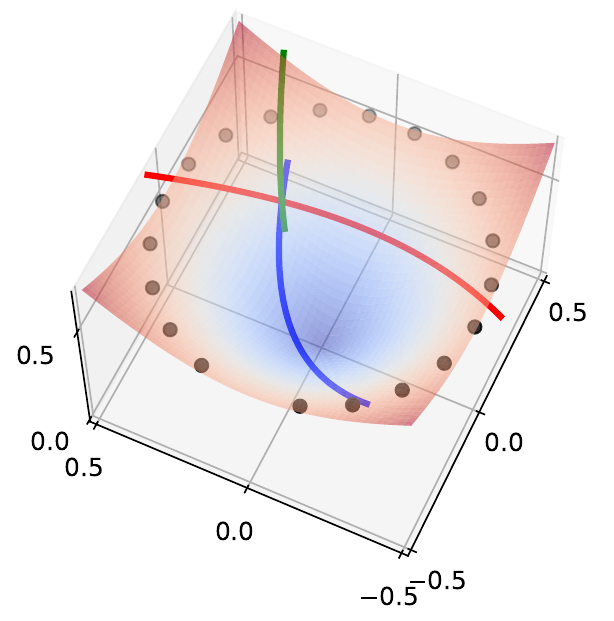}
    \includegraphics[width=0.3\textwidth]{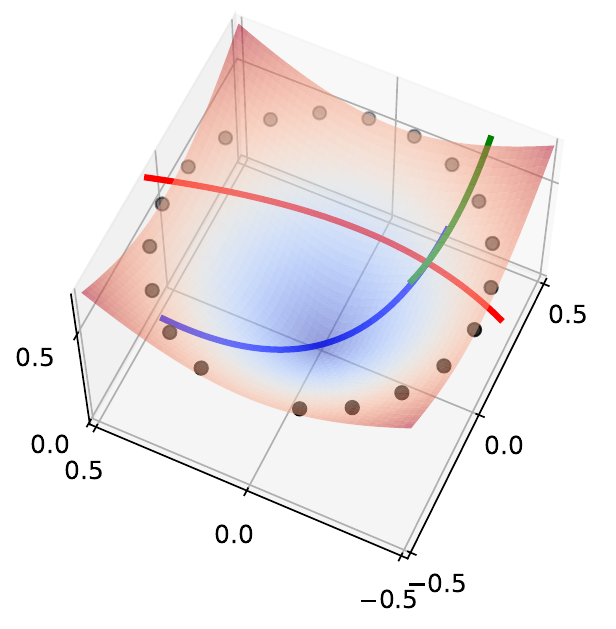}
    \caption{Principal geodesic components of a set of Gaussian distributions whose covariance matrices have same eigenvalues and different orientations, as described in Section~\ref{sec:expe_gaussian}. Tangent PCA yields a unique solution \textbf{(left)} where geodesic components cross at the barycenter, while geodesic PCA yields two equally optimal solutions \textbf{(middle, right)} where the geodesic components cross at another point. The first geodesic component is shown in red, the second in blue, the third in green.}
    \label{fig:circle_example_app}
\end{figure}

Figure \ref{fig:mnist_geod_components} displays on the plane the two first geodesic components of the MNIST experiment of Section \ref{sec:expe_general}, while Figure \ref{fig:pca_chair_example} shows the planar representation of the 3D point cloud of chairs experiment given by the projection onto the first two geodesic components found by \textsc{GPCAgen} algorithm and depicted in Figure \ref{fig:comp_pca} (\textbf{top row}). 

\begin{figure}[H]
    \centering
    \begin{minipage}[t]{0.48\linewidth}
        \centering
        \includegraphics[width=1.0\linewidth]{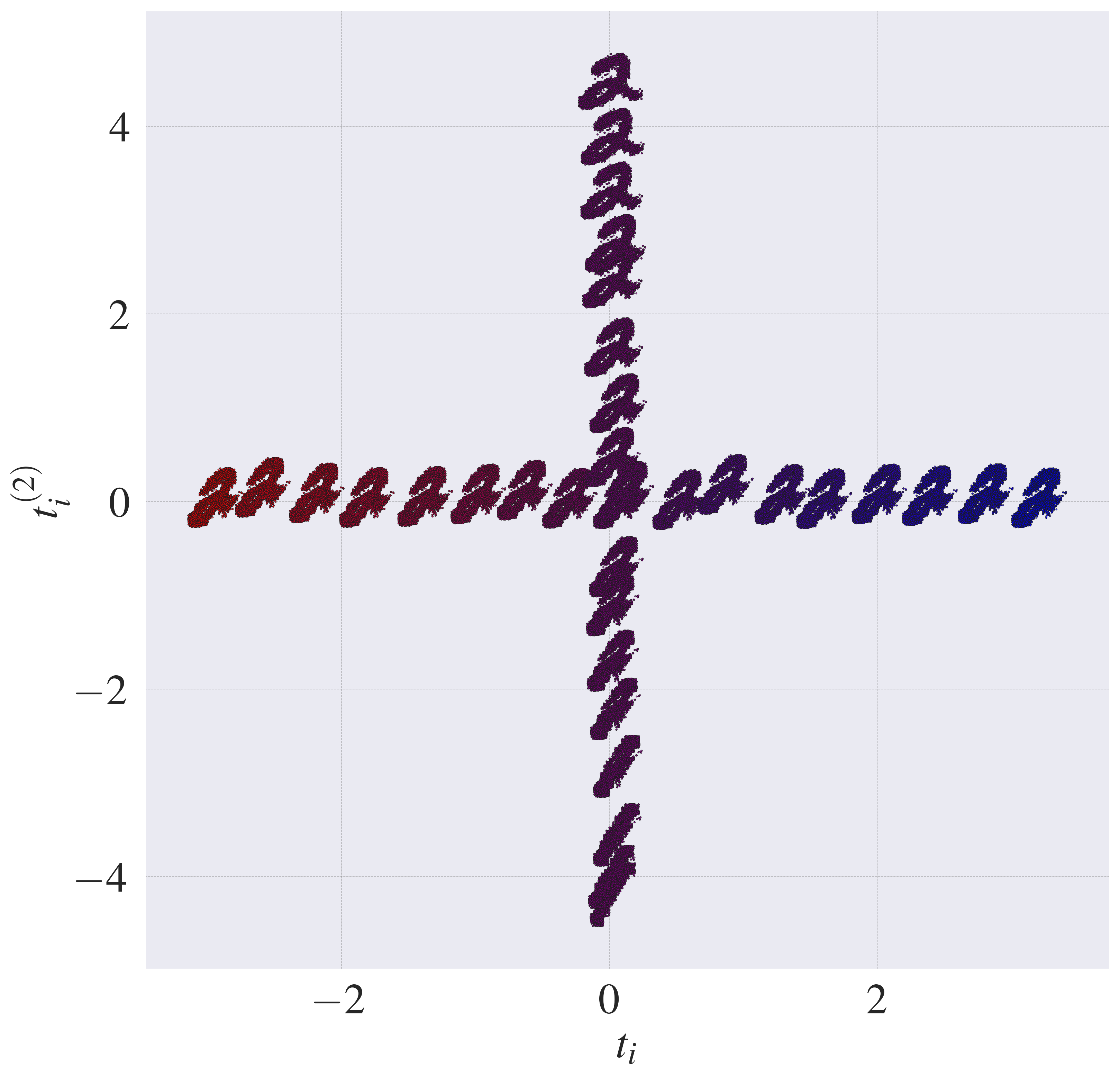}
        \caption{Each point cloud, corresponding to a distribution along one of the artificially constructed geodesics, is embedded in the plane according to its projection times onto the first and second geodesics returned by the \textsc{GPCAgen} algorithm. We observe that \textsc{GPCAgen} successfully recovers the two orthogonally intersecting geodesics designed from MNIST-based interpolations of digit shape and color.
}
        \label{fig:mnist_geod_components}
    \end{minipage}
    \hfill
    \begin{minipage}[t]{0.48\linewidth}
        \centering
        \includegraphics[width=1.0\linewidth]{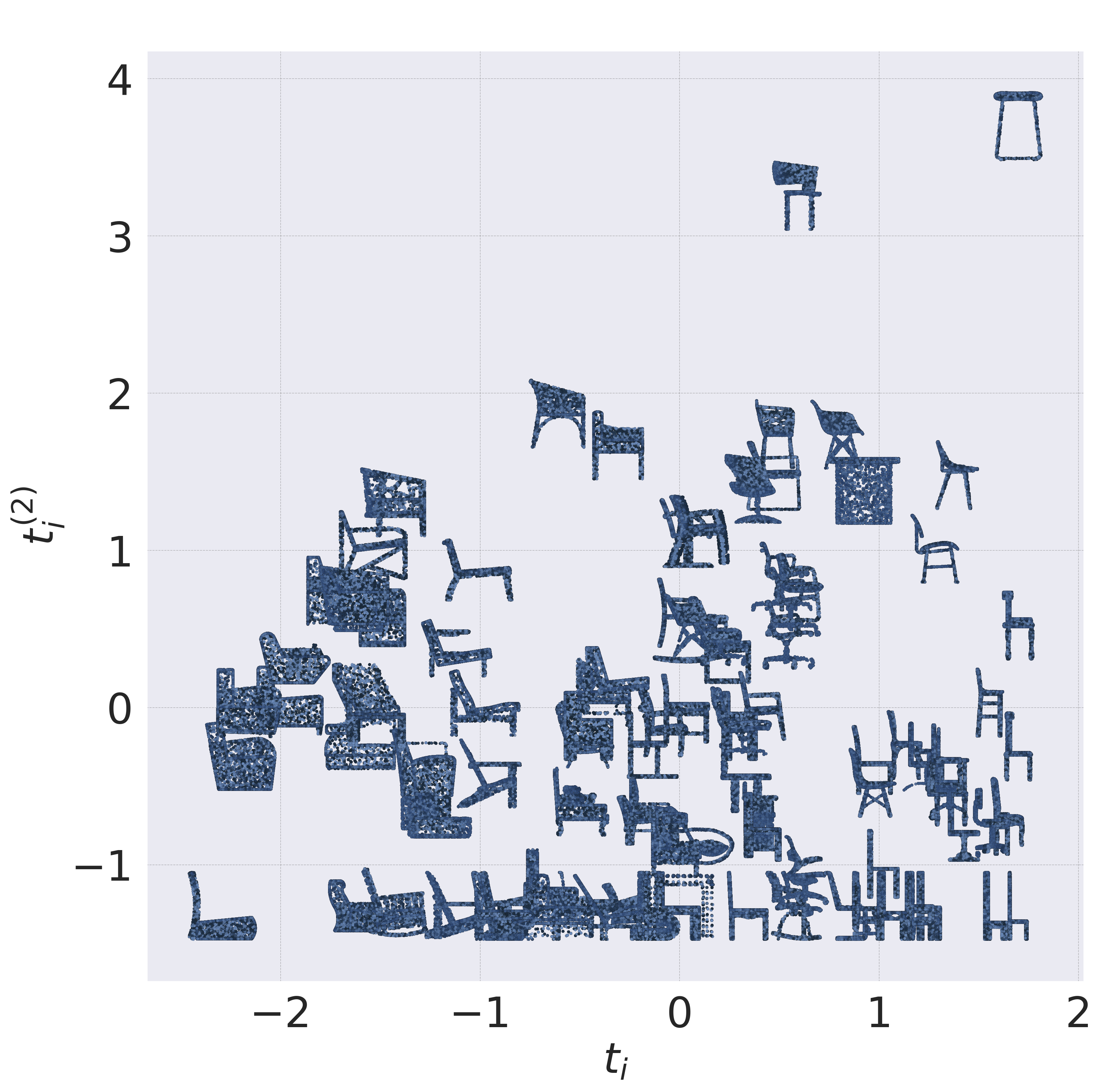}
        \caption{Each chair point cloud is embedded in the plane according to its projection times onto the first and second geodesics returned by the \textsc{GPCAgen} algorithm.}
        \label{fig:pca_chair_example}
        \end{minipage}
\end{figure}

Finally, we present an additional experiment on the MNIST dataset. We use the same color construction as in the experiment presented in Section \ref{sec:expe_general}, we then apply \textsc{GPCAgen} to a dataset of 20 red digits "1", 20 blue digits "1", 20 red digits "2", and 20 blue digits "2" (see Figure~\ref{fig:mnist_80_gpca}). As shown in Figures~\ref{fig:comp_mnist_80_example} and~\ref{fig:mnist_80_gpca}, \textsc{GPCAgen} again identifies two orthogonal geodesics: the first primarily captures variation in color, while the second captures variation in shape—from digit "2" to digit "1".

\begin{figure}[h!]
    \centering
\begin{minipage}[b]{0.485\linewidth}
    \centering   \includegraphics[width=1.0\linewidth]{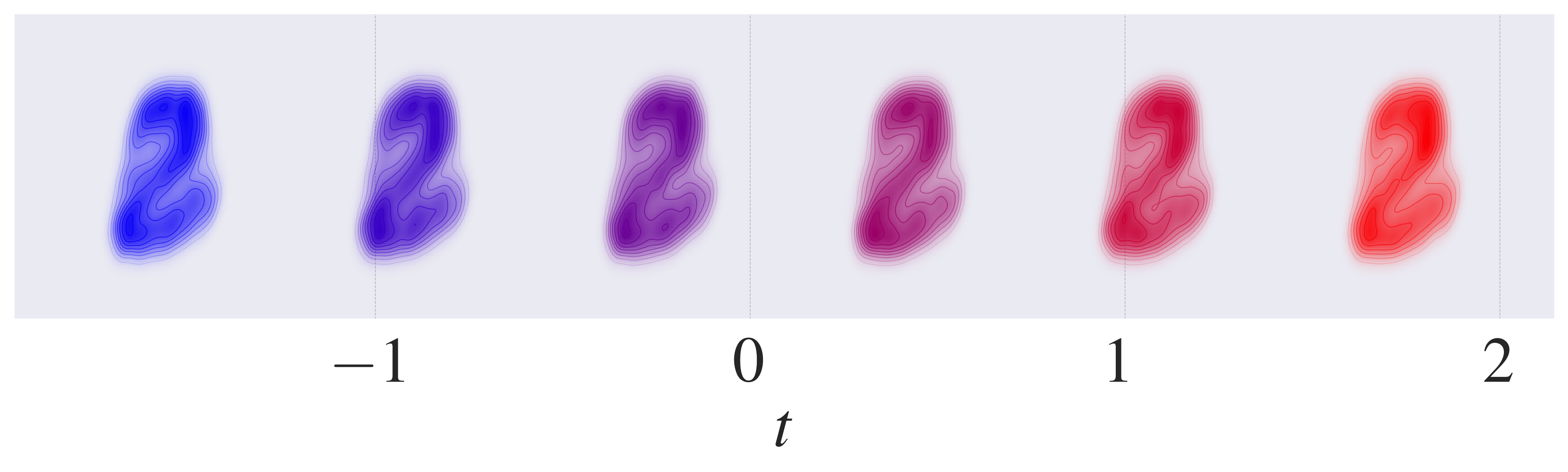}
\end{minipage}%
\begin{minipage}[b]{0.485\linewidth}
    \centering
\includegraphics[width=1.0\linewidth]{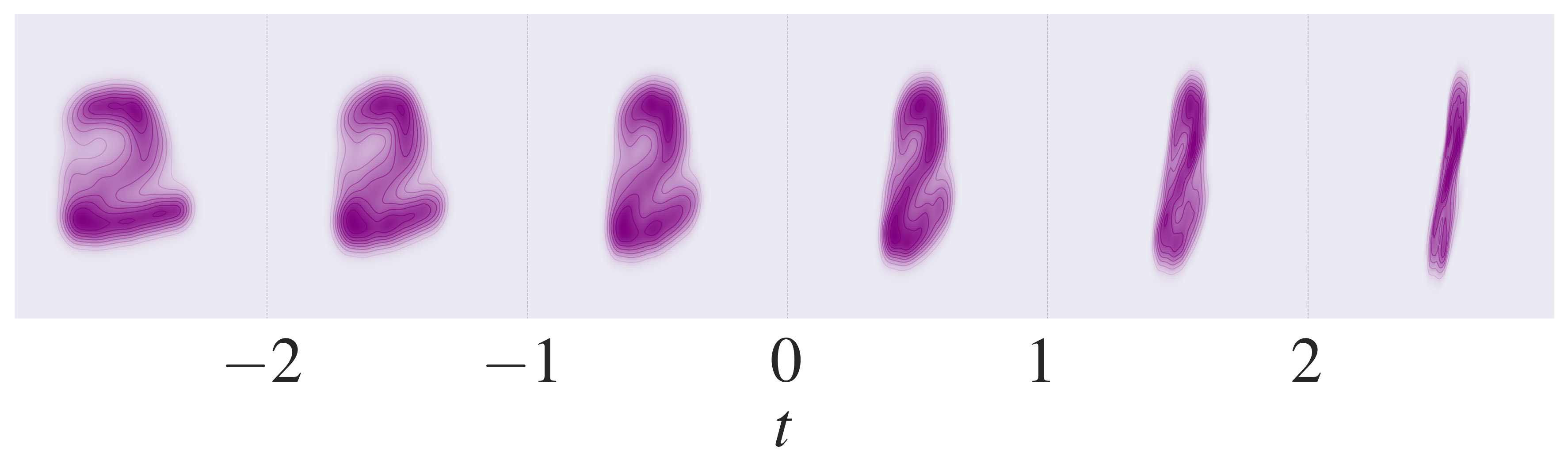}
\end{minipage}
\caption{Densities of probability distributions uniformly sampled along the geodesics corresponding to the first and second principal components. The first component (\textbf{left}) returned by \textsc{GPCAgen} captures variation in color space, while the second component (\textbf{right}) recovers the interpolation between digit "2" and digit "1".}
\label{fig:comp_mnist_80_example}
\end{figure}

\begin{figure}[H]
    \centering        \centering\includegraphics[width=0.65\linewidth]{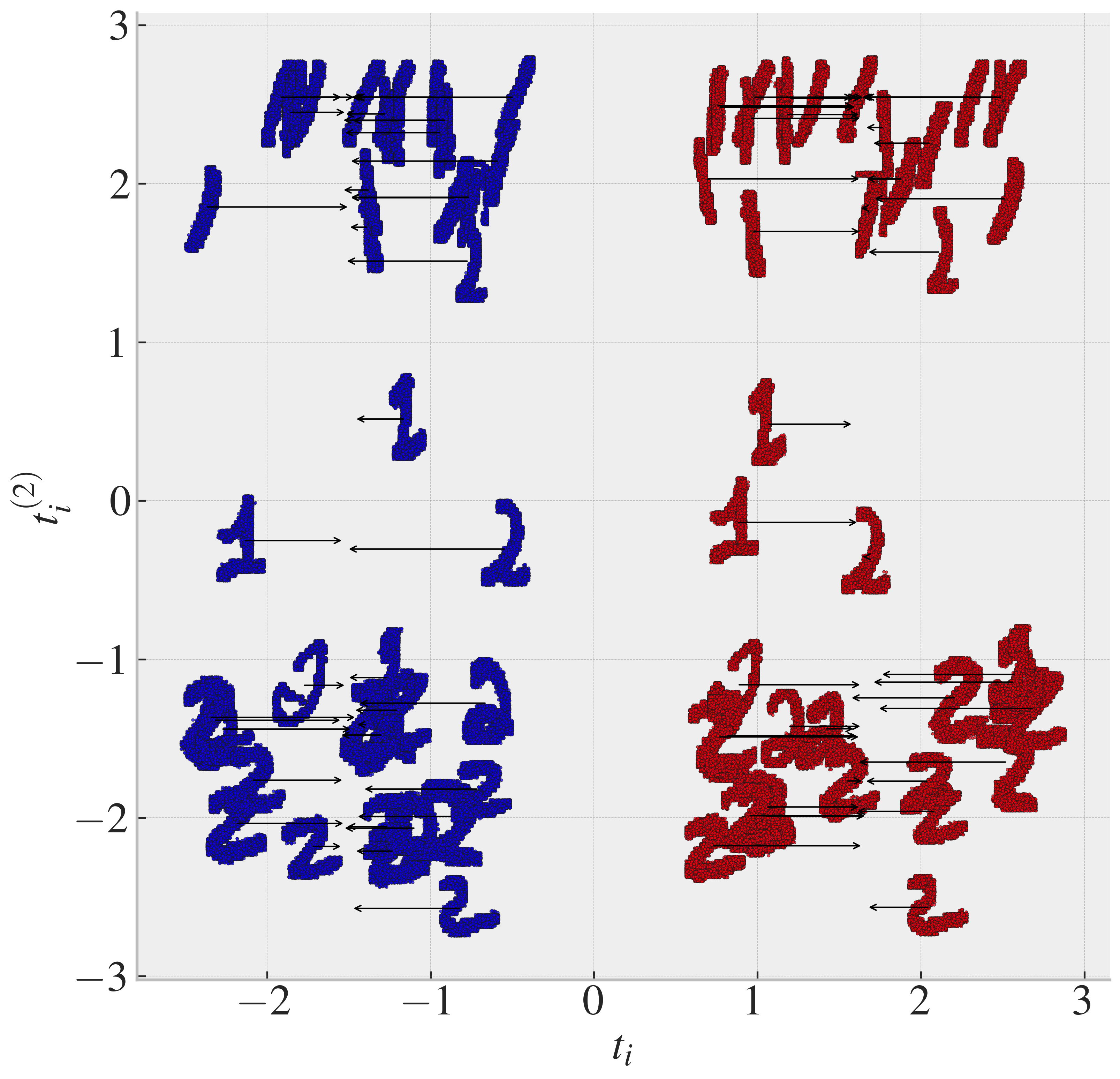}
    \caption{Each MNIST digit is embedded in the plane (the arrows indicate the exact position of each digit) according to its projection times onto the first and second geodesics returned by the \textsc{GPCAgen} algorithm. We observe that the first principal component recovered by \textsc{GPCAgen} captures variation in color, while the second component reflects the transformation from digit "2" to digit "1".}
    \label{fig:mnist_80_gpca}  
\end{figure}

\subsection{Comparison of GPCA to related methods}\label{app:tpca_figures}

\paragraph{Other notions of PCA on Gaussian distributions} There exist a wide variety of metrics on the space of symmetric positive definite matrices, such as e.g. the log-Euclidean, Euclidean-Cholesky or affine-invariant metrics (see \cite{thanwerdas2022riemannian} for a comprehensive overview). Each of these metrics could be used to perform PCA on centered Gaussian distributions. However, there is no obvious quantitative way to compare the results. Each method optimizes its own criterion, and any metric that one could think of to compare the methods would rely on a choice of underlying metric on the space of SPD matrices. Comparison of PCA methods with two different metrics thus boils down to comparing the metrics themselves. We illustrate in Figure \ref{fig:spd-geod} the behavior of covariances matrices along geodesics for different metrics.

\begin{figure}[H]
    \centering
    \includegraphics[width=0.5\linewidth]{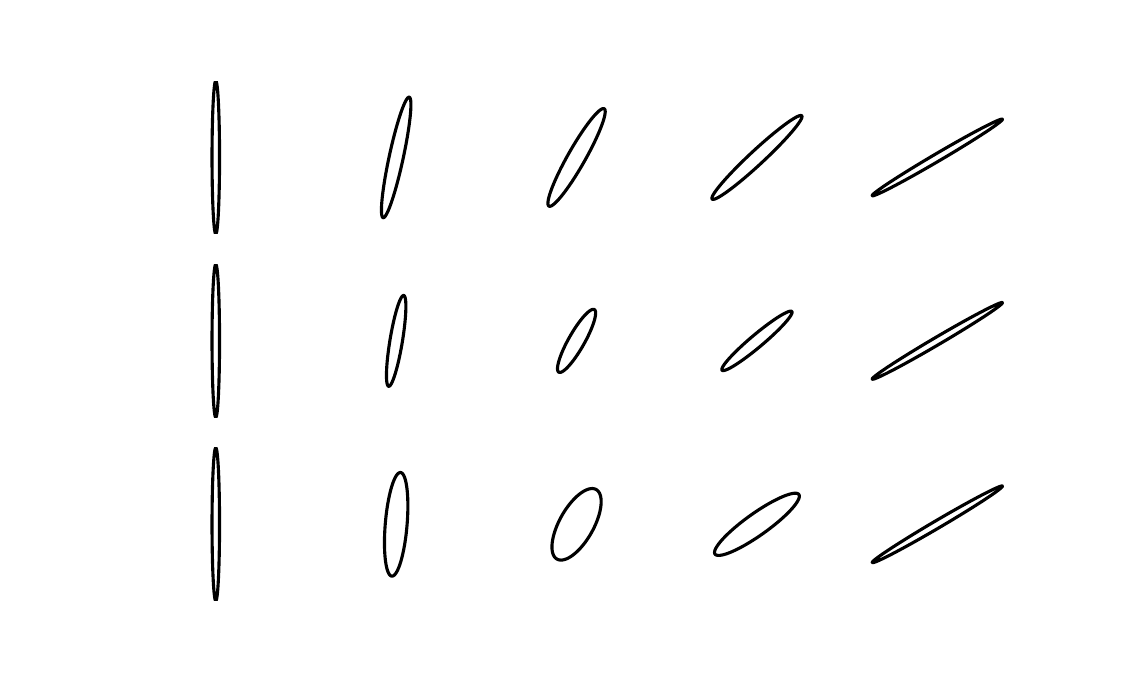}
    \caption{Geodesics on the space of symmetric positive definite matrices from left to right, for (\textbf{top}) the Bures-Wasserstein metric, (\textbf{middle}) the log-Euclidean metric and (\textbf{bottom}) the Euclidean metric on the Cholesky coefficients.}
    \label{fig:spd-geod}
\end{figure}

\begin{figure}[h]
    \centering
    \includegraphics[width=0.7\linewidth]{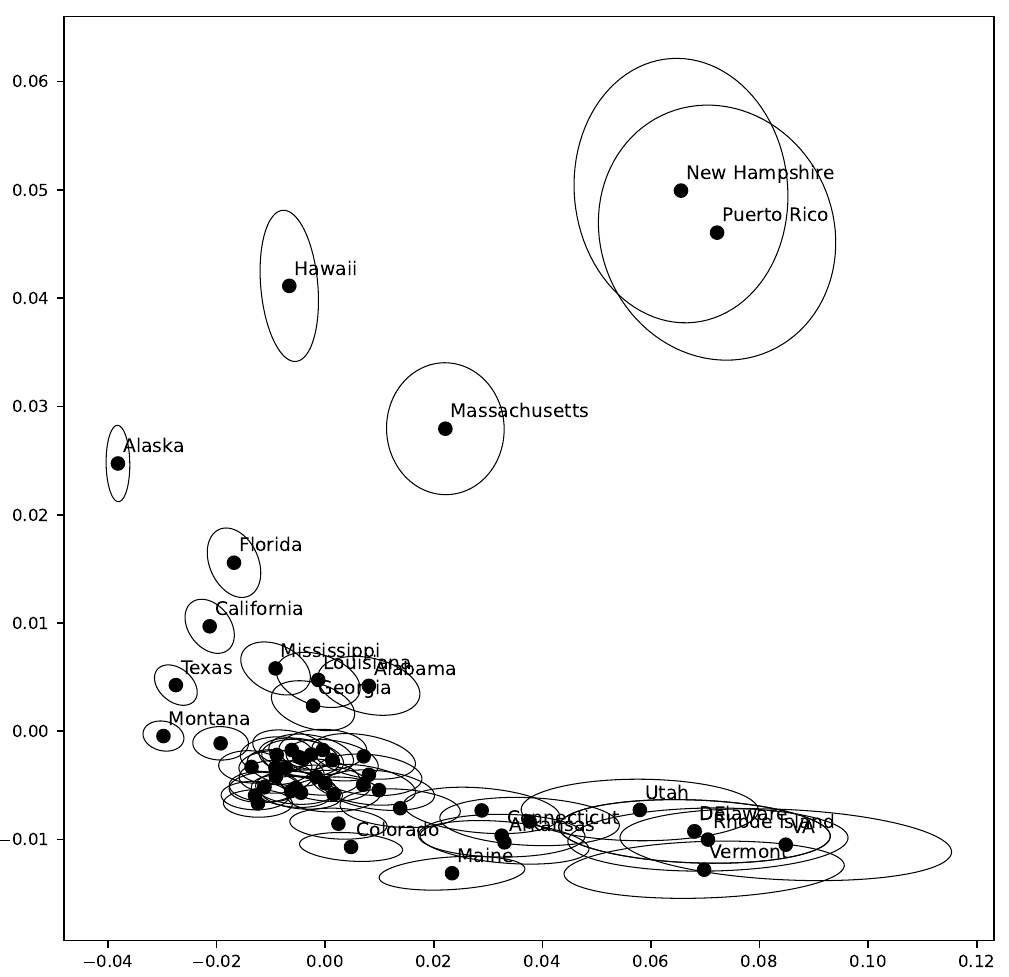}
    \caption{Each dot represents a state projected onto the first two GPCA components computed from the empirical covariance matrices, which are also shown in the figure.}
    \label{fig:weather_gpca}
\end{figure}

\color{black}
\paragraph{TPCA on 3D Point Cloud Data} 
Here we present the results returned by TPCA on the 3D point cloud experiments, see Figures \ref{fig:comp_tpca} and \ref{fig:tpca_chairs_lamp_example}, and compare them to from those obtained by \textsc{GPCAgen}. 

For the lamps dataset, the first component is similar and captures the distinction between hanging and standing lamps. The second component focuses on the object thickness, like the second \textsc{GPCAgen} component, but also on whether mass is concentrated at the extremities or the middle of the lamp structure. 

For the chairs dataset, both geodesics obtained by TPCA resemble those returned by GPCA. However, the second TPCA component also appears to account for whether the mass is concentrated or not.

Finally, due to the discrete nature of the TPCA algorithm, we observe discretization artifacts in the TPCA components: holes in some parts of the space, mass concentration in others.

\begin{figure}[H]
    \centering
    \begin{minipage}[t]{0.46\linewidth}
        \centering
        \includegraphics[width=1.0\linewidth, clip,trim=0cm 0cm 0cm 0cm]{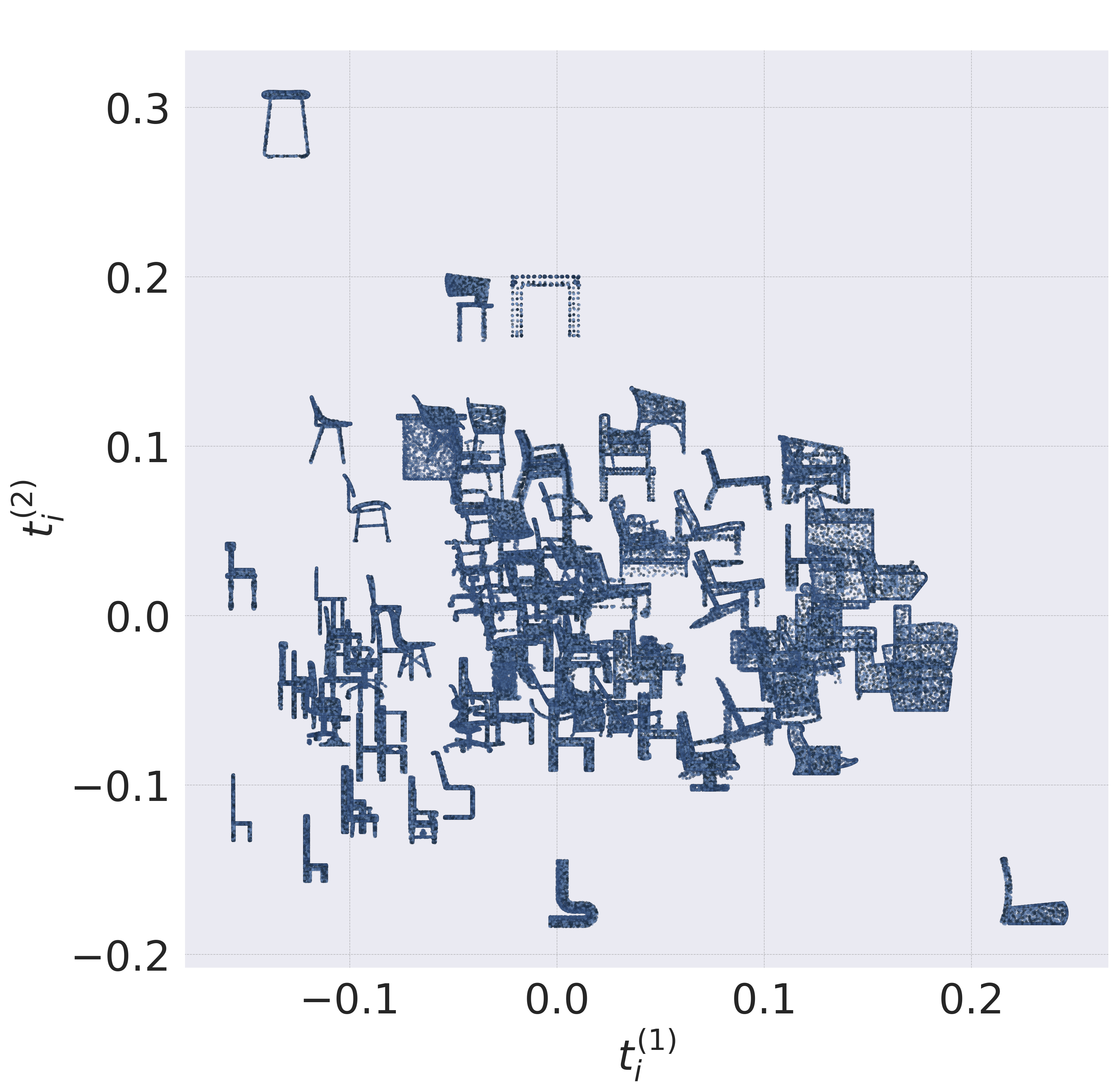}
    \end{minipage}%
    \begin{minipage}[t]{0.46\linewidth}
        \centering
        \includegraphics[width=1.0\linewidth, clip,trim=0cm 0cm 0cm 0cm]{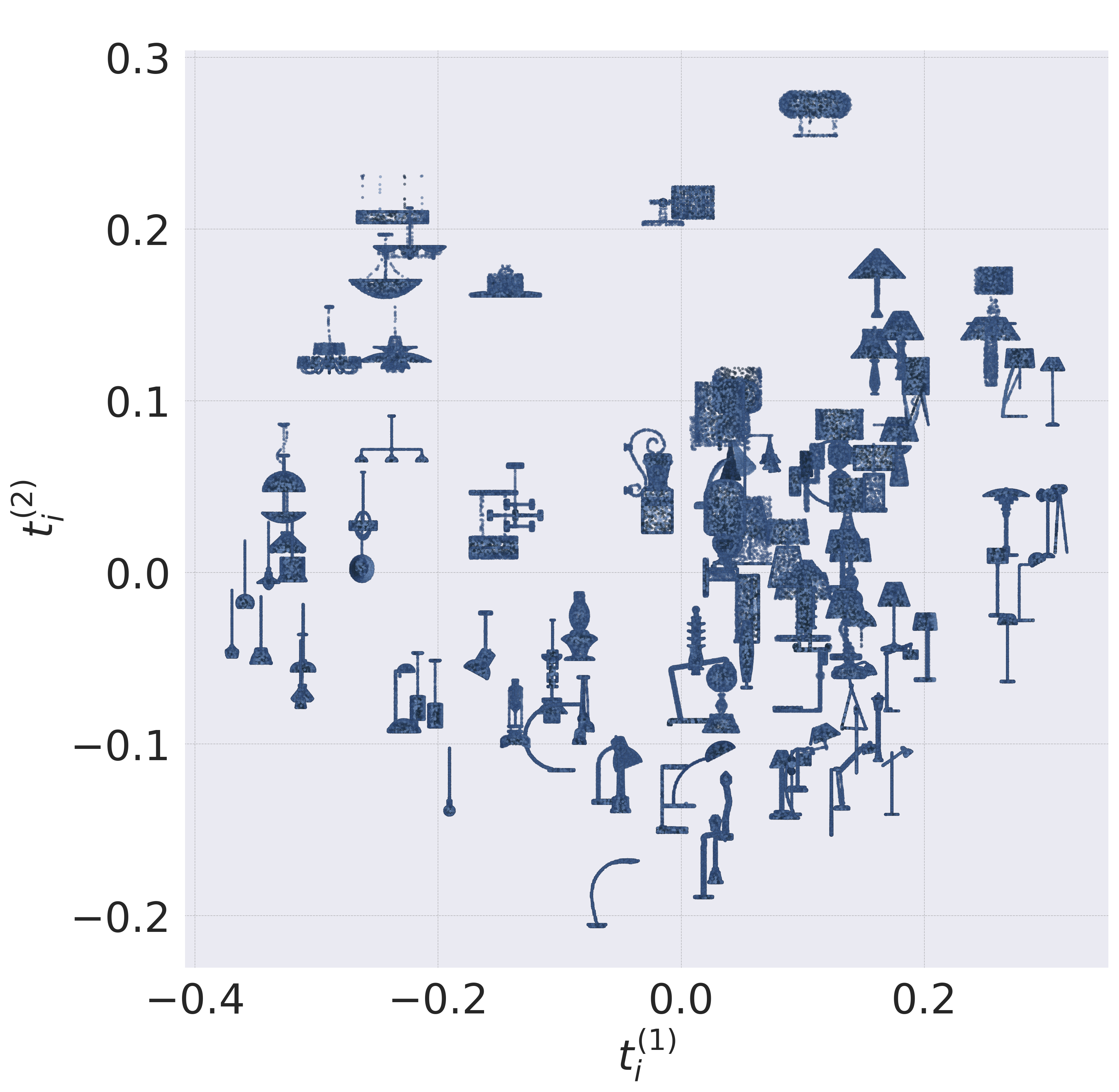}
    \end{minipage}
    \caption{For the chair and the lamp experiment, each point cloud is embedded in the plane according to its projection times onto the first and second principal components computed by \textsc{TPCA}.}
        \label{fig:tpca_chairs_lamp_example}
\end{figure}

\begin{figure}[H]
    \centering
    \includegraphics[width=1.0\linewidth, clip,trim=0cm 0cm 0cm 0.0cm]{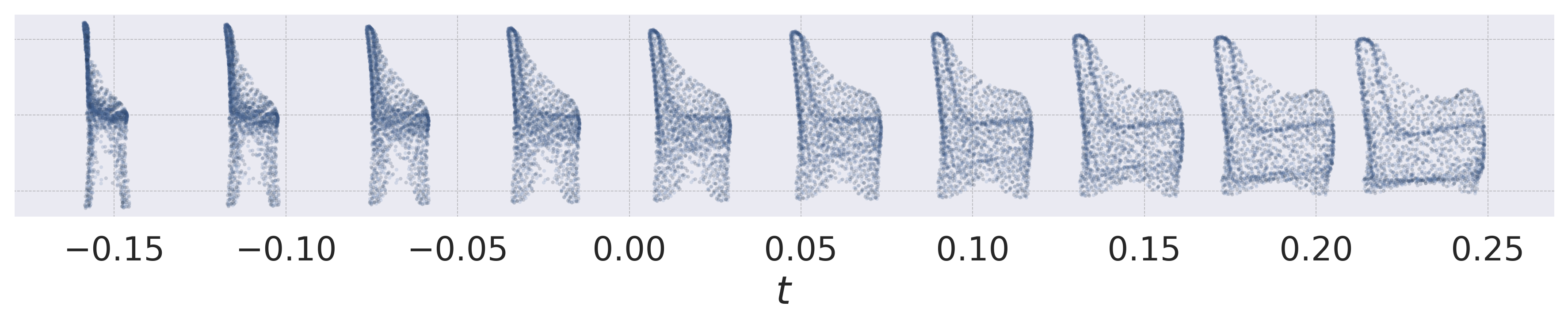}
    \includegraphics[width=1.0\linewidth, clip,trim=0cm 0cm 0cm 0.0cm]{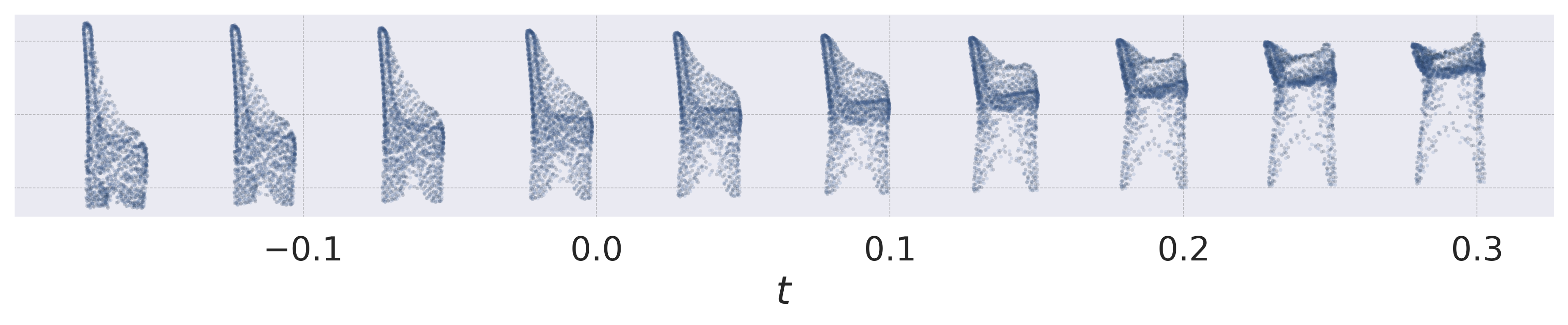}
    \includegraphics[width=1.0\linewidth, clip,trim=0cm 0cm 0cm 0.0cm]{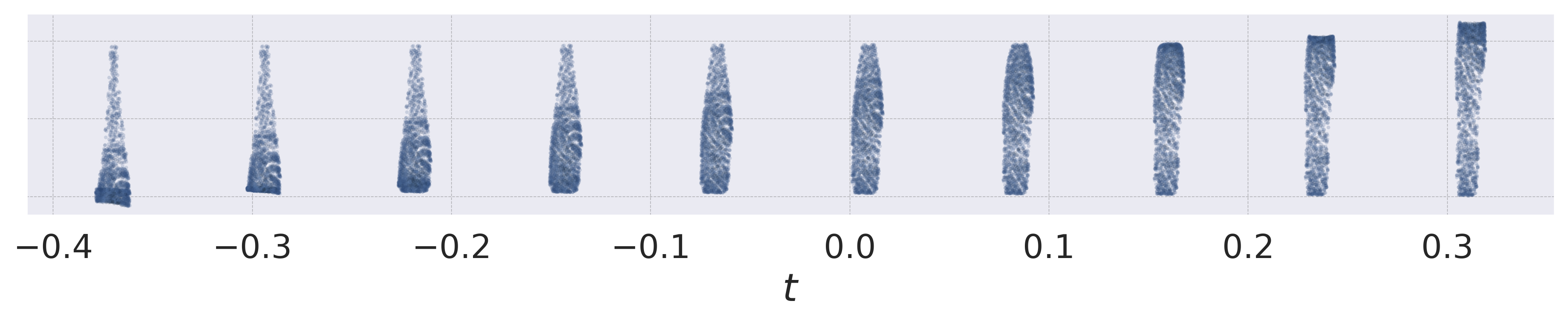}
    \includegraphics[width=1.0\linewidth, clip,trim=0cm 0cm 0cm 0.0cm]{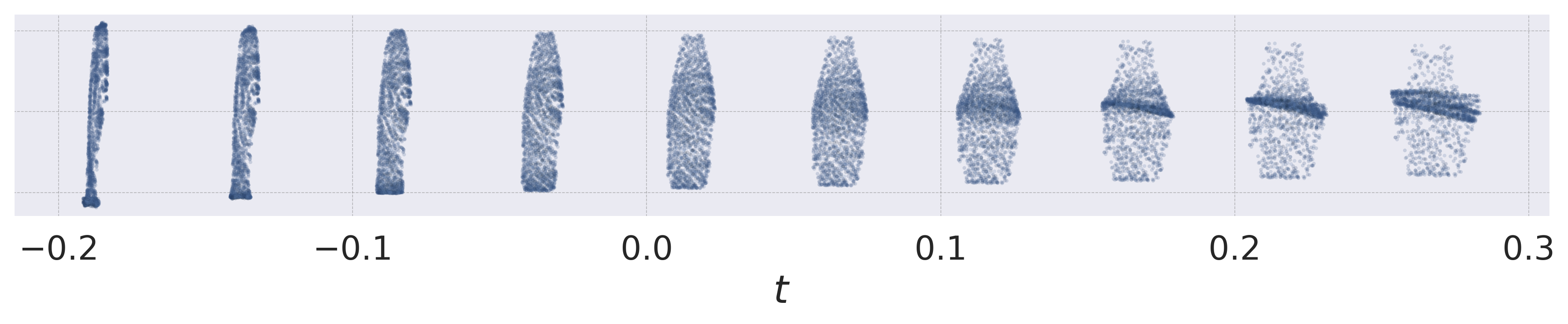}
\caption{Empirical distributions uniformly sampled along the geodesics corresponding to the first (\textbf{first line}) and second (\textbf{second line}) principal components, as computed by \textsc{TPCA} in the 3D point cloud of chairs experiment (\textbf{top rows}) and the 3D point cloud of lamps experiment (\textbf{bottom rows}).}
    \label{fig:comp_tpca}
\end{figure}

\paragraph{PCA computed in the latent space of PointNet.} \label{app:pointnet_pca}
For the 3D point-cloud datasets, we evaluated the natural baseline that consists in embedding point clouds into a latent space of dimension $d$ and then performing standard PCA on the resulting latent vectors. We used a pretrained PointNet autoencoder~\citep{qi2017pointnet} from the public repository \url{https://github.com/vinits5/pc_autoencoder}, trained on ModelNet40, to encode each point cloud (chairs and lamps) into a $d$-dimensional latent representation, on which PCA was applied. Figure~\ref{fig:pointnet_chairs_lamp_example} shows the resulting 2D projections. We observe some clustering of similar objects; for example, large lamps tend to group together in the lamp dataset, and chairs versus armchairs form distinguishable clusters. The second principal component for chairs appears to correlate with the height of the seat. Beyond these observations, however, PCA provides limited separability (especially for lamps), and the recovered components are difficult to interpret. 

More generally, this approach presents several important limitations: 
\begin{itemize}
    \item Training a point-cloud autoencoder requires a large collection of distributions. In our case (100 distributions), we need to rely on a pretrained autoencoder trained on related dataset. 
    \item PCA on autoencoder embeddings relies heavily on the geometry learned by the encoder. The learned geometry is not guaranteed to align with the Wasserstein structure and the recovered principal components may not reflect meaningful modes of variation (as observed in the experiments above). Moreover, for a given autoencoder that we wish to train, different random seeds at initialization can lead to different learned geometries and thus different PCA components, which is not suitable.
\end{itemize}

\begin{figure}[H]
    \centering
    \begin{minipage}[t]{0.46\linewidth}
        \centering
        \includegraphics[width=1.0\linewidth, clip,trim=0cm 0cm 0cm 0cm]{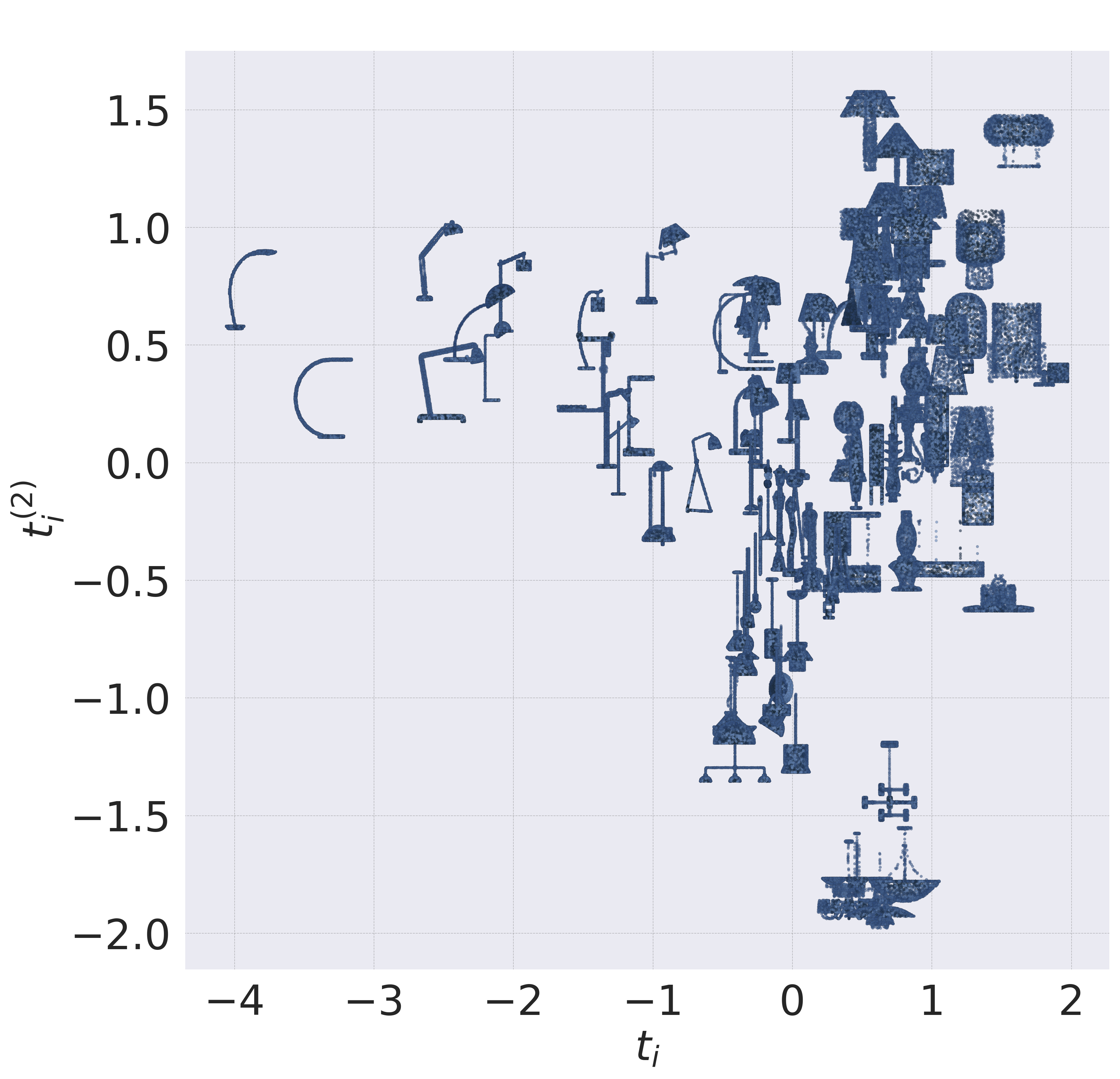}
    \end{minipage}%
    \begin{minipage}[t]{0.46\linewidth}
        \centering
        \includegraphics[width=1.0\linewidth, clip,trim=0cm 0cm 0cm 0cm]{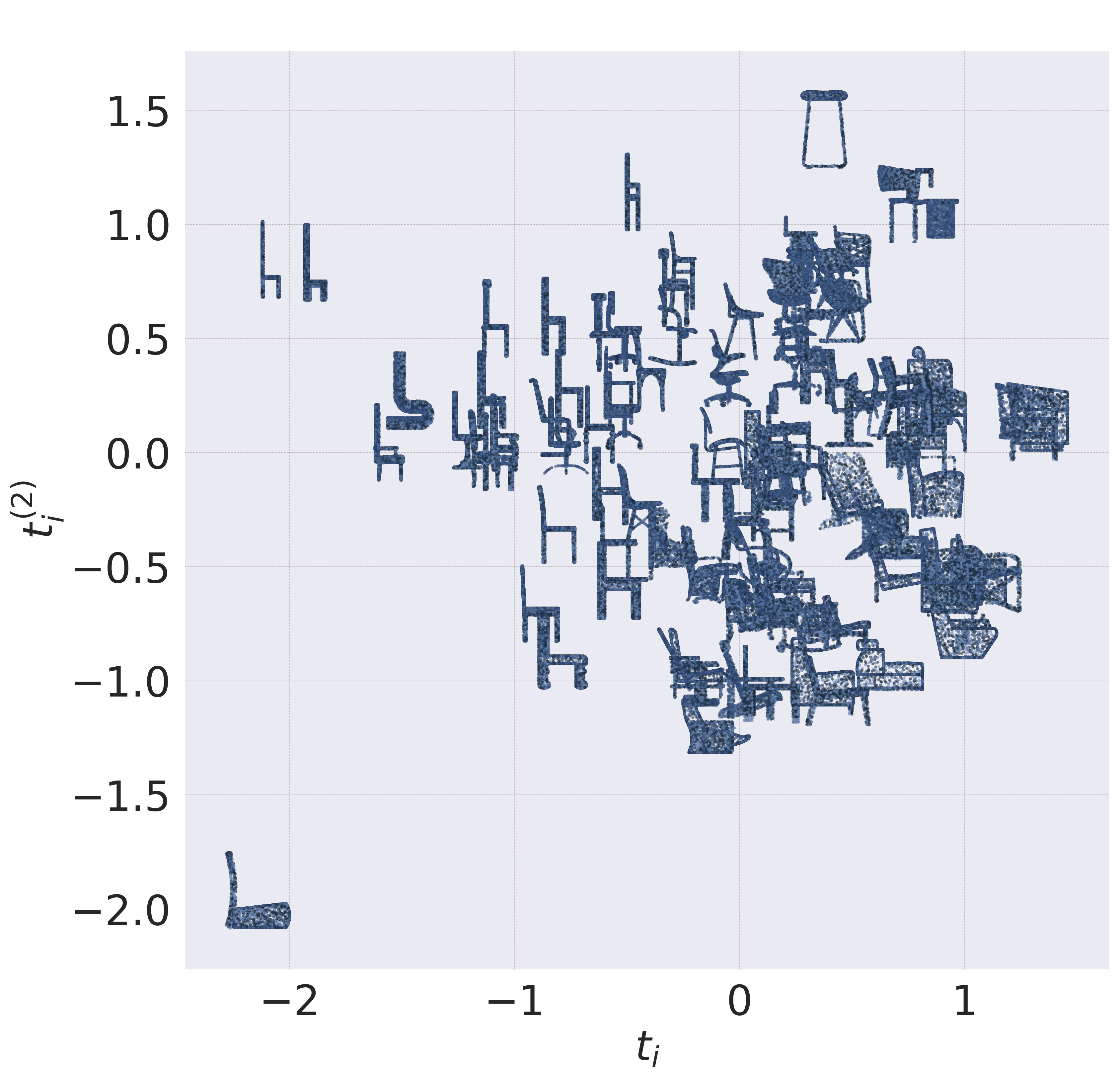}
    \end{minipage}
    \caption{For the lamp (\textbf{left}) and the chair (\textbf{right}) experiment, each point cloud is embedded in the plane according to its projection times onto the first and second principal components computed by the \textsc{PointNet + PCA} method.}
        \label{fig:pointnet_chairs_lamp_example}
\end{figure}

\subsection{Application of GPCA to Outlier Detection} \label{app:gpca_outliers}

In this section, we demonstrate how GPCA can be used for outlier detection. The underlying intuition is that GPCA components capture the structure of the dataset on which they are trained, and samples from a different dataset are expected to lie far from the learned components in Wasserstein distance. In this experiment, we use the ModelNet40 3D point cloud dataset~\citep{wu20153dshapenetsdeeprepresentation} and apply GPCA to a subset of 100 randomly selected chair point clouds to compute the first two components. For a new point cloud $X$, we define its score as the sum of the Wasserstein distances between $X$ and its projections onto the first two learned GPCA components. To compute the Wasserstein distance between $X$ and a component, we use $\texttt{ot.emd}$ from the POT library. Specifically, for each component, we perform a grid search over 20 equally spaced values of $t$ between $t_{\min}$ and $t_{\max}$, computing the Wasserstein distance between $X$ and 2048 samples drawn from the component at each $t$, and select the $t$ that minimizes this distance. We repeat the same procedure for the second component and sum the two minimal distances to obtain the final score.

We evaluate this approach on 120 point clouds: 60 new chairs (not used for training) and 60 point clouds of cars. The left histogram in Figure~\ref{fig:gpca_chairs_outliers} shows the resulting scores. We observe that the scores of the chair point clouds (in blue) are lower than those of the car point clouds (in green), indicating that it is possible to detect whether a point cloud is not a chair using this score. We also repeat the experiment with 60 point clouds of planes, shown in the right histogram of Figure~\ref{fig:gpca_chairs_outliers}, and observe that the separation between chair and plane scores is even more pronounced.
  
\begin{figure}[H]
    \centering
    \begin{minipage}[t]{0.48\linewidth}
        \centering
        \includegraphics[width=1.0\linewidth, clip,trim=0cm 0.0cm 0cm 0.8cm]{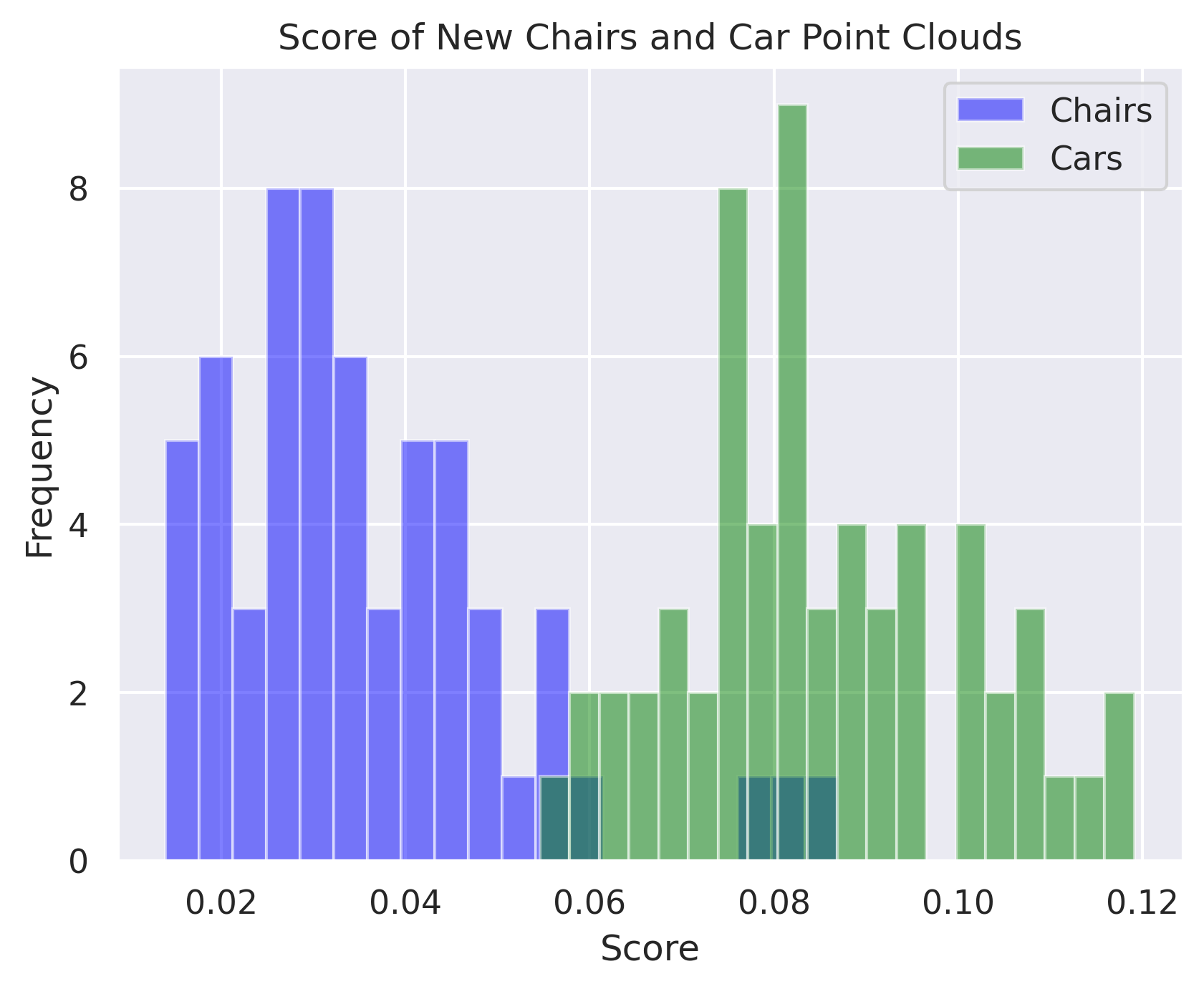}
    \end{minipage}%
    \begin{minipage}[t]{0.48\linewidth}
        \centering
        \includegraphics[width=1.0\linewidth, clip,trim=0cm 0.0cm 0cm 0.8cm]{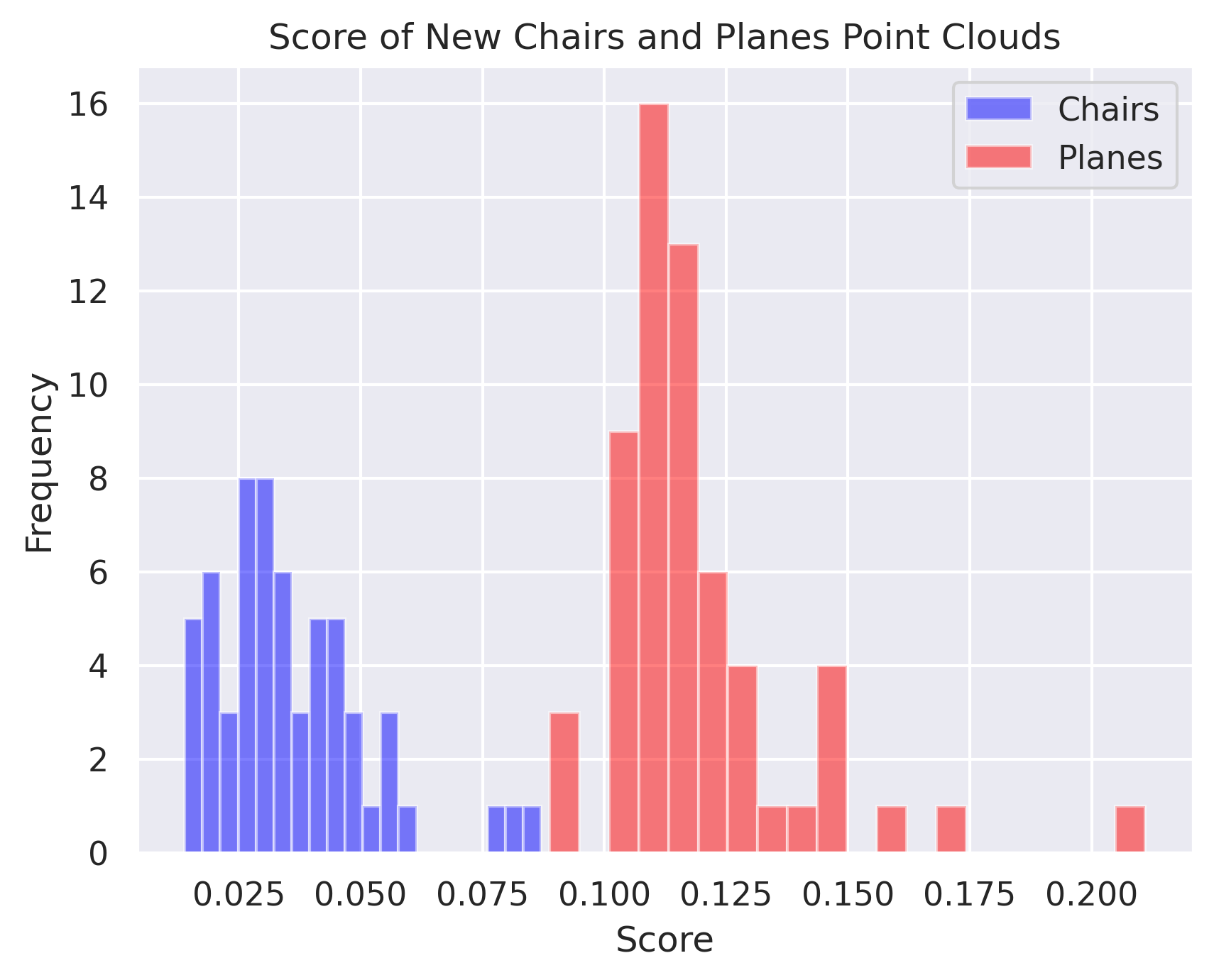}
    \end{minipage}
    \caption{GPCA scores obtained on 60 \textbf{new} point clouds of chairs (never seen during training) and 60 point clouds of cars (\textbf{left}) / planes (\textbf{right}). The separation of the histograms indicates that GPCA can be used for outlier detection.}
        \label{fig:gpca_chairs_outliers}
\end{figure}

\section{The Otto-Wasserstein geometry}\label{appendix:otto}

In this section, we briefly describe the fiber bundle structure over the Wasserstein space due to \cite{otto2001geometry}, that is behind the Riemannian interpretation of the Wasserstein distance. We then present its restriction to the space of centered non-degenerate Gaussian distributions, which coincides with the Bures-Wasserstein Riemannian geometry on SPD matrices. Finally, we relate Otto's parametrization of geodesics to McCann's interpolation. 

We present these well-known results without proofs and refer the interested reader to \cite{otto2001geometry, khesin2021geometric} and \cite[Section~6.1]{ambrosio2013user} for more details in the general setting and to \cite{takatsu2011wasserstein, malago2018wasserstein, bhatia2019bures} for details and proofs in the Gaussian setting.

\subsection{The Otto-Wasserstein geometry of a.c. distributions}
\label{app:appendix_otto_general}

Consider the space $\Prob(\Omega)$ of absolutely continuous probability measures with smooth densities with respect to the Lebesgue measure, and support included in a compact set $\Omega\subset \R^d$, as well as the space $\Diff(\Omega)$ of diffeomorphisms on $\Omega$. These spaces can be equipped with an infinite-dimensional manifold structure, see e.g. \cite{ebin1970groups}, that we will not describe here. The tangent space of $\Diff(\Omega)$ at $\varphi\in\Diff(\Omega)$ is given by
\begin{align*}
    T_\varphi \Diff(\Omega)&=\{v\circ\varphi, \, v:\Omega\rightarrow \R^d \text{ vector field}\}.
\end{align*}
We fix a reference measure $\rho\in\Prob(\Omega)$ and equip $\Diff(\Omega)$ with the $L^2$-metric with respect to $\rho$, defined for any tangent vectors $u\circ\varphi, v\circ\varphi\in T_{\varphi}\Diff(\Omega)$ as
\[\langle u\circ \varphi, v\circ\varphi\rangle_{L^2(\rho)}\colon=\int (u\circ\varphi)\cdot(v\circ\varphi) \,d\rho=\int u\cdot v\, d\mu,\]
where $\mu=\varphi_{\#}\rho$. Then the space of diffeomorphisms can be decomposed into \emph{fibers}, defined to be equivalence classes under the projection 
\[ \pi\colon\Diff(\Omega) \rightarrow \Prob(\Omega),\quad \varphi \mapsto \varphi_{\#}\rho.\]

Specifically, the fiber over $\mu\in\Prob(\Omega)$ is given by $\pi^{-1}(\mu)=\{\varphi\in \Diff(\Omega), \varphi_\#\rho=\mu\}$, see Figure \ref{fig:otto_geometry_annex} (\textbf{right}). The tangent space to the fiber $\pi^{-1}(\mu)$ at $\varphi\in\Diff(\Omega)$ and its orthogonal with respect to the $L^2(\rho)$-metric are refered to as the \emph{vertical} and \emph{horizontal} spaces respectively :
\[\ver_\varphi\colon=\mathrm{ker}\,d\pi_{\varphi}, \quad \hor_\varphi:=(\ver_\varphi)^\perp,\]
where $d\pi_{\varphi}\colon T_\varphi\Diff(\Omega)\rightarrow T_{\pi(\varphi)}\Prob(\Omega)$ denotes the differential of $\pi$ at $\varphi$. Moving along vertical vectors in $\Diff(\Omega)$ means staying in the same fiber, i.e. projecting always to the same measure $\mu$ in the bottom space. On the contrary, moving along horizontal vectors means moving orthogonally to the fibers, i.e., in the direction that gets fastest away from the fiber. The following proposition gives the form of vertical and horizontal vectors.
\begin{proposition}\label{horver}
Let $\varphi\in\Diff(\Omega)$. Then
\begin{align*}
\ver_\varphi&=\{w\circ\varphi, \, \nabla\cdot(w\mu)=0\},\\
\hor_\varphi&=\{\nabla f\circ\varphi, \, f\in C^\infty(\Omega)\}.
\end{align*}
\end{proposition}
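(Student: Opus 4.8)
\medskip

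The plan is to characterize $\ver_\varphi$ and $\hor_\varphi$ by unwinding the definitions and then invoking the Helmholtz--Hodge decomposition of vector fields on $\Omega$. First I would compute the differential $d\pi_\varphi$. Given a curve $s\mapsto\varphi_s$ in $\Diff(\Omega)$ with $\varphi_0=\varphi$ and $\partial_s\varphi_s|_{s=0}=w\circ\varphi$ (so that $w$ is a vector field on $\Omega$), the curve $s\mapsto\pi(\varphi_s)=(\varphi_s)_\#\rho$ has velocity governed by the continuity equation: writing $\mu_s=(\varphi_s)_\#\rho$ and $\mu=\mu_0=\varphi_\#\rho$, one has $\partial_s\mu_s|_{s=0}=-\nabla\cdot(w\mu)$. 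Hence $d\pi_\varphi(w\circ\varphi)=-\nabla\cdot(w\mu)$, and therefore $w\circ\varphi\in\ver_\varphi=\ker d\pi_\varphi$ if and only if $\nabla\cdot(w\mu)=0$ (in the weak sense, with the relevant boundary conditions on $\partial\Omega$). This is exactly the claimed description of $\ver_\varphi$; the only care needed is to justify the change of variables $\int(u\circ\varphi)\cdot(v\circ\varphi)\,d\rho=\int u\cdot v\,d\mu$ already recorded in the text, so that the $L^2(\rho)$ pairing on $T_\varphi\Diff(\Omega)$ transports to the $L^2(\mu)$ pairing on vector fields on $\Omega$.

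\medskip

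Next I would identify the horizontal space as the $L^2(\mu)$-orthogonal complement. By the previous step, $\nabla f\circ\varphi$ is orthogonal to every vertical vector $w\circ\varphi$ with $\nabla\cdot(w\mu)=0$ precisely when
\[
\int \nabla f\cdot w\,d\mu=0\quad\text{for all }w\text{ with }\nabla\cdot(w\mu)=0.
\]
Integrating by parts, $\int\nabla f\cdot w\,d\mu=-\int f\,\nabla\cdot(w\mu)+(\text{boundary term})=0$, so every gradient field $\nabla f$ is indeed $L^2(\mu)$-orthogonal to the divergence-free-against-$\mu$ fields; thus $\{\nabla f\circ\varphi\}\subseteq\hor_\varphi$. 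For the reverse inclusion one uses the weighted Helmholtz decomposition: any vector field $v$ on $\Omega$ can be written $v=\nabla f+u/\mu$ with $\nabla\cdot u=0$ (equivalently $v=\nabla f+w$ with $\nabla\cdot(w\mu)=0$), by solving the elliptic (weighted Poisson) problem $\nabla\cdot(\mu\nabla f)=\nabla\cdot(\mu v)$ on $\Omega$ with Neumann-type boundary conditions. This shows $T_\varphi\Diff(\Omega)=\hor_\varphi\oplus\ver_\varphi$ with the two pieces orthogonal, forcing $\hor_\varphi=\{\nabla f\circ\varphi\}$.

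\medskip

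The main obstacle is the functional-analytic bookkeeping rather than any conceptual difficulty: one must fix the correct boundary conditions on $\Omega$ (or, in the variant with integrability conditions replacing compactness, the correct decay hypotheses) so that the integrations by parts produce no boundary contributions, and one must invoke solvability and regularity for the weighted Poisson equation $\nabla\cdot(\mu\nabla f)=g$ with smooth positive density $\mu$ bounded away from $0$ on the compact set $\Omega$. Since the paper explicitly defers the infinite-dimensional manifold structure to the literature (\cite{ebin1970groups}) and presents these as well-known facts, I would state the Helmholtz/elliptic ingredients as citations and keep the derivation at the formal level above, emphasizing the two computations $d\pi_\varphi(w\circ\varphi)=-\nabla\cdot(w\mu)$ and the integration-by-parts orthogonality $\int\nabla f\cdot w\,d\mu=0$, which together pin down both spaces.
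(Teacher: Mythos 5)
Your argument is correct and is exactly the standard one behind this statement: the paper itself states Proposition~\ref{horver} without proof, deferring to \cite{otto2001geometry,khesin2021geometric} and \cite[Section~6.1]{ambrosio2013user}, where the proof proceeds just as you outline — $d\pi_\varphi(w\circ\varphi)=-\nabla\cdot(w\mu)$ via the continuity equation, so $\ver_\varphi=\ker d\pi_\varphi$ is the set of fields with $\nabla\cdot(w\mu)=0$, and the weighted Helmholtz decomposition plus the integration-by-parts orthogonality identifies $\hor_\varphi$ with gradients. Your caveats (boundary conditions making the boundary terms vanish, solvability of the weighted Neumann problem $\nabla\cdot(\mu\nabla f)=\nabla\cdot(\mu v)$) are precisely the technical points the cited references handle, so nothing essential is missing.
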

The following results state that line segments and $L^2(\rho)$-distances in $\Diff(\Omega)$ can be used to compute Wasserstein geodesics and distances in the space of probability measures $\Prob(\Omega)$, provided we restrict to horizontal displacements.
\begin{proposition}\label{prop_app:submersion_Otto}
The projection $\pi\colon\Diff(\Omega)\rightarrow \Prob(\Omega)$ is a Riemannian submersion, i.e. $d\pi_\varphi\colon\mathrm{Hor}_\varphi \rightarrow T_{\pi(\varphi)}\Prob(\Omega)$ is an isometry for any $\varphi\in\Diff(\Omega)$.
\end{proposition}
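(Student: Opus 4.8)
The plan is to verify the two defining properties of a Riemannian submersion for $\pi:\varphi\mapsto\varphi_\#\rho$: that $d\pi_\varphi$ restricted to the horizontal space $\hor_\varphi$ is a linear isomorphism onto $T_{\pi(\varphi)}\Prob(\Omega)$, and that it preserves norms (hence, by polarization, inner products). Throughout, the infinite-dimensional manifold structures on $\Diff(\Omega)$ and $\Prob(\Omega)$ are taken for granted, as in \cite{ebin1970groups, otto2001geometry}.

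First I would compute $d\pi_\varphi$ explicitly. Given a tangent vector $v\circ\varphi\in T_\varphi\Diff(\Omega)$, choose a curve $s\mapsto\varphi_s$ with $\varphi_0=\varphi$ and $\partial_s\varphi_s|_{s=0}=v\circ\varphi$, and set $\mu_s=(\varphi_s)_\#\rho$. Writing $w_s$ for the time-dependent velocity field defined by $w_s\circ\varphi_s=\partial_s\varphi_s$, the family $(\mu_s)$ satisfies the continuity equation $\partial_s\mu_s+\nabla\cdot(w_s\mu_s)=0$ distributionally; evaluating at $s=0$ gives $d\pi_\varphi(v\circ\varphi)=-\nabla\cdot(v\mu)$ with $\mu=\varphi_\#\rho$. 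In particular $\ker d\pi_\varphi=\ver_\varphi=\{w\circ\varphi:\nabla\cdot(w\mu)=0\}$, recovering Proposition \ref{horver}, so the restriction of $d\pi_\varphi$ to $\hor_\varphi=\{\nabla f\circ\varphi:f\in C^\infty(\Omega)\}$ is injective.

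Next I would establish surjectivity and the norm identity together. Recall that a tangent vector to $\Prob(\Omega)$ at $\mu$ is, in Otto's identification, of the form $-\nabla\cdot(\mu\nabla f)$ (gradients being dense in the tangent space), with squared norm $\|{-}\nabla\cdot(\mu\nabla f)\|_\mu^2=\inf\{\int_\Omega\|v\|^2\,d\mu:\ \nabla\cdot(\mu v)=\nabla\cdot(\mu\nabla f)\}$. This immediately makes $d\pi_\varphi$ map $\hor_\varphi$ onto $T_{\pi(\varphi)}\Prob(\Omega)$. For the norm, let $v$ be any competitor and put $u:=v-\nabla f$, so that $\nabla\cdot(\mu u)=0$; integration by parts (using that $\mu$ carries no flux through $\partial\Omega$, or the decay conditions replacing compactness) yields $\int_\Omega\nabla f\cdot u\,d\mu=-\int_\Omega f\,\nabla\cdot(\mu u)\,dx=0$. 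Hence $\int_\Omega\|v\|^2\,d\mu=\int_\Omega\|\nabla f\|^2\,d\mu+\int_\Omega\|u\|^2\,d\mu\ge\int_\Omega\|\nabla f\|^2\,d\mu$, so the infimum is attained at $v=\nabla f$ and $\|d\pi_\varphi(\nabla f\circ\varphi)\|_\mu^2=\int_\Omega\|\nabla f\|^2\,d\mu$. On the other hand $\|\nabla f\circ\varphi\|_{L^2(\rho)}^2=\int_\Omega\|\nabla f\circ\varphi\|^2\,d\rho=\int_\Omega\|\nabla f\|^2\,d\mu$ by the change of variables $\mu=\varphi_\#\rho$. The two quantities agree, so $d\pi_\varphi|_{\hor_\varphi}$ is norm-preserving; polarization upgrades this to an inner-product isometry, which together with surjectivity gives the claimed isometric isomorphism.

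The main obstacle is analytic rather than algebraic: everything hinges on the continuity-equation computation of $d\pi_\varphi$ and on the Helmholtz-type $L^2(\mu)$-orthogonality between gradient fields and $\mu$-divergence-free fields, which is precisely where the boundary behaviour on $\Omega$ (or the integrability conditions on the densities that replace compactness) must be handled with care; selecting $\nabla f$ as the minimal-$L^2(\mu)$-norm representative of a tangent vector is exactly what makes the horizontal lift well defined. The remaining points are routine once the manifold structures are in place.
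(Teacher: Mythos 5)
The paper does not actually prove this proposition: it is presented as a known result, with the proofs deferred to the cited references (Otto; Khesin et al.; Ambrosio--Gigli), so there is no in-paper argument to compare against. Your proof is the standard argument from those references and is correct at the formal level at which the statement is made: you compute $d\pi_\varphi(v\circ\varphi)=-\nabla\cdot(v\mu)$ via the continuity equation, identify $\ker d\pi_\varphi$ with $\ver_\varphi$ (consistent with Proposition~\ref{horver}), and use the $L^2(\mu)$-orthogonality between gradient fields and $\mu$-divergence-free fields to show that $\nabla f$ is the minimal-norm velocity representing $-\nabla\cdot(\mu\nabla f)$, so that the Otto norm of $d\pi_\varphi(\nabla f\circ\varphi)$ equals $\int_\Omega\|\nabla f\|^2\,d\mu=\|\nabla f\circ\varphi\|_{L^2(\rho)}^2$ by the change of variables $\mu=\varphi_\#\rho$. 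Two caveats, which you already flag, are worth keeping explicit: surjectivity onto $T_{\pi(\varphi)}\Prob(\Omega)$ rests on the convention that tangent vectors at $\mu$ are precisely those of the form $-\nabla\cdot(\mu\nabla f)$ (or limits thereof) with the Benamou--Brenier/Otto norm, so in this smooth formal setting that half of the claim is partly definitional; and the integration by parts giving $\int_\Omega\nabla f\cdot u\,d\mu=0$ requires the no-flux boundary behaviour on $\partial\Omega$ (or the integrability/decay hypotheses replacing compactness), which is where the genuine analytic content lies and which a fully rigorous treatment, as in the cited references, must handle.
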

This implies the following. 
\begin{proposition}
[Proposition~\ref{prop:geodesic_otto} in main]
    \label{prop_app:geodesic_otto}
    Any geodesic $t\mapsto\mu(t)$ for the Wasserstein metric in \eqref{monge} is the $\pi$-projection of a line segment in $\Diff(\Omega)$ going through a diffeomorphism $\varphi$ at horizontal speed $\nabla f\circ\varphi$ for some smooth function $f\in\mathcal{C}(\R^d)$. That is, for $t$ defined in a certain interval $(\tmin,\tmax)$,
    \begin{equation}\label{app:geod_otto}
    \mu(t) = \pi(\varphi + t\nabla f\circ\varphi) =(\id+t\nabla f)_\#(\varphi_\#\rho).
    \end{equation}
    Another geodesic $\tilde\mu(t)=\pi(\varphi + t\nabla\tilde f\circ\varphi)$ is orthogonal to $\mu(t)$ at $t=0$ for the Riemannian metric inducing the Wasserstein distance if and only if $\langle \nabla f\circ\varphi, \nabla \tilde f\circ \varphi\rangle_{L^2(\rho)}=0$.
\end{proposition}
We comment on the link between this parametrization and McCann's interpolation in Section~\ref{app:geodesic_param}.

\subsection{The Otto-Wasserstein geometry of Gaussian distributions}
\label{sec:appendix_otto_gauss}

The Bures-Wasserstein distance in \eqref{bures-wasserstein} on the space $S_d^{++}$ of symmetric positive definite (SPD) matrices is the geodesic distance induced by a Riemannian metric $g^{BW}$, which can be written in different ways. Here we use the expression from \cite[Table 4.7]{thanwerdas2022riemannian}, defined for $\Sigma=PDP^\top\in S_d^{++}$ and $U=PU'P^\top \in S_d$, by
\begin{equation}\label{app:bures_riem}
    g^{BW}_\Sigma(U,U)=\frac{1}{2}\sum_{1\leq i,j\leq d}\frac{1}{d_i+d_j}{U'_{ij}}^2,
\end{equation}
where the $d_i$'s are the diagonal elements of $D$. The associated Riemannian geometry can be described by Otto's fiber bundle restricted to the space of centered Gaussian distributions, in the following way.

In this setting, diffeomorphisms are restricted to invertible linear maps $\varphi\colon u\mapsto Au$ for some invertible matrix $A$, i.e. the space of diffeomorphisms is replaced by the Lie group of invertible matrices $GL_d$. Tangent vectors are then given by linear maps $u\mapsto Xu$ for any matrix $X\in \R^{d\times d}$. Fixing the standard normal distribution $\rho=\mathcal N(0, \mathrm{Id})$ as reference measure, the $L^2$-metric with respect to $\rho$ between $u\mapsto Xu$ and $u\mapsto Yu$ is then written, for any  $X,Y\in\R^{d\times d}$: 
\begin{align*}
\int_{\R^d} \varphi(u)^\top\psi(u)d\rho(u)=\int_{\R^d} \tr(\varphi(u)\psi(u)^\top)d\rho(u)=\tr\left(\int_{\R^d}Xuu^\top Y^\top d\rho(u)\right)= \tr(XY^\top),
\end{align*}
yielding the standard Frobenius inner product on (the tangent space of) $GL_d$. We obtain a fibration of the top space $GL_d$ over the bottom space $S_d^{++}$ by considering the following projection
\begin{equation}
    \label{app:proj_pi}
    \pi\colon GL_d\rightarrow S_d^{++},\quad  A\mapsto AA^\top,
\end{equation}
see Figure \ref{fig:otto_geometry_annex} (\textbf{left}). The fiber over $\Sigma\in S_d^{++}$ is
\begin{equation}\label{app:fiber-gaussians}
\pi^{-1}(\Sigma)=\{A\in GL_d,\,\, AA^\top=\Sigma\}=\Sigma^{1/2}O_d,
\end{equation}
where $O_d$ denotes the space of orthogonal matrices and $\Sigma^{1/2}$  denotes the only SPD square root of the SPD matrix $\Sigma$. The differential of the projection $\pi(A)=AA^\top$ is given by
\begin{equation}\label{eq_app:dpi}
    d\pi_A(X)=XA^\top + A X^\top.
\end{equation}
Therefore, vertical vectors, which are those tangent to the fibers, or equivalently, those belonging to the kernel of $d\pi_A(X)$, are given by
\begin{align*}
\ver_A&\colon=\{X\in\R^{d\times d}, \, XA^\top+AX^\top=0\}\\
&=\{X\in\R^{d\times d},\, XA^\top \text{ is antisymmetric}\}\\
&=\{X=K (A^\top)^{-1},\, K\in S_d^\perp\}=S_d^\perp(A^\top)^{-1}.
\end{align*}
where $S_d^\perp$ denotes the space of antisymmetric matrices of size $d$. Once again, moving along vertical vectors in $GL_d$ means staying in the same fiber, i.e.  projecting always to the same SPD matrix in the bottom space $S_d^{++}$. Horizontal vectors are those that are orthogonal to all vertical vectors (for the Frobenius metric), i.e. matrices $X$ such that for any antisymmetric matrix $K$:
\[0=\langle X,K(A^\top)^{-1}\rangle=\tr(XA^{-1} K^\top)\]
which is equivalent to $XA^{-1}$ symmetric (this can be seen by taking for $K$ the basis elements of $S_d^\perp$ in the above equation), yielding
\begin{align*}
\hor_A&\colon=\{X\in\R^{d\times d}, \,(A^\top)^{-1}X^\top=XA^{-1}\}\\
&=\{X\in\R^{d\times d}, \, X^\top A-A^\top X=0\}\\
&=\{X=KA, \, K \in S_d\}=S_d A
\end{align*}
where $S_d$ denotes the space of symmetric matrices.
\begin{proposition}\label{prop_app:submersion_gaussian}
The projection $\pi\colon GL_d\rightarrow S_d^{++}$, $A\mapsto AA^\top$ is a Riemannian submersion, i.e. $d\pi_A$ is an isometry from $\mathrm{Hor}_A$ equipped with the Frobenius inner product to $T_{\pi(A)}S_d^{++}$ equipped with the inner product $g_{\pi(A)}^{BW}$, for any $A\in GL_d$.
\end{proposition}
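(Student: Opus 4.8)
The plan is to verify directly the two defining properties of an isometry of inner product spaces: that $d\pi_A$ restricted to $\hor_A$ is a linear bijection onto $T_{\pi(A)}S_d^{++}$, and that it preserves norms, hence (by polarization) inner products. First I would observe that $S_d^{++}$ is open in the vector space $S_d$ of symmetric matrices, so $T_{\pi(A)}S_d^{++}=S_d$. Using the description $\hor_A=S_dA$ recalled above together with the differential formula \eqref{eq_app:dpi}, for a horizontal vector $X=KA$ with $K\in S_d$ one gets
\[
d\pi_A(X)=KAA^\top+AA^\top K^\top=K\Sigma+\Sigma K,\qquad \Sigma:=\pi(A)=AA^\top .
\]
Thus, restricted to $\hor_A$, the differential is conjugate to the Lyapunov operator $\mathcal{L}_\Sigma:K\mapsto K\Sigma+\Sigma K$ acting on $S_d$; it clearly maps $S_d$ into $S_d$, and since $\Sigma$ is positive definite its eigenvalues (read off below in a simultaneously diagonalizing basis) are the strictly positive numbers $d_i+d_j$, so $\mathcal{L}_\Sigma$ is a linear isomorphism of $S_d$. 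This gives bijectivity of $d\pi_A:\hor_A\to T_{\pi(A)}S_d^{++}$.

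For the norm identity I would diagonalize $\Sigma=PDP^\top$ and write $K=P\tilde K P^\top$ with $\tilde K\in S_d$. Then $U:=d\pi_A(X)=P(\tilde K D+D\tilde K)P^\top$, so in the notation of \eqref{app:bures_riem} one has $U'_{ij}=(d_i+d_j)\tilde K_{ij}$, whence
\[
g^{BW}_\Sigma(U,U)=\frac12\sum_{i,j}\frac{1}{d_i+d_j}\,(d_i+d_j)^2\,\tilde K_{ij}^2=\frac12\sum_{i,j}(d_i+d_j)\,\tilde K_{ij}^2 .
\]
On the other hand, using cyclicity and conjugation-invariance of the trace and the symmetry of $K$ (hence of $\tilde K$),
\[
\|X\|^2=\tr(XX^\top)=\tr(K\Sigma K)=\tr(\Sigma K^2)=\tr(D\tilde K^2)=\sum_{i,j}d_i\,\tilde K_{ij}^2 .
\]
Relabeling $i\leftrightarrow j$ and using $\tilde K_{ij}=\tilde K_{ji}$ shows $\sum_{i,j}d_i\tilde K_{ij}^2=\tfrac12\sum_{i,j}(d_i+d_j)\tilde K_{ij}^2$, which is exactly $g^{BW}_\Sigma(U,U)$. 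Polarizing the resulting identity $\|X\|^2=g^{BW}_{\pi(A)}(d\pi_A X,d\pi_A X)$ over the linear space $\hor_A$ shows that $d\pi_A$ preserves inner products, so it is an isometry; combined with the bijectivity above, this proves the proposition. (One may also note that, by Proposition~\ref{prop_app:submersion_gaussian}, the Frobenius geodesic distance statement of Proposition~\ref{prop:gaussian_geodesic} follows, but that is not needed here.)

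The main obstacle is essentially bookkeeping rather than conceptual: one must carry out the simultaneous diagonalization carefully so that the Lyapunov operator becomes Hadamard multiplication by the matrix $(d_i+d_j)_{i,j}$, and then recognize that the apparently asymmetric trace expression $\sum_{i,j}d_i\tilde K_{ij}^2$ symmetrizes — precisely because $\tilde K$ is symmetric — into the weighted sum appearing in the Bures metric \eqref{app:bures_riem}. One should also take a moment to check that $\mathcal{L}_\Sigma$ genuinely lands in $S_d$ and is injective there, which is immediate once its eigenvalues $d_i+d_j>0$ are identified.
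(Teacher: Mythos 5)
Your proof is correct. Note first that the paper itself does not prove this statement: it is presented as a known result, with the reader referred to the literature (Takatsu, Malag\`o--Pistone, Bhatia et al.) for details, so your argument is compared against that standard proof rather than against anything in the text. Your route is the expected direct verification and it is carried out soundly: with $X=KA$, $K\in S_d$, the differential formula \eqref{eq_app:dpi} gives $d\pi_A(X)=K\Sigma+\Sigma K$ with $\Sigma=\pi(A)$, the Lyapunov operator on $S_d$ is invertible because its eigenvalues $d_i+d_j$ are positive (which also settles surjectivity of $d\pi_A$ onto $T_{\pi(A)}S_d^{++}=S_d$, needed for the submersion property), and the norm computation in a diagonalizing basis, using $U'_{ij}=(d_i+d_j)\tilde K_{ij}$ together with the symmetrization $\sum_{i,j}d_i\tilde K_{ij}^2=\tfrac12\sum_{i,j}(d_i+d_j)\tilde K_{ij}^2$ (valid precisely because $\tilde K$ is symmetric), matches the Bures--Wasserstein expression \eqref{app:bures_riem} exactly; polarization then upgrades the norm identity to an inner-product identity. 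What your self-contained computation buys over the paper's citation is an explicit check that the metric of \eqref{app:bures_riem} is the one induced by the quotient construction, i.e.\ that $\hor_A$ with the Frobenius product is mapped isometrically by $d\pi_A$; the only cosmetic blemish is the closing parenthetical, which cites the proposition being proved and should simply be dropped.
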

Just like in the general case, this yields a way to lift the computation of geodesics and distances.
\begin{proposition}[Propositon \ref{prop:gaussian_geodesic} in main]\label{prop_app:gaussian_geodesic}
Any geodesic $t\mapsto\Sigma(t)$ in $S_d^{++}$ for the Bures-Wasserstein metric in \eqref{bures-wasserstein} is the $\pi$-projection of a horizontal line segment in $GL_d$, that is
    \begin{equation}\label{app:geod_gauss}
    \Sigma(t)=\pi(A+tX)=(A+tX)(A+tX)^\top, \quad A\in GL_d, \,\, X\in \hor_A,
    \end{equation}
    where $t$ is defined in a certain time interval $(\tmin, \tmax)$. Also, the Bures-Wasserstein distance between two covariance matrices $\Sigma_1,\Sigma_2\in S_d^{++}$ is given by the minimal distance between their fibers
    \begin{equation}\label{eq_app:bures_otto}
        BW_2(\Sigma_1,\Sigma_2) = \inf_{Q_1,Q_2\in O_d} \ \Vert \Sigma_1^{1/2}Q_1-\Sigma_2^{1/2}Q_2\Vert = \inf_{Q\in SO_d} \ \Vert \Sigma_1^{1/2}-\Sigma_2^{1/2}Q\Vert,
    \end{equation}
where $\|\cdot\|$ is the Frobenius norm and $SO_d$ is the special orthogonal group.
\end{proposition}
Formula in \eqref{app:geod_gauss} and the first equality of \eqref{eq_app:bures_otto} are direct consequences of the fact that $\pi$ is a Riemannian submersion. To obtain the second equality of \eqref{eq_app:bures_otto}, we first notice that optimizing on $Q_1, Q_2\in O_d$ is equivalent to optimizing on a single $Q\in O_d$ thanks to the invariance of the Frobenius metric w.r.t. the right action of $O_d$. And second, that the infimum is attained at (see \cite[Equations 3 and 35]{bhatia2019bures}) 
$$Q^\ast=\Sigma_2^{-1/2}T\Sigma_1^{1/2}, \quad\text{where}\quad T=\Sigma_1^{-1/2}(\Sigma_1^{1/2}\Sigma_2\Sigma_1^{1/2})^{1/2}\Sigma_1^{-1/2}$$
is the Monge map from $\Sigma_1$ to $\Sigma_2$ (see \cite[equation 8]{malago2018wasserstein}), and so $Q^\ast$ has positive determinant and belongs to $SO_d$.

Thus the closest element of the fiber $\pi^{-1}(\Sigma_2)$ to $\Sigma_1^{1/2}$ is given by $\Sigma_2^{1/2}Q^\ast=T\Sigma_1^{1/2}$, i.e. by left multiplying $\Sigma_1^{1/2}$ by the Monge map $T$. This is more generally true for any representative of $\Sigma_1$:
\begin{proposition}\label{prop_app:gaussian_align}
Let $\Sigma_1, \Sigma_2\in S_d^{++}$, $T$ the Monge map from $\Sigma_1$ to $\Sigma_2$, $A_1\in\pi^{-1}(\Sigma_1)$. Then $A_2:=TA_1$ is said to be \emph{aligned} with respect to $A_1$, that is, it is the closest point in $\pi^{-1}(\Sigma_2)$ to $A_1$. More precisely, we have
\begin{enumerate}
    \item $A_2-A_1=(T-I)A_1\in\hor_{A_1}$
    \item $\mathrm{Log}_{\Sigma_1}(\Sigma_2)\colon=d\pi_{A_1}((T-I)A_1)=(T-I)\Sigma_1+\Sigma_1(T-I)$ 
    \item $BW_2(\Sigma_1,\Sigma_2)=\|\mathrm{Log}_{\Sigma_1}\Sigma_2\|^{BW}_{\Sigma_1}=\|(T-I)A_1\|$
\end{enumerate}
where $\mathrm{Log}$ is the Riemannian logarithm map, $\|\cdot\|_\Sigma^{BW}=\sqrt{g^{BW}_\Sigma(\cdot,\cdot)}$ and $\|\cdot\|$ is the Frobenius norm.
\end{proposition}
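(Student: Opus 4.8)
The plan is to establish the three items in order, leaning on three facts from the preceding material: (i) the Monge map $T=\Sigma_1^{-1/2}(\Sigma_1^{1/2}\Sigma_2\Sigma_1^{1/2})^{1/2}\Sigma_1^{-1/2}$ is symmetric positive definite, being of the form $B^\top C B$ with $B=\Sigma_1^{-1/2}$ symmetric and $C=(\Sigma_1^{1/2}\Sigma_2\Sigma_1^{1/2})^{1/2}$ SPD; (ii) the explicit differential $d\pi_{A_1}(X)=XA_1^\top+A_1X^\top$ from \eqref{eq_app:dpi} and the characterization $\hor_{A_1}=\{X:X^\top A_1-A_1^\top X=0\}$; and (iii) the Riemannian submersion property of Proposition~\ref{prop_app:submersion_gaussian}, together with the lifting of Bures--Wasserstein geodesics to horizontal line segments (Proposition~\ref{prop_app:gaussian_geodesic}).

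For item 1, I set $X:=(T-I)A_1$ and check horizontality: $X^\top A_1-A_1^\top X=A_1^\top(T^\top-T)A_1=0$ since $T=T^\top$, so $X\in\hor_{A_1}$. To see that $A_2=TA_1$ is the point of the fiber $\pi^{-1}(\Sigma_2)=\Sigma_2^{1/2}O_d$ closest to $A_1$, I write $A_1=\Sigma_1^{1/2}R$ with $R\in O_d$ (by \eqref{app:fiber-gaussians}) and use the right $O_d$-invariance of the Frobenius norm to rewrite $\inf_{Q\in O_d}\|A_1-\Sigma_2^{1/2}Q\|=\inf_{Q'\in O_d}\|\Sigma_1^{1/2}-\Sigma_2^{1/2}Q'\|=BW_2(\Sigma_1,\Sigma_2)$, the substitution being $Q'=QR^\top$. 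By the computation recalled just before the statement (after \cite{bhatia2019bures}), the minimizer is $Q'^{*}=\Sigma_2^{-1/2}T\Sigma_1^{1/2}$, so the minimizing fiber element is $\Sigma_2^{1/2}Q'^{*}R=T\Sigma_1^{1/2}R=TA_1=A_2$.

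For item 2, substituting $X=(T-I)A_1$ into \eqref{eq_app:dpi} and using $A_1A_1^\top=\Sigma_1$ and $T=T^\top$ gives $d\pi_{A_1}((T-I)A_1)=(T-I)\Sigma_1+\Sigma_1(T-I)$. To identify this with $\Log_{\Sigma_1}\Sigma_2$, I observe that $t\mapsto(1-t)I+tT$ is a convex combination of SPD matrices, hence stays in $S_d^{++}$, so $t\mapsto\bigl((1-t)I+tT\bigr)A_1=A_1+t(T-I)A_1$ is a genuine line segment in $\GL_d$ which is horizontal at $A_1$ by item 1. By Proposition~\ref{prop_app:gaussian_geodesic}, its $\pi$-image is the Bures--Wasserstein geodesic joining $\pi(A_1)=\Sigma_1$ (at $t=0$) to $\pi(TA_1)=T\Sigma_1T=\Sigma_2$ (at $t=1$), and its velocity at $t=0$, namely $d\pi_{A_1}((T-I)A_1)$, is by definition the Riemannian logarithm $\Log_{\Sigma_1}\Sigma_2$. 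For item 3, since $d\pi_{A_1}$ restricts to a linear isometry $\hor_{A_1}\to T_{\Sigma_1}S_d^{++}$ (Proposition~\ref{prop_app:submersion_gaussian}) and $(T-I)A_1\in\hor_{A_1}$, we get $\|\Log_{\Sigma_1}\Sigma_2\|^{BW}_{\Sigma_1}=\|(T-I)A_1\|$; matching the right-hand side with $BW_2(\Sigma_1,\Sigma_2)$ is then the direct computation $\|(T-I)A_1\|^2=\tr\bigl((T-I)\Sigma_1(T-I)\bigr)=\tr(T\Sigma_1T)-2\tr(T\Sigma_1)+\tr(\Sigma_1)$, where $\tr(T\Sigma_1T)=\tr(\Sigma_2)$ and, from the explicit form of $T$ and trace cyclicity, $\tr(T\Sigma_1)=\tr\bigl((\Sigma_1^{1/2}\Sigma_2\Sigma_1^{1/2})^{1/2}\bigr)$, reproducing exactly \eqref{bures-wasserstein}. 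The only delicate point in the whole argument is the appeal to Proposition~\ref{prop_app:gaussian_geodesic} in item 2 — one must be sure the chosen segment really is a parametrization of the Bures--Wasserstein geodesic from $\Sigma_1$ to $\Sigma_2$, so that its initial velocity is genuinely the logarithm; everything else is routine linear algebra.
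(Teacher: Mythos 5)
Your proof is correct and follows exactly the route the paper gestures at: the paper states this proposition without proof (as a well-known result), but the closest-point computation via the minimizer $Q^\ast=\Sigma_2^{-1/2}T\Sigma_1^{1/2}$ for the representative $\Sigma_1^{1/2}$ is sketched immediately before the statement, and you correctly extend it to an arbitrary $A_1\in\pi^{-1}(\Sigma_1)$ by right $O_d$-invariance of the Frobenius norm, then obtain items 2--3 from the explicit differential $d\pi_{A_1}$, the fact that $T\Sigma_1T=\Sigma_2$, and the submersion isometry of Proposition~\ref{prop_app:submersion_gaussian}. The only cosmetic point is that Proposition~\ref{prop_app:gaussian_geodesic} is literally stated in the direction ``geodesic $\Rightarrow$ horizontal lift'' while you invoke the converse ``horizontal segment $\Rightarrow$ geodesic''; this converse is standard for Riemannian submersions and is used in exactly the same way by the paper itself (e.g.\ in the proof of Proposition~\ref{prop_app:equivalent_problem}), so it is not a gap.
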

This means that to compute the Bures-Wasserstein distance between two covariance matrices $\Sigma_1$ and $\Sigma_2$, one can consider any representative $A_1$ in the fiber over $\Sigma_1$, compute the representative $A_2$ of $\Sigma_2$ aligned to $A_1$ (using the Monge map) and finally compute the Frobenius norm of $A_2-A_1$.
\begin{figure}
    \begin{minipage}[c]{0.45\linewidth}
    \centering
    \includegraphics[scale = 0.5]{figures/fig_KMM_geo_gauss.pdf}
    \end{minipage} \hfill
    \begin{minipage}[c]{0.45\linewidth}
    \centering
    \includegraphics[scale = 0.5]{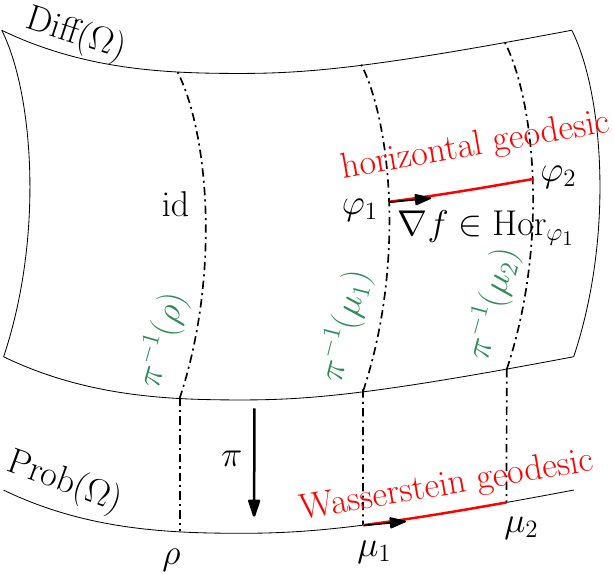}
    \end{minipage}
    \caption{The Otto-Wasserstein geometry of \textbf{(left)} centered Gaussian distributions and \textbf{(right)} a.c. probability distributions.  Figures inspired by \cite{khesin2021geometric}.}
    \label{fig:otto_geometry_annex}
\end{figure}

\subsection{Geodesic parametrization}\label{app:geodesic_param}

There are two classical parameterizations for Wasserstein geodesics in the space of a.c. probability measures.
\paragraph{McCann's interpolation} 
    The first one, due to \cite{mccann1997convexity}, is given between two probability distributions $\mu_0$ and $\mu_1$, and depends on the optimal transport map in \eqref{monge}, obtained as the gradient of a convex function $u$, that is $T_{\mu_0}^{\mu_1}=\nabla u$ and 
    \begin{equation}\label{geod1}
    \mu_t=((1-t)\id+t\nabla u)_\#\mu_0=(\id+t(\nabla u-\id))_\#\mu_0, \quad t\in[0,1].
    \end{equation}

\paragraph{Otto's geodesic}    
    The second one, exploiting Otto's fiber bundle geometry in \cite{otto2001geometry}, consists in writing a geodesic in the Wasserstein space as the projection of a horizontal geodesic in the total space of diffeomorphisms. Such a horizontal geodesic is a line segment going through a diffeomorphism $\varphi$ with a horizontal speed $\nabla f\circ\varphi$, where $f$ is any smooth function (not necessarily convex). Therefore we get
    \begin{equation}\label{geod2}
    \mu_s = (\varphi + s\nabla f\circ\varphi)_\#\rho=(\id+s\nabla f)_\#(\varphi_\#\rho), \quad s\in(s_0,s_1).
    \end{equation}
    In this second expression, the bounds on the time $s$ depends on the function $f$. Indeed, for $\mu_s$ to be a geodesic, $\id+s\nabla f$ needs to remain is the space of diffeomorphisms for a given $s$, which means that $\id + s\mathrm{Hess}\,f$ needs to be positive definite. Therefore, we get the following conditions depending on the minimum $\lambda_\text{min}$ and maximum $\lambda_\text{max}$ eigenvalues of $\mathrm{Hess}\,f$:
    \begin{equation}\label{eq_app:time_def}
    \begin{cases}
    s\in(-\infty, -1/\lambda_{\text{min}})\quad \text{if}\quad \lambda_{\text{max}}<0,\\
    s\in(-1/\lambda_{\text{max}},+\infty)\quad\text{if} \quad \lambda_{\text{min}}>0,\\
    s\in(-1/\lambda_{\text{max}},-1/\lambda_{\text{min}})\quad\text{if}\quad \lambda_{\text{min}}<0<\lambda_{\text{max}}.
    \end{cases}
    \end{equation}

It is clear that \eqref{geod1} is a particular case of \eqref{geod2}, where we choose $\varphi_\#\rho=\mu_0$ and $\nabla f=\nabla u-\id$. Conversely, one can write \eqref{geod2} under the form of \eqref{geod1}. For a given diffeomorphism $\varphi$ and function $f$, consider the geodesic given by \eqref{geod2}, and set $\mu_0=\varphi_\#\rho$. Assume that we are in the case where all eigenvalues of $\mathrm{Hess}\,f$ are negative, then $s$ must be in $]-\infty,-1/\lambda_{\text{min}}[$. Consider $s^*\in]0,-1/\lambda_{\text{min}}[$, and define $\mu_1\colon=\mu_{s^*}=(\id+s^*\nabla f)_\#\mu_0$. Setting $t=s/s^*$ we have that the geodesic between $\mu_0$ and $\mu_1$ is written
\[\mu_t=(\id+ts^*\nabla f)_\#\mu_0=(\id+t(\nabla u-\id))_\#\mu_0, \quad t\in[0,1].\]
for $u(x)=s^*f+\|x\|^2/2$. Now for any eigenvalue $\lambda_i$ of $H_f$ the Hessian of $f$, we have
\[\lambda_i> \lambda_{\text{min}}>-1/s^* \quad\text{i.e.}\quad s^*\lambda_i+1>0.\]
by the interval of definition of $s^*$. This means that the Hessian $H_u=s^*H_f+\id$ is positive definite, which means that $u$ is necessarily convex. The other cases work similarly.

\paragraph{The Gaussian case} Transposing Otto's formulation in \eqref{geod2} to the case of a geodesic between Gaussian distributions means that for $A\in GL_d$ and $X\in\hor_A$ such that $\Vert X\Vert=1$, the interval of definition of a geodesic depends on the invertibility of $A+sX$. In turn, the maximal interval of definition of $s\in(s_0,s_1)$ is defined from the eigenvalues of $XA^{-1}$, through the same formula in \eqref{eq_app:time_def}.

\section{Linearized optimal transport and tangent PCA}\label{app:TPCA}

In this section, we provide the definition of linearized Wasserstein distance and details on how to perform tangent PCA for both Gaussian distributions and general a.c. distributions. Tangent PCA is a widely used approach to compute PCA on the Wasserstein space, that consists in embedding probability distributions into the tangent space at some reference measure $\rho$, and performing PCA in the tangent space with respect to the linearized Wasserstein distance. 

\subsection{The case of centered Gaussian distributions}

We consider $n$ covariance matrices $\Sigma_1,\hdots,\Sigma_n$ and their Bures-Wasserstein barycenter (or Fréchet mean) $\bar{\Sigma}$, that is, the SPD matrix verifying (see \cite{agueh2011barycenters}):
\begin{equation}\label{app:bures_barycenter}
    \bar\Sigma = \underset{\Sigma\in\SPD}{\argmin}\ \sum_{i=1}^n BW_2^2(\Sigma,\Sigma_i).
\end{equation}
The idea behind tangent PCA is to represent each data point by the corresponding tangent vector, given by the Riemannian logarithm map, in the tangent space at the reference point $\bar{\Sigma}$, i.e.
\begin{equation}\label{eq_app:log_maps}
    \{\mathrm{Log}_{\bar\Sigma}\Sigma_i\}_{i=1}^n\subset T_{\bar\Sigma}S_d^{++}.
\end{equation}
Now, one can lift the computations from the tangent space at $\bar\Sigma$ to the horizontal space at a point in the fiber over $\bar\Sigma$, say $A\colon=\bar\Sigma^{1/2}$, by aligning all representatives to $A$, see Proposition~\ref{prop_app:gaussian_align}. The key point is that the tangent space at $\bar\Sigma$ equipped with the Bures-Wasserstein Riemannian metric is isometric to $\hor_{A}\colon=S_dA$ equipped with the Frobenius inner product -- where we recall that $S_d$ is the space of symmetric matrices. This means that instead of performing PCA for the Bures-Wasserstein inner product on the tangent vectors in \eqref{eq_app:log_maps}, we can instead perform linear PCA on their pre-images by $d\pi_{A}$, see Proposition~\ref{prop_app:gaussian_align}:
$$\{(T_i-I)A\}_{i=1}^n\subset \hor_{A_1}, \quad\text{where}\quad T_i=\Sigma_i^{-1/2}(\Sigma_i^{1/2}\bar\Sigma\Sigma_i^{1/2})^{1/2}\Sigma_i^{-1/2}.$$
$T_i$ is the optimal transport map from $\bar\Sigma$ to $\Sigma_i$, see Section~\ref{sec:appendix_otto_gauss}. 
Now, noticing that
\[\langle K_1A,K_2A\rangle=\trace(K_1AA^\top K_2^\top)=\trace(K_1\bar\Sigma K_2^\top), \quad \forall K_1,K_2\in S_d,\]
we see that the space $\hor_{A}$ equipped with the Frobenius inner product is itself isometric to $S_d$ equipped with the Frobenius inner product weighted by $\bar\Sigma$. Therefore, tangent PCA is performed through Euclidean PCA on the (centered) vectors $\{T_i-I\}_{i=1}^n$, in the vector space $S_d$, with respect to the Frobenius metric weighted by $\bar\Sigma$. Another way to see this is by noticing that the linearized Bures-Wasserstein distance $BW_{2,\bar \Sigma}$ with respect to $\bar\Sigma$ is given by
\begin{align*}\label{eq:linearized_bures}
BW_{2,\bar \Sigma}(\Sigma_1,\Sigma_2) &\colon= \|\mathrm{Log}_{\bar\Sigma}\Sigma_1-\mathrm{Log}_{\bar\Sigma}\Sigma_2\|^{BW}_{\bar\Sigma}\\
&=\|d\pi_{\bar\Sigma^{1/2}}((T_1-I)\bar\Sigma^{1/2})-d\pi_{\bar\Sigma^{1/2}}((T_2-I)\bar\Sigma^{1/2}\|^{BW}_{\bar\Sigma}\\
&=\|(T_1-I)\bar\Sigma^{1/2}-(T_2-I)\bar\Sigma^{1/2}\|\\
&=\|(T_1-T_2)\bar\Sigma^{1/2}\|
\end{align*}
where $\|\cdot\|^{BW}$ denotes the norm associated to the Bures Wasserstein Riemannian metric in \eqref{app:bures_riem}, $\pi$ is Otto's projection in \eqref{app:proj_pi}, and we have used Propositions~\ref{prop_app:submersion_gaussian} and \ref{prop_app:gaussian_align}. Finally,
\begin{equation}\label{eq:linearized_bures}
BW_{2,\bar \Sigma}(\Sigma_1,\Sigma_2) \colon= \|\mathrm{Log}_{\bar\Sigma}\Sigma_1-\mathrm{Log}_{\bar\Sigma}\Sigma_2\|^{BW}_{\bar\Sigma}=\Vert T_1 - T_2\Vert_{\bar\Sigma},
\end{equation}
where $\|\cdot\|_{\bar\Sigma}$ denotes the Frobenius norm weighted by $\bar\Sigma$.
\subsection{The case of a.c. distributions}

Similarly, one can embed a.c. probability distributions $\nu_1,\ldots,\nu_n$ into the $L^2(\rho)$ space at some a.c. reference measure $\rho$ through the optimal maps $\nu_i\mapsto T_{\rho}^{\nu_i}$ in the Monge problem in \eqref{monge}.
Then, the Wasserstein distance can be approximated by the linearized Wasserstein distance in \cite{wang2013linear} given by
\begin{equation}\label{eq:LOT}
    W_{2,\rho}(\nu_1,\nu_2) = \Vert T_{\rho}^{\nu_1}-T_{\rho}^{\nu_2}\Vert_{L^2(\rho)}.
\end{equation}
Note that as previously mentioned, this metric induces distortions : while the radial distances from $\rho$ to any $\mu_i$ are preserved, that is $\|\textrm{id} - T_{\rho}^{\nu_i}\|_{L^2(\rho)}=W_2(\rho, \nu_i)$, other distances are not $\|T_{\rho}^{\nu_1}-T_{\rho}^{\nu_2}\|_{L^2(\rho)}\neq W_2(\nu_1, \nu_2)$. A recent paper by \cite{letrouit2024gluing} proved however, that under some assumptions, $W_{2,\rho}$ is bi-H\"older equivalent to $W_2$, which indicates that the distortion effect can be controlled.

Then, denoting $\bar{\nu}_n$ the Wasserstein barycenter as in \cite{agueh2011barycenters} of $\nu_1,\ldots,\nu_n$, that is the solution of 
\begin{equation}\label{app:wasserstein_barycenter}
    \bar{\nu}_n \in \underset{\nu}{\argmin}\ \sum_{i=1}^n W_2^2(\nu,\nu_i),
\end{equation}
tangent PCA consists in performing classical PCA, see e.g. \cite{ramsay2002applied}, of $(T_{\bar{\nu}_n}^{\nu_i}-\id)_{i=1}^n$ in the Hilbert space $L^2(\bar{\nu}_n).$

\section{Geodesic PCA for Gaussian distributions}
\label{sec:appendix_GPCA_gauss}

In this section, we present the proofs related to geodesic PCA for Gaussian distributions and the implementation of our algorithm in this case.

\subsection{Proofs related to GPCA for Gaussian distributions}\label{app:proof_gauss}

We first prove the existence of mimimizers for the GPCA problems lifted to Otto's fiber bundle.

\begin{lemma}\label{lemma_app:existence_min}
The GPCA problem in \eqref{eq:1st_gaussian} for the first component admits a global minimum.
\end{lemma}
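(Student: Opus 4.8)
The plan is to invoke the direct method of the calculus of variations: show that the objective $F$ is continuous on the feasible set, that the feasible set may be replaced by a compact one without changing the infimum, and conclude that the infimum is attained. The first observation is that $F(A_1,X_1,(Q_i)_{i=1}^n)=\sum_{i=1}^n\|A_1+p_{A_1,X_1}(t_i)X_1-\Sigma_i^{1/2}Q_i\|^2$ depends continuously on all its arguments: the map $(A,X)\mapsto p_{A,X}(t)$ is continuous (the clipping bounds $\tmin,\tmax$ depend continuously on the eigenvalues of $XA^{-1}$, which are continuous in $(A,X)$ on $GL_d\times\R^{d\times d}$), the projection times $t_i=\langle\Sigma_i^{1/2}Q_i-A_1,X_1\rangle$ are continuous, and the composition of all these with the Frobenius norm is continuous.

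Next I would reduce to a compact domain. The constraints $\|X_1\|^2=1$ and $Q_i\in SO_d$ already confine $X_1$ to the unit sphere of $\R^{d\times d}$ and each $Q_i$ to the compact group $SO_d$; the only non-compact factor is $A_1\in GL_d$. To handle it, fix any feasible point and let $M$ be the corresponding value of $F$; it suffices to minimize over the sublevel set $\{F\le M\}$. On this set one has, for each $i$, $\|A_1+p_{A_1,X_1}(t_i)X_1-\Sigma_i^{1/2}Q_i\|\le\sqrt M$, hence $\|A_1\|\le\sqrt M+\|\Sigma_i^{1/2}\|+|p_{A_1,X_1}(t_i)|$, so $A_1$ lies in a bounded region of $\R^{d\times d}$ once the projected times are controlled — and the projected times are bounded because, by the triangle inequality, $|t_i|=|p_{A_1,X_1}(t_i)|\cdot\text{(something)}$; more directly, since $A_1+p_{A_1,X_1}(t_i)X_1$ is within $\sqrt M$ of the fixed matrix $\Sigma_i^{1/2}Q_i$ of bounded norm, and $\|X_1\|=1$, both $\|A_1\|$ and $|p_{A_1,X_1}(t_i)|$ stay bounded. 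Thus the sublevel set projects into a bounded subset of $\R^{d\times d}\times S(\R^{d\times d})\times (SO_d)^n$.

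The subtle point — and the step I expect to be the main obstacle — is that the closure of this bounded set may contain points where $A_1$ becomes singular, i.e. leaves $GL_d$; one must rule out that a minimizing sequence converges to a non-invertible limit. The key is that $F$ is \emph{coercive} near the boundary of $GL_d$ in the relevant sense: as $A_1$ degenerates, the clipping interval $[\tmin,\tmax]$ collapses in a controlled way (cf.\ Appendix~\ref{app:geodesic_param}, where $\tmin,\tmax$ are governed by $-1/\lmax,-1/\lmin$ of the eigenvalues of $X_1A_1^{-1}$), so the geodesic $\pi(A_1+tX_1)$ it parametrizes stays a fixed Bures--Wasserstein geodesic segment; one can therefore argue at the level of the \emph{geodesic} (the actual object of interest) rather than the representative. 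Concretely, I would show that along a minimizing sequence one may, without increasing $F$, replace $(A_1,X_1)$ by another representative $(A_1Q,X_1Q)$ of the same geodesic with $A_1Q$ bounded away from singularity — e.g.\ by re-centering the segment so that its basepoint projects to a fixed reference such as a Bures--Wasserstein barycenter of the data, using Proposition~\ref{prop:gaussian_geodesic} / \ref{prop_app:gaussian_align} — after which a standard Bolzano--Weierstrass extraction yields a convergent subsequence with limit in the feasible set, and continuity of $F$ gives that the limit is a global minimizer. A clean alternative is to prove that the induced infimum over \emph{geodesics} in $S_d^{++}$ equals the infimum of $F$, establish compactness of an appropriate set of geodesic segments (bounded length, basepoints in a compact set), and lift a minimizing geodesic back to $GL_d$ via Proposition~\ref{prop:gaussian_geodesic}; this bypasses the boundary issue entirely at the cost of a short argument identifying the two infima.
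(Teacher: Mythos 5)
Your overall strategy---continuity of $F$ plus a compactness reduction and a Weierstrass argument---is the same in spirit as the paper's proof, which establishes continuity of $F$ (using eigenvalue continuity for the clipping map $p_{A,X}$) and then concludes via coercivity of $F$ together with closedness of the horizontality constraint and compactness of $\mathbb{B}=\{\|X\|=1\}$ and $(SO_d)^n$. However, your key quantitative step fails as stated. From $\|A_1+p_{A_1,X_1}(t_i)X_1-\Sigma_i^{1/2}Q_i\|\le\sqrt{M}$ you only control the combination $A_1+p_{A_1,X_1}(t_i)X_1$, not $\|A_1\|$ and $|p_{A_1,X_1}(t_i)|$ separately, and in fact the sublevel sets of $F$ are unbounded in $A_1$: take $X_1=I/\sqrt{d}$, $A_1=sX_1$ and any fixed $Q_i$; then $X_1\in\hor_{A_1}$, the eigenvalues of $X_1A_1^{-1}$ all equal $1/s$, the clipped interval \eqref{app:interval_tmin_tmax} is $[-s+\varepsilon,+\infty)$, and $A_1+p_{A_1,X_1}(t_i)X_1=\max\bigl(\tr(\Sigma_i^{1/2}Q_i)/\sqrt{d},\,\varepsilon\bigr)I/\sqrt{d}$ is independent of $s$, so $F$ stays constant while $\|A_1\|=s\to\infty$. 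More generally, sliding the basepoint along the line, $A_1\mapsto A_1+sX_1$, preserves horizontality and the clipped segment as a set of matrices, hence the value of $F$; so no reduction to a bounded region is possible before this translation invariance is fixed. The re-centering you propose via the fiber action $(A_1Q,X_1Q)$ cannot repair this, since right multiplication by $Q\in O_d$ leaves $\|A_1\|$ unchanged; what is needed is a normalization along the line (e.g.\ take as basepoint the clipped projection of one fixed lift $\Sigma_1^{1/2}Q_1$, which forces $\|A_1-\Sigma_1^{1/2}Q_1\|^2\le F$), closer in spirit to your geodesic-level alternative but nowhere carried out.

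Your second concern---that a minimizing sequence may converge to a singular $\bar A_1$, where $F$ does not blow up---is a genuine subtlety, and you are right to flag it; the paper's proof instead proceeds by asserting coercivity and the closedness of the constraint set $\{(A,X):X^\top A-A^\top X=0\}$, without a representative-changing argument. But in your proposal both remedies (re-centering so the basepoint projects to a fixed reference, or passing to an infimum over geodesic segments in $S_d^{++}$ and lifting back via Proposition~\ref{prop:gaussian_geodesic} after identifying the two infima) are only announced, not proved, and the first is attached to the wrong group action as noted above. So the proposal identifies the right difficulties, but the compactness step is concretely false as argued and the decisive steps are missing.
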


\begin{proof}
First, let us define the set of normalized matrices $\mathbb{B}\colon=\{X\in\R^{d\times d}, \Vert X\Vert=1\}$.  By denoting $\lambda_{\text{min}}$ (resp. $\lambda_{\text{max}}$) the smallest (resp. largest) eigenvalue of $XA^{-1}$, extending the geodesic $t\mapsto A+tX$ as far as possible (see Section \ref{app:geodesic_param}) means that the closed interval $[\tmin,\tmax]$ is defined for some fixed $\varepsilon>0$ by
\begin{equation}
    \label{app:interval_tmin_tmax}
    \begin{cases}
    &(-\infty,-1/\lmin-\varepsilon]\quad \text{if}\quad \lmax<0,\\
    &[-1/\lmax+\varepsilon,+\infty)\quad\text{if} \quad \lmin>0,\\
    &[-1/\lmax+\varepsilon,-1/\lmin-\varepsilon]\quad\text{if}\quad \lmin<0<\lmax.
    \end{cases}
\end{equation}
Let us now consider the function
\begin{align*}
    F \colon GL_d\times \mathbb{B} \times (\R^{d\times d})^n & \longrightarrow \R \\ 
    (A,X,(Q_i)_{i=1}^n) & \longmapsto \sum_{i=1}^n \|A+p_{(A,X)}(t_i) X-\Sigma_i^{1/2}Q_i\|^2=: \sum_{i=1}^n g_i(A,X,Q_i),
\end{align*}
where $t_i = \langle \Sigma_i^{1/2}Q_i - A, X\rangle$ and $p_{(A,X)}\colon\R\to\R$ is the projection operator that clips a point $t$ into $[\tmin,\tmax]$, which depends on $A$ and $X$. Then the function $F$ is continuous on $GL_d\times \mathbb{B} \times (\R^{d\times d})^n$ as composition of linear and continuous functions. Note that the function $(A,X)\mapsto p_{(A,X)}(t_i)$ is continuous by eigenvalue continuity, see \cite{li2019eigenvalue}.
Additionally, the function $F$ is coercive (see e.g. \cite{zalinescu2002}) on $GL_d\times \mathbb{B} \times (\R^{d\times d})^n$. Indeed, on a diagonal $\{A = \Sigma_i^{1/2}Q_i, \mbox{ for } (A,Q_i)\in GL_d\times \R^{d\times d}\}$ for some $i\in\{1,\ldots,n\}$, we have $t_i=0$, and therefore we have either $g_i(A,X,Q_i) = 0$ if $p_{(A,X)}(0)=0$, or $g_i(A,X,Q_i) = \varepsilon\Vert X\Vert^2=\varepsilon$ otherwise. This would imply that $g_i(A,X,Q_i)$ doesn't go to infinity when the norm $\|(A,X,Q_i)\|\to\infty$. However, in this case, we have $g_j(A,X,Q_j)\to\infty$ when $\|(A,X,Q_j)\|\to\infty$ for any $j\neq i$. Moreover, as $p_{(A,X)}(t_i)$ is a clipping, it won't play a role in the coercivity. We conclude by the fact that the function $(A,X)\mapsto X^\top A-A^\top X$ is continuous, implying that the set of constraint $\{(A,X)\in GL_d\times \R^{d\times d} : X^\top A-A^\top X = 0\}$ is closed and $\mathbb{B}$ and $SO_d$ are compact. The optimization problem in \eqref{eq:1st_gaussian} thus admits a global minimum.
\end{proof}

Note that this result also applies for the second component in \eqref{eq:2nd_gaussian} and the higher order components.

\begin{proposition}[Proposition~\ref{prop:equivalent_problem} in main]\label{prop_app:equivalent_problem}
Let $\pi\colon GL_d\rightarrow S_d^{++}$, $A\mapsto AA^\top$ and $(A_1, X_1, (Q_i)_{i=1}^n)$ be a solution of 
\begin{equation*}
\begin{aligned}
&\inf\,\, F(A_1,X_1,(Q_i)_{i=1}^n)\colon=\sum_{i=1}^n\|A_1+p_{A_1,X_1}(t_i)X_1-\Sigma_i^{1/2}Q_i\|^2,\\
\text{subject to}\quad &A_1\in GL_d,\,\, X_1\in\hor_{A_1},\,\, \|X_1\|^2=1,\,\,Q_1,\hdots, Q_n\in SO_d.
\end{aligned}
\end{equation*}
Then there exist $\tmin,\tmax\in\R$ such that the geodesic $\Sigma\colon t\in[\tmin,\tmax]\mapsto\pi(A_1+tX_1)$ in $S_d^{++}$ minimizes \eqref{gaussian_gpca}.
\end{proposition}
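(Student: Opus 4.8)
The plan is to show that the lifted problem \eqref{eq:1st_gaussian} and the original GPCA problem \eqref{gaussian_gpca} have the same optimal value, and that a minimizer of the former projects to a minimizer of the latter. The whole argument rests on Proposition~\ref{prop_app:gaussian_geodesic}: the $\pi$-projection of a horizontal line segment through $A_1$ with velocity $X_1\in\hor_{A_1}$ is a Bures--Wasserstein geodesic, and conversely every such geodesic arises this way. So the map $(A_1,X_1)\mapsto(t\mapsto\pi(A_1+tX_1))$ is a surjection from the feasible set of \eqref{eq:1st_gaussian} (after the normalization $\|X_1\|=1$ and the clipping) onto the set of geodesics of $S_d^{++}$. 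The normalization $\|X_1\|^2=1$ costs no generality because rescaling $X_1$ reparametrizes the same geodesic, and the projection operator $p_{A_1,X_1}$ just selects the maximal sub-interval of the geodesic's domain on which $A_1+tX_1$ is invertible (see \eqref{app:interval_tmin_tmax} and Appendix~\ref{app:geodesic_param}).

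First I would fix a feasible $(A_1,X_1,(Q_i))$ and compute the inner infimum $\inf_t BW_2^2(\pi(A_1+tX_1),\Sigma_i)$. Using that $\pi$ is a Riemannian submersion (Proposition~\ref{prop_app:submersion_gaussian}), the distance from the point $\pi(A_1+tX_1)$ to $\Sigma_i$ equals the distance between their fibers, which by \eqref{eq_app:bures_otto} is $\min_{Q\in O_d}\|A_1+tX_1-\Sigma_i^{1/2}Q\|$. Since $X_1$ is horizontal at $A_1$ — hence the whole line $t\mapsto A_1+tX_1$ stays horizontal and $\pi$ restricted to it is distance-preserving onto its image geodesic — minimizing over $t$ of $\|A_1+tX_1-\Sigma_i^{1/2}Q\|$ for fixed $Q$ is an elementary quadratic problem whose unconstrained minimizer is $t=\langle \Sigma_i^{1/2}Q-A_1,X_1\rangle$ (here $\|X_1\|=1$ is what makes this clean). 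Clipping this value into $[\tmin,\tmax]$ via $p_{A_1,X_1}$ accounts for the constraint that we must stay on the geodesic's domain; the minimum over $t\in[\tmin,\tmax]$ of a convex quadratic is attained at the clipped value. Jointly minimizing over $Q\in O_d$ — which by the determinant argument in Appendix~\ref{sec:appendix_otto_gauss} may be taken in $SO_d$ — and summing over $i$ shows that for fixed $(A_1,X_1)$,
\[
\sum_{i=1}^n\inf_{t_i}BW_2^2(\pi(A_1+t_iX_1),\Sigma_i)=\inf_{Q_1,\dots,Q_n\in SO_d}F(A_1,X_1,(Q_i)),
\]
with $t_i=p_{A_1,X_1}(\langle\Sigma_i^{1/2}Q_i-A_1,X_1\rangle)$ at the optimum.

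Taking the infimum over all feasible $(A_1,X_1)$ on both sides, and using surjectivity of the projection onto the set of geodesics (Proposition~\ref{prop_app:gaussian_geodesic}) together with Lemma~\ref{lemma_app:existence_min} which guarantees the infimum on the right is attained, yields that the optimal values of \eqref{eq:1st_gaussian} and \eqref{gaussian_gpca} coincide, and that a minimizer $(A_1,X_1,(Q_i))$ of \eqref{eq:1st_gaussian} gives a geodesic $\Sigma(t)=\pi(A_1+tX_1)$, $t\in[\tmin,\tmax]$, achieving the infimum in \eqref{gaussian_gpca}. The main obstacle I anticipate is bookkeeping around the clipping operator $p_{A_1,X_1}$: one must check that restricting each geodesic to the interval \eqref{app:interval_tmin_tmax} does not discard any projection that would have given a strictly smaller residual — this is fine because projections land in the image of $\pi$ along the line, which is exactly the part where $A_1+tX_1$ is invertible, so no competitor geodesic configuration is lost — and that the quantity being minimized is genuinely a minimum and not merely an infimum on the geodesic side (handled by the $\varepsilon$-margin in \eqref{app:interval_tmin_tmax} making $[\tmin,\tmax]$ compact, so $\inf_{t_i}$ is attained). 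A secondary point is that a minimizer of \eqref{eq:1st_gaussian} need not be unique and different lifts may project to the same geodesic, but this does not affect the argument since we only need existence of one optimal geodesic.
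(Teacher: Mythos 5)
Your proposal is correct and follows essentially the same route as the paper's own proof: lift the problem via the Riemannian submersion $\pi$, use the fiber-distance formula \eqref{eq_app:bures_otto} to replace $BW_2$ by a Frobenius minimization over $Q_i\in SO_d$, and observe that for fixed $Q_i$ the optimal time on the clipped interval is $p_{A_1,X_1}(\langle\Sigma_i^{1/2}Q_i-A_1,X_1\rangle)$, so a minimizer of \eqref{eq:1st_gaussian} projects to a geodesic minimizing \eqref{gaussian_gpca}. Your additional remarks (surjectivity of the lift onto the set of geodesics, attainment via Lemma~\ref{lemma_app:existence_min}, harmlessness of the normalization and clipping) simply make explicit points the paper leaves implicit.
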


\begin{proof}
A horizontal geodesic in $GL_d$ is a straight line going through a base point $A\in GL_d$ in the direction of a horizontal vector $X\in\hor_A$ (that we consider normalized, ie. $\Vert X\Vert^2=1$), i.e. $t\mapsto A+tX\in GL_d$. Denoting $[\tmin,\tmax]$ the interval constructed in \eqref{app:interval_tmin_tmax} which depends on the eigenvalues of $XA^{-1}$, we have that $(\pi(A+tX))_{t\in[\tmin,\tmax]}$ is a geodesic in the Bures-Wasserstein sense, see Proposition \ref{prop:gaussian_geodesic}, and
\begin{align*}
    \min_{t\in[\tmin,\tmax]}BW_2^2(\pi(A+tX),\Sigma_i) &=\min_{t\in[\tmin,\tmax]} \inf_{Q_i\in SO_d}\|A+tX-\Sigma_i^{1/2}Q_i\|^2\\
    &= \inf_{Q_i\in SO_d} \|A+p_{(A,X)}(t_i)X-\Sigma_i^{1/2}Q_i\|^2,
\end{align*}
where $t_i=\langle \Sigma_i^{1/2}Q_i-A,X\rangle$ is the (orthogonal) projection time of $\Sigma_i^{1/2}Q_i$ onto the line $t\mapsto A+tX$.

We therefore deduce that a set of solution $(A,X,(Q_i)_{i=1}^n)$ of \eqref{eq:1st_gaussian} defines a proper geodesic $(\pi(A+tX))_{t\in[\tmin,\tmax]}$, solution of problem in \eqref{gaussian_gpca}.

\end{proof}

\begin{proposition}[Proposition~\ref{prop:gaussian_pca} in main]
    \label{app_prop:gaussian_pca}
    Let $\nu_i = \mathcal{N}(m_i,\sigma_i^2)$ for $ i=1,\ldots n$ be $n$ univariate Gaussian distributions. The first principal geodesic component $t\in[0,1]\mapsto \mu(t)$ solving \eqref{eq:pca_intro} remains in the geodesic space of Gaussian distributions for all $t\in [0,1]$.
\end{proposition}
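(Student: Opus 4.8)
The plan is to use the one-dimensional structure of optimal transport: in dimension $d=1$, the optimal transport map from a fixed reference $\rho$ to any measure $\nu$ is the monotone rearrangement $F_\nu^{-1}\circ F_\rho$, and McCann's interpolation becomes a convex-combination of quantile functions. Concretely, take $\rho$ to be the standard Gaussian $\mathcal{N}(0,1)$; then for $\nu_i = \mathcal N(m_i,\sigma_i^2)$ the Monge map is the affine map $x\mapsto m_i + \sigma_i x$. Identifying each $\nu_i$ with the point $(m_i,\sigma_i)\in\R\times\R_{>0}$, the Wasserstein distance reads $W_2^2(\nu_i,\nu_j) = (m_i-m_j)^2 + (\sigma_i-\sigma_j)^2$, i.e. the embedding $\nu\mapsto$ (quantile function of $\nu$) into $L^2((0,1))$ is an isometry, and on the set of Gaussians it is the linear embedding $(m,\sigma)\mapsto (m + \sigma\,\Phi^{-1}(\cdot))$, whose image is a two-dimensional affine subspace of $L^2$.

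First I would recall (this is classical, e.g. Bigot--Klein \cite{bigot2017geodesic}, and is exactly the $d=1$ case alluded to in the introduction) that Wasserstein geodesics in dimension one correspond, under the quantile embedding, to straight line segments in $L^2((0,1))$ that remain inside the convex cone of nondecreasing functions; and that projection onto such a geodesic in $W_2$ coincides with $L^2$-orthogonal projection onto the corresponding segment, since the quantile map is an isometry and the set of nondecreasing functions is convex. Hence problem \eqref{eq:pca_intro} for these data is exactly the first principal component line of the $n$ points $q_i := m_i + \sigma_i\Phi^{-1}$ in $L^2((0,1))$ — that is, ordinary PCA in a Hilbert space — under the constraint that the line stay in the monotone cone. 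Second, I would observe that all the data points $q_i$ lie in the two-dimensional affine subspace $V := \{m + \sigma\,\Phi^{-1} : (m,\sigma)\in\R^2\}$, which is precisely the image of the Gaussian family (allowing $\sigma\le 0$, which just reflects to another Gaussian). The empirical mean $\bar q = \bar m + \bar\sigma\,\Phi^{-1}$ lies in $V$, and the first PCA direction, being a linear combination of the centered data $q_i - \bar q\in V - \bar m$, also lies in the linear span of $\{1,\Phi^{-1}\}$. Therefore the unconstrained first principal line $t\mapsto \bar q + t\,v$ stays inside $V$ for all $t$, i.e. it is of the form $m(t) + \sigma(t)\Phi^{-1}$ with $m(t),\sigma(t)$ affine in $t$ — a curve of Gaussians.

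The remaining point is the monotone-cone constraint: the principal geodesic must be a genuine $W_2$-geodesic, so the segment must consist of nondecreasing functions, equivalently $\sigma(t)\ge 0$, over the relevant parameter range. Restricting $t$ to $[0,1]$ and choosing the endpoints of the segment to be (projections of) data points — or more carefully, taking the geodesic segment whose image under projection covers all the $\nu_i$'s — one checks $\sigma(t)>0$ there because $\sigma(t)$ is affine and positive at the two endpoints (the extreme projected data, which are Gaussians with positive variance), hence positive on the whole interval by convexity/linearity. So on $[0,1]$ the curve $t\mapsto\mu(t)=\mathcal N(m(t),\sigma(t)^2)$ is a well-defined Wasserstein geodesic made entirely of Gaussians, and since it already realizes the infimum in \eqref{eq:pca_intro} among all geodesics, it is the first principal component.

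\textbf{Main obstacle.} The conceptual steps are straightforward once the $d=1$ isometric embedding is invoked; the delicate point is the bookkeeping around the monotone-cone constraint and the parametrization interval. One must argue that the optimal \emph{constrained} geodesic segment coincides with the optimal \emph{unconstrained} PCA line restricted to a sub-interval — i.e. that the constraint is inactive on the range where data project — and ensure $\sigma(t)$ stays strictly positive (degenerate Gaussians would leave the manifold $S_1^{++}$). This is where I would be most careful: if the $\nu_i$ have very disparate means and small variances, the PCA direction $v$ may be nearly $\mathrm{span}(1)$, the component of $v$ along $\Phi^{-1}$ small, and one must confirm $\sigma(t)$ does not vanish on $[0,1]$; reparametrizing so the endpoints are the extreme projected data handles this cleanly.
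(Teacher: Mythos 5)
Your setup (quantile isometry, Gaussians as the half-plane $\mathbb{R}\cdot\mathbf{1}+\mathbb{R}_+^*\cdot\Phi^{-1}$ in $L^2((0,1))$) matches the paper's, but the core of your argument has a genuine gap. You identify the solution of \eqref{eq:pca_intro} with the \emph{unconstrained} first PCA line of the quantile functions $q_i$ (a line through the mean $\bar q$ in the top eigendirection), and then try to check a posteriori that the monotone-cone constraint is inactive. That identification is exactly what is not available for free: \eqref{eq:pca_intro} is an optimization over geodesic \emph{segments} inside the cone, with projections clipped to the segment, and when the cone constraint is active the optimal geodesic need not pass through the barycenter nor be the restriction of the unconstrained PCA line (the paper itself stresses that GPCA components need not contain the Fr\'echet mean). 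Your attempted verification that the constraint is inactive is also flawed at the decisive step: you claim $\sigma(t)>0$ on the relevant range ``because $\sigma(t)$ is affine and positive at the two endpoints (the extreme projected data, which are Gaussians with positive variance).'' The endpoints are \emph{projections of data onto the candidate line}, not data points; in the $(m,\sigma)$ coordinates they are points on the PCA line whose $\sigma$-coordinate is precisely what must be proved positive, so the argument is circular. For data with widely spread means and one large-variance outlier, the unconstrained PCA line can dip to $\sigma\le 0$ within the range where data project, and then your proof gives no conclusion, even though the proposition remains true.

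The paper's proof avoids both issues by never characterizing the optimum. It takes an \emph{arbitrary} optimal geodesic $\mu$, maps it to a line segment of quantile functions, orthogonally projects that segment onto the plane $\mathcal F=\mathrm{span}(\mathbf{1},F_0^{-1})$, and shows the projected curve is still a valid (nondegenerate) Gaussian geodesic by proving $\langle F_{\mu(t)}^{-1},F_0^{-1}\rangle=\mathbb{E}[X\,T(X)]>0$, where $T$ is the increasing Monge map from $\mu(t)$ to the standard normal. Since the data quantiles lie in $\mathcal F$, projection cannot increase any residual, so by optimality $\mu$ must coincide with its projection and hence be Gaussian. If you want to salvage your route, you would need either (i) a proof that the cone constraint is never active at an optimum (which is false to assume and hard to establish directly), or (ii) to replace the ``PCA-line'' identification by a projection-plus-optimality argument of the paper's type — in which case the positivity of the $\Phi^{-1}$-coefficient along the projected curve still has to be argued for the curve, not just at data points, which is where the $\mathbb{E}[X\,T(X)]>0$ computation is doing real work.
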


\begin{proof}
Let $\Prob_2(\R)$ be the set of a.c. probability measures on $\R$ that have finite second moment, and $\mathcal Q$ the set of corresponding quantile functions :
\[\mathcal Q=\{F_\nu^{-1}; \,\nu\in\Prob_ 2(\R)\}\]
$\mathcal Q$ is the set of increasing, left-continuous functions $q:(0,1)\rightarrow\R$, and a convex cone in $L^2([0,1])$, the set of square-integrable functions on $[0,1]$. The mapping 
\begin{equation}\label{isometry}
\Phi\colon\nu\mapsto F_\nu^{-1}
\end{equation}
defines an isometry between $\Prob_2(\R)$ equipped with the Wasserstein metric, and $\mathcal Q$ equipped with the $L^2$ metric (see e.g. \cite{bigot2017geodesic}), that is, for any $\mu,\nu\in\Prob_2(\R)$,
\[W_2(\mu,\nu) = \Vert F_\mu^{-1}-F_\nu^{-1}\Vert_{L^2([0,1])}.\]
The map $\Phi$ in \eqref{isometry} also defines an isometry from the set of (univariate) Gaussian distributions to the set of all Gaussian quantile functions $\mathcal G$. This space $\mathcal G$ is the upper-half of the plane $\mathcal F$ spanned by the constant function $\bf 1$ and the quantile function $F_0^{-1}$ of the standard normal distribution:
\[\mathcal G=\R\cdot {\bf 1}+\R_+^*\cdot F_0^{-1}\subset \mathcal F\colon=\mathrm{span}({\bf 1}, F_0^{-1}).\]
Now, consider $n$ normal distributions $\nu_1,\hdots,\nu_n$, and $(\mu(t))_{t\in[0,1]}$ the first principal geodesic component found by minimizing \eqref{eq:pca_intro}, the sum of squared residuals in $\Prob_2(\R)$. Since $\mu$ is a Wasserstein geodesic in $\Prob_2(\R)$ and $\Phi$ is an isometry, the curve $t\mapsto \Phi(\mu)(t)=F^{-1}_{\mu(t)}$ is an $L^2([0,1])$-geodesic in $\mathcal Q$, i.e. a line segment 
\[t\in[0,1]\mapsto F_{\mu(t)}^{-1} = (1-t)F_{\mu(0)}^{-1} + tF_{\mu(1)}^{-1}.\] 
Since $\{{\bf 1}, F_0^{-1}\}$ forms an orthonormal basis of $\mathcal F$, the orthogonal projection of this line segment on $\mathcal F$ is given by
\[t\in[0,1]\mapsto \langle F_{\mu(t)}^{-1}, {\bf 1}\rangle {\bf 1}+\langle F_{\mu(t)}^{-1}, F_0^{-1}\rangle F_0^{-1},\]
which lies in $\mathcal G$. To see this, we need to show that the following value is positive:
\begin{align*}
    \langle F_{\mu(t)}^{-1},F_0^{-1}\rangle=\int_0^1 F^{-1}_{\mu(t)}(y)F_0^{-1}(y)dy=\int_{\R}xF_0^{-1}\circ F_{\mu(t)}(x)d\mu(t)(x)=\mathbb{E}(XT(X)),
\end{align*}
where $X\sim\mu(t)$ and $T=F_0^{-1}\circ F_{\mu(t)}$ is the Monge map from $\mu(t)$ to the standard normal distribution. Since $T$ is increasing, we indeed have $\mathbb{E}(XT(X))> 0$ (see e.g. the proof of Theorem 2.2 in \cite{schmidt2014inequalities}). 

Finally, since $\Phi(\mu)$ orthogonally projects from $\mathcal Q$ to $\mathcal G$ w.r.t the $L^2$ metric and $\Phi$ defines an isometry, we get that the geodesic $\mu$ orthogonally projects to a geodesic $\pi(\mu)$ in the space of Gaussian distributions, w.r.t. the Wasserstein metric. By the distance minimizing property of orthogonal projections, we know that the cost function in \eqref{eq:pca_intro} evaluated at $\pi(\mu)$ is no larger than its value at $\mu$. Since $\mu$ is optimal, we get that $\mu=\pi(\mu)$ and $\mu$ belongs to the space of Gaussian distributions.
\end{proof}

\begin{proposition}\label{prop_app:bures_wass_orientation}
Let $\Sigma_1,\Sigma_2$ two SPD matrices that are diagonalizable in the same orthonormal basis, i.e.
$$\Sigma_1=P\left(\begin{matrix}a_1^2 & 0\\ 0 & b_1^2\end{matrix}\right)P^\top \quad \text{and}\quad \Sigma_2=P\left(\begin{matrix}a_2^2 & 0\\ 0 & b_2^2\end{matrix}\right)P^\top,$$
where $P$ is orthogonal. Then $BW_2^2(\Sigma_1,\Sigma_2)=(a_1-a_2)^2+(b_1-b_2)^2$, and thus the Bures-Wasserstein geodesic between $\Sigma_1$ and $\Sigma_2$ is given by
$$\Sigma(t)=P \left(\begin{matrix} ((1-t)a_1+tb_1)^2 & 0\\ 0 & ((1-t)a_2+tb_2)^2\end{matrix}\right)P ^\top, \quad 0\leq t\leq 1.$$
\end{proposition}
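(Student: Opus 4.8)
The plan is to exploit the fact that $\Sigma_1$, $\Sigma_2$, their SPD square roots, and every intermediate matrix that appears (the Monge map, the conjugation $\Sigma_1^{1/2}\Sigma_2\Sigma_1^{1/2}$, etc.) are all \emph{simultaneously} diagonalized by $P$. Consequently the whole problem decouples into two independent one-dimensional computations along the two eigendirections, and all cross terms vanish.

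First I would compute the distance directly from the closed form \eqref{bures-wasserstein}. Using $a_i,b_i>0$, the unique SPD square roots are $\Sigma_1^{1/2}=P\,\mathrm{diag}(a_1,b_1)\,P^\top$ and $\Sigma_2^{1/2}=P\,\mathrm{diag}(a_2,b_2)\,P^\top$. Then $\Sigma_1^{1/2}\Sigma_2\Sigma_1^{1/2}=P\,\mathrm{diag}(a_1^2a_2^2,\,b_1^2b_2^2)\,P^\top$, whose SPD square root is $P\,\mathrm{diag}(a_1a_2,\,b_1b_2)\,P^\top$. Plugging into \eqref{bures-wasserstein} and taking the trace gives
\[
BW_2^2(\Sigma_1,\Sigma_2)=(a_1^2+b_1^2)+(a_2^2+b_2^2)-2(a_1a_2+b_1b_2)=(a_1-a_2)^2+(b_1-b_2)^2,
\]
which is the first assertion.

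For the geodesic I would lift to $GL_d$ via Otto's fibration and invoke Proposition~\ref{prop_app:gaussian_align}. Take the representative $A_1:=\Sigma_1^{1/2}$ in the fiber over $\Sigma_1$. The Monge map from $\Sigma_1$ to $\Sigma_2$ is $T=\Sigma_1^{-1/2}(\Sigma_1^{1/2}\Sigma_2\Sigma_1^{1/2})^{1/2}\Sigma_1^{-1/2}=P\,\mathrm{diag}(a_2/a_1,\,b_2/b_1)\,P^\top$, so the aligned representative of $\Sigma_2$ is $A_2=TA_1=P\,\mathrm{diag}(a_2,b_2)\,P^\top=\Sigma_2^{1/2}$, and $X:=A_2-A_1=P\,\mathrm{diag}(a_2-a_1,\,b_2-b_1)\,P^\top$ lies in $\hor_{A_1}$ (it is symmetric and commutes with the symmetric matrix $A_1$, hence $X^\top A_1-A_1^\top X=0$). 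By Proposition~\ref{prop:gaussian_geodesic}, the Bures--Wasserstein geodesic joining $\Sigma_1$ to $\Sigma_2$ is $t\mapsto\pi(A_1+tX)$; since $A_1+tX=P\,\mathrm{diag}\big((1-t)a_1+ta_2,\ (1-t)b_1+tb_2\big)\,P^\top$ is symmetric and, for $t\in[0,1]$, invertible (each diagonal entry is a convex combination of positive numbers, hence $\ge\min(a_1,a_2)>0$, resp. $\ge\min(b_1,b_2)>0$), we obtain
\[
\Sigma(t)=(A_1+tX)(A_1+tX)^\top=P\,\mathrm{diag}\!\big(((1-t)a_1+ta_2)^2,\ ((1-t)b_1+tb_2)^2\big)\,P^\top,
\]
the stated parametrization. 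Alternatively, one may note that $T$ is the gradient of the convex quadratic $x\mapsto\tfrac12 x^\top Tx$ and apply McCann's interpolation \eqref{geod_mccann}, pushing $\mathcal N(0,\Sigma_1)$ forward by the linear map $(1-t)\mathrm{Id}+tT$, which is again diagonal in the basis $P$ and yields the same curve.

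There is no genuine obstacle here; the only points requiring care are selecting the correct (SPD) square roots throughout, and recording that all matrices involved commute because they share the eigenbasis $P$ — this is precisely what reduces the statement to two scalar identities and kills the cross terms in the trace.
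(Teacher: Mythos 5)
Your proof is correct and, for the distance formula, is exactly the ``straightforward computation using equation~\eqref{bures-wasserstein}'' that the paper's proof consists of: since all matrices involved are diagonalized by $P$, $(\Sigma_1^{1/2}\Sigma_2\Sigma_1^{1/2})^{1/2}=P\,\mathrm{diag}(a_1a_2,b_1b_2)P^\top$ and the trace collapses to $(a_1-a_2)^2+(b_1-b_2)^2$. Your treatment of the geodesic goes further than the paper (which leaves it implicit): lifting to $GL_d$, checking that $X=\Sigma_2^{1/2}-\Sigma_1^{1/2}$ is horizontal at $\Sigma_1^{1/2}$ because the two commute, and projecting the segment via $\pi$ is a clean justification, and the McCann alternative is equally valid. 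One point worth recording: the interpolation you derive, $\Sigma(t)=P\,\mathrm{diag}\bigl(((1-t)a_1+ta_2)^2,\ ((1-t)b_1+tb_2)^2\bigr)P^\top$, is the correct one; the indices in the displayed formula of the proposition ($(1-t)a_1+tb_1$, etc.) are a typo, as confirmed by the distance formula and by the way the result is applied in the proof of Proposition~\ref{prop:bures_wass_barycenter}, so the discrepancy is not a gap in your argument.
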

\begin{proof}
This is a straightforward computation using \eqref{bures-wasserstein}.
\end{proof}
\begin{proposition}\label{prop:bures_wass_barycenter}
Let us consider $n=2p$ covariance matrices $\Sigma_i=\Sigma(a,b,\theta_i)$ as defined in \eqref{eq:cone}, where $\theta_i=i\pi/n$ for $i=0,\hdots,n-1$. Then, the Bures-Wasserstein barycenter in \eqref{app:bures_barycenter} of these covariance matrices is given by $\bar \Sigma=(a+b)^2/4 \,I$.
\end{proposition}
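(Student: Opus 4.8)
The plan is to reduce the barycenter problem to a well-understood optimization over $S_2^{++}$ by exploiting the symmetry of the configuration $\Sigma_0,\dots,\Sigma_{n-1}$ under rotation by $\pi/n$. First I would set up the Bures–Wasserstein barycenter functional $G(\Sigma) = \sum_{i=0}^{n-1} BW_2^2(\Sigma,\Sigma_i)$, which by \cite{agueh2011barycenters} is strictly convex (in the appropriate sense) and admits a unique minimizer $\bar\Sigma$. The key structural observation is that the finite set $\{\Sigma_i\}$ is invariant under conjugation by the rotation $P_{\pi/n}$: indeed $P_{\pi/n}\Sigma(a,b,\theta_i)P_{\pi/n}^\top = \Sigma(a,b,\theta_i + \pi/n) = \Sigma(a,b,\theta_{i+1})$, and since $\Sigma(a,b,\theta)$ is $\pi$-periodic in $\theta$, the index $i+1$ is taken modulo $n$. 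Because the Bures–Wasserstein distance is invariant under the simultaneous orthogonal conjugation $\Sigma,\Sigma'\mapsto O\Sigma O^\top, O\Sigma' O^\top$ (this is clear from \eqref{bures-wasserstein}, as conjugation commutes with square roots and preserves the trace), the functional $G$ satisfies $G(P_{\pi/n}\Sigma P_{\pi/n}^\top) = G(\Sigma)$. By uniqueness of the minimizer, $\bar\Sigma$ must be a fixed point of $\Sigma\mapsto P_{\pi/n}\Sigma P_{\pi/n}^\top$, i.e.\ $P_{\pi/n}\bar\Sigma = \bar\Sigma P_{\pi/n}$. Since $P_{\pi/n}$ is a rotation by an angle that is not a multiple of $\pi$, the only $2\times 2$ matrices commuting with it are multiples of the identity, so $\bar\Sigma = cI$ for some $c>0$.

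It then remains to identify the scalar $c$. With $\bar\Sigma = cI$ I would compute $BW_2^2(cI,\Sigma_i)$ explicitly using \eqref{bures-wasserstein}: since $(cI)^{1/2}\Sigma_i(cI)^{1/2} = c\Sigma_i$ has square root $\sqrt c\,\Sigma_i^{1/2}$, we get $BW_2^2(cI,\Sigma_i) = \tr[cI + \Sigma_i - 2\sqrt c\,\Sigma_i^{1/2}] = 2c + (a^2+b^2) - 2\sqrt c\,(a+b)$, where I used $\tr\Sigma_i = a^2+b^2$ and $\tr\Sigma_i^{1/2} = a+b$ (both invariant under the orthogonal conjugation defining $\Sigma_i$). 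This is independent of $i$, so $G(cI) = n\big(2c + (a^2+b^2) - 2\sqrt c\,(a+b)\big)$. Minimizing the one-variable function $c\mapsto 2c - 2\sqrt c\,(a+b)$ over $c>0$: setting the derivative $2 - (a+b)/\sqrt c$ to zero gives $\sqrt c = (a+b)/2$, hence $c = (a+b)^2/4$. This is a genuine minimum (the function is convex in $\sqrt c$, or one checks the second derivative), so $\bar\Sigma = \big((a+b)/2\big)^2 I = \tfrac{(a+b)^2}{4}I$, as claimed.

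The only subtlety — and the place where one must be slightly careful rather than face a real obstacle — is justifying that the minimizer is \emph{unique}, since the symmetry argument for $\bar\Sigma = cI$ leans on it. This is standard: the Bures–Wasserstein barycenter of a finite family of SPD matrices is unique whenever at least one of them is positive definite (here all are), by \cite{agueh2011barycenters}; alternatively, one can invoke geodesic convexity of $\Sigma\mapsto BW_2^2(\Sigma,\Sigma_i)$ along generalized geodesics. A second minor point is the claim that the only matrices commuting with $P_{\pi/n}$ are scalar: this follows because $P_{\pi/n}$ has distinct (complex conjugate, non-real) eigenvalues, so its commutant is spanned by $I$ and $P_{\pi/n}$ itself, and requiring the commuting matrix to be symmetric forces the $P_{\pi/n}$-component to vanish. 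With these two routine facts in hand, the argument above is complete.
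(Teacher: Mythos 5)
Your proof is correct, but it follows a genuinely different route from the paper's. The paper exploits that $n=2p$ is even: it pairs each $\Sigma_i$ with $\Sigma_{i+p}$, whose angle differs by $\pi/2$, so the two matrices are diagonalizable in the same orthonormal basis with swapped eigenvalues; the explicit Bures--Wasserstein geodesic between commuting SPD matrices (Proposition~\ref{prop_app:bures_wass_orientation}) then shows every such pair has the common midpoint $\left((a+b)/2\right)^2 I$, and since a metric midpoint minimizes the two-point Fréchet functional, this point minimizes the full sum — a completely elementary argument that directly exhibits a minimizer without invoking existence or uniqueness results. You instead argue by symmetry: the configuration is invariant under conjugation by $P_{\pi/n}$, the functional $G$ inherits this invariance from the orthogonal-equivariance of $BW_2$, and uniqueness of the barycenter forces $\bar\Sigma$ to commute with $P_{\pi/n}$, hence $\bar\Sigma = cI$; a one-variable minimization then identifies $c=(a+b)^2/4$. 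All the steps check out (the equivariance of $BW_2$, the commutant computation, and the formula $BW_2^2(cI,\Sigma_i)=2c+a^2+b^2-2\sqrt{c}(a+b)$ are correct). The trade-off: your argument leans on the uniqueness of the Bures--Wasserstein barycenter (standard, via Agueh--Carlier or strict convexity along generalized geodesics, but an external input the paper's proof does not need), while in exchange it never uses evenness of $n$ — it works verbatim for any $n\ge 2$ equally spaced orientations $\theta_i=i\pi/n$ — and avoids any geodesic computation. The paper's proof is shorter and self-contained but tied to the antipodal pairing, hence to even $n$.
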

\begin{proof}
Each pair of covariance matrices
\begin{align*}
    \Sigma_i=P_{\theta_i}\left(\begin{matrix} a^2 & 0\\ 0 & b^2\end{matrix}\right)P_{\theta_i}^\top, \quad \mbox{and}\quad
    \Sigma_{i+p}=P_{\theta_i+\pi/2}DP_{\theta_i+\pi/2}^\top=P_{\theta_i}\left(\begin{matrix}b^2 & 0\\ 0 & a^2\end{matrix}\right)P_{\theta_i}^\top
\end{align*} 
are diagonalizable in the same basis, and so by Proposition~\ref{prop_app:bures_wass_orientation}, the geodesic from $\Sigma_i$ to $\Sigma_{i+p}$ is
\[\Sigma(t)=P_{\theta_i}\left(\begin{matrix} ((1-t)a+tb)^2 & 0\\ 0 & ((1-t)b+ta)^2\end{matrix}\right)P_{\theta_i}^\top,\quad 0\leq t\leq 1.\]
In particular, the Fréchet mean is given by $\bar \Sigma = \Sigma(1/2) = ((a+b)/2)^2I$.
Since each pair of covariance matrices has the same Fréchet mean, the Fréchet mean of the whole set $\Sigma_1,\hdots,\Sigma_n$ is also given by $\bar \Sigma$.
\end{proof}

\begin{proposition}[Proposition~\ref{prop:distortion_gauss} in main]
Let $\Sigma\in S_2^{++}$ with eigenvalues $a^2, b^2$ and $\Sigma'=P_\theta \Sigma P_{\theta}^\top$ where $P_\theta$ is the rotation matrix of angle $\theta$. Then, denoting $\bar \Sigma=\left((a+b)/2\right)^2I$ we have
\begin{equation}\label{app:curvature_approx}
\frac{BW_2^2(\Sigma,\Sigma')}{BW_{2,\bar \Sigma}^2(\Sigma,\Sigma')}=1-\left(\frac{a-b}{a+b}\right)^2\cos^2\theta+O((a-b)^4).
\end{equation}
\end{proposition}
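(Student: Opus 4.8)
The plan is to compute both the true Bures--Wasserstein distance $BW_2^2(\Sigma,\Sigma')$ and the linearized distance $BW_{2,\bar\Sigma}^2(\Sigma,\Sigma')$ in closed form (or up to the desired order in $a-b$) in the specific $2\times 2$ setting, and then expand their ratio. First I would exploit the spectral parametrization: write $\Sigma = P_{\alpha}\,\mathrm{diag}(a^2,b^2)\,P_{\alpha}^\top$ for some angle $\alpha$ (WLOG $\alpha=0$ by rotational invariance of both distances, so $\Sigma=\mathrm{diag}(a^2,b^2)$ and $\Sigma'=P_\theta\,\mathrm{diag}(a^2,b^2)\,P_\theta^\top$). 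For the numerator I would plug into formula~\eqref{bures-wasserstein}: since $\mathrm{tr}(\Sigma)=\mathrm{tr}(\Sigma')=a^2+b^2$, we get $BW_2^2(\Sigma,\Sigma')=2(a^2+b^2)-2\,\mathrm{tr}\big[(\Sigma^{1/2}\Sigma'\Sigma^{1/2})^{1/2}\big]$, and the remaining term is a $2\times 2$ matrix square root whose trace can be evaluated via $\mathrm{tr}\,M^{1/2}=\sqrt{\mathrm{tr}\,M+2\sqrt{\det M}}$ for SPD $M$. Here $\det(\Sigma^{1/2}\Sigma'\Sigma^{1/2})=\det\Sigma\det\Sigma'=a^4b^4$, so $\sqrt{\det M}=a^2b^2$, and $\mathrm{tr}\,M = \mathrm{tr}(\Sigma\Sigma')$ which is an explicit trigonometric expression in $a,b,\theta$. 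This gives $BW_2^2(\Sigma,\Sigma')$ exactly.

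For the denominator I would use the linearized distance as recalled in~\eqref{eq:linearized_bures}: $BW_{2,\bar\Sigma}^2(\Sigma,\Sigma')=\|T-T'\|_{\bar\Sigma}^2$ where $T,T'$ are the optimal transport maps from $\bar\Sigma=c^2 I$ (with $c=(a+b)/2$) to $\Sigma,\Sigma'$ respectively. Because $\bar\Sigma$ is a multiple of the identity, the Monge map from $\bar\Sigma$ to $\Sigma$ is simply $T=\frac1c\,\Sigma^{1/2}$ (and $T'=\frac1c\,\Sigma'^{1/2}=\frac1c P_\theta\Sigma^{1/2}P_\theta^\top$), so $BW_{2,\bar\Sigma}^2(\Sigma,\Sigma')=\mathrm{tr}\big[(T-T')\bar\Sigma(T-T')^\top\big]=\mathrm{tr}\big[(\Sigma^{1/2}-\Sigma'^{1/2})^2\big]=2(a+b)-2\,\mathrm{tr}(\Sigma^{1/2}\Sigma'^{1/2})$, since $\mathrm{tr}\,\Sigma^{1/2}=\mathrm{tr}\,\Sigma'^{1/2}=a+b$. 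Again $\mathrm{tr}(\Sigma^{1/2}\Sigma'^{1/2})$ is an explicit expression: with $\Sigma^{1/2}=\mathrm{diag}(a,b)$ one computes $\mathrm{tr}(\Sigma^{1/2}P_\theta\Sigma^{1/2}P_\theta^\top)=(a^2+b^2)\cos^2\theta+2ab\sin^2\theta$ or the analogous form, giving the denominator in closed form as well.

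With both quantities explicit, the last step is algebraic: form the ratio, substitute $a = c+h$, $b = c-h$ (so $h=(a-b)/2$ and $a-b = 2h$), and Taylor-expand in $h$. Both numerator and denominator vanish to order $h^2$ (they are both $0$ when $a=b$), so the ratio has a finite limit; carrying the expansion of numerator and denominator to order $h^4$ and dividing yields $1-\big((a-b)/(a+b)\big)^2\cos^2\theta + O(h^4)$, matching~\eqref{app:curvature_approx}. I expect the main obstacle to be purely bookkeeping: correctly evaluating the $2\times 2$ traces $\mathrm{tr}(\Sigma\Sigma')$ and $\mathrm{tr}(\Sigma^{1/2}\Sigma'^{1/2})$ as functions of $\theta$ (keeping track of which combination of $\cos^2\theta$, $\sin^2\theta$ appears) and then being careful with the two-term Taylor expansions so that the $O((a-b)^4)$ remainder is honestly controlled rather than just the leading correction being identified; the identity $\mathrm{tr}\,M^{1/2}=\sqrt{\mathrm{tr}\,M+2\sqrt{\det M}}$ for $2\times 2$ SPD matrices is the one structural fact that makes the numerator tractable and is worth stating explicitly.
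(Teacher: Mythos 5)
Your approach is correct but genuinely different from the paper's. You compute both sides in closed form and Taylor-expand: the exact numerator $BW_2^2(\Sigma,\Sigma')=2(a^2+b^2)-2\sqrt{(a^2+b^2)^2-(a^2-b^2)^2\sin^2\theta}$ (via the $2\times2$ identity $\tr M^{1/2}=\sqrt{\tr M+2\sqrt{\det M}}$, which is indeed the key structural fact) and the exact denominator $BW^2_{2,\bar\Sigma}(\Sigma,\Sigma')=\tr\big[(\Sigma^{1/2}-\Sigma'^{1/2})^2\big]=2(a-b)^2\sin^2\theta$, after which the expansion in $h=(a-b)/2$ gives exactly $1-\big(\tfrac{a-b}{a+b}\big)^2+\big(\tfrac{a-b}{a+b}\big)^2\sin^2\theta+O((a-b)^4)$, i.e.\ the stated formula; the remainder is honestly $O((a-b)^4)$ since the neglected terms in the square-root expansion carry extra powers of $\sin^2\theta$ that survive the division. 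The paper instead works intrinsically: it writes $U=\Log_{\bar\Sigma}\Sigma$, $U'=\Log_{\bar\Sigma}\Sigma'$, invokes the general Riemannian expansion $BW_2^2=\|U-U'\|^2_{\bar\Sigma}-\tfrac13 R_{\bar\Sigma}(U,U',U,U')+O(\cdot)^6$ and the explicit Bures--Wasserstein curvature tensor, so the correction term is identified as a curvature effect. Your route is more elementary and self-contained (no curvature formula, no third-order expansion lemma, and an exact expression from which the error control is transparent), while the paper's route exhibits the geometric mechanism behind the distortion and extends beyond this special symmetric configuration. Two small points to fix when writing it up: in the denominator you wrote $\tr\big[(\Sigma^{1/2}-\Sigma'^{1/2})^2\big]=2(a+b)-2\tr(\Sigma^{1/2}\Sigma'^{1/2})$; the correct constant is $\tr\Sigma+\tr\Sigma'=2(a^2+b^2)$, not $2(a+b)$ (the fact $\tr\Sigma^{1/2}=a+b$ is not what enters here), and with that correction your trigonometric evaluation indeed yields $2(a-b)^2\sin^2\theta$. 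Also note (as in the paper's own proof) that the statement implicitly requires $\sin\theta\neq0$ and $a\neq b$, since otherwise both distances vanish and the ratio is $0/0$.
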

\begin{proof}
Recall that the linearized Bures-Wasserstein distance at $\bar\Sigma$ between $\Sigma$ and $\Sigma'$ is given by the distance between their images by the Riemannian logarithm map $U\colon=\mathrm{Log}_{\bar\Sigma}\Sigma$ and $U'\colon=\mathrm{Log}_{\bar\Sigma}\Sigma'$ in the tangent space at $\bar\Sigma$, i.e.
$$BW_{2,\bar\Sigma}(\Sigma,\Sigma')=\|U-U'\|^{BW}_{\bar\Sigma},$$
where $\|\cdot\|^{BW}$ denotes the norm associated to the Bures-Wasserstein Riemannian metric in \eqref{app:bures_riem}. As in any Riemannian manifold, the true geodesic distance can be approximated by this linearized distance in the tangent space, corrected by the curvature (see e.g. Lemma 1 in \cite{harms2019approximation}) :
\begin{equation}\label{geod_dist_approx}
BW_2^2(\Sigma,\Sigma')=\left(\|U - U'\|^{BW}_{\bar \Sigma}\right)^2-\frac{1}{3}R_{\bar \Sigma}(U,U',U,U')+O(\|U\|^{BW}_{\bar\Sigma}+\|U'\|^{BW}_{\bar\Sigma})^6,
\end{equation}
where $R_{\bar \Sigma}$ is the curvature tensor. 

Recall from \eqref{app:bures_riem} that the Bures-Wasserstein norm of a vector $U$ is expressed in an eigenvector basis of the base point, here $\bar \Sigma$. Since any basis is an eigenvector basis of $\bar \Sigma$, it is convenient to choose that of $\Sigma$, which we can assume without loss of generality to be the canonical basis. Thus we write $\Sigma=D$ where $D=\mathrm{diag}(a^2,b^2)$ and $\Sigma' = P_\theta DP_\theta^\top$, and the norm associated to the Bures-Wasserstein Riemannian metric is given by
$$\|U\|^{BW}_{\bar\Sigma}=\frac{1}{2}\sum_{1\leq i,j\leq 2}\frac{1}{d_i+d_j}U_{ij}^2$$
where the $d_i$'s are the eigenvalues of $\bar\Sigma$, given here by $d_1=d_2=((a+b)/2)^2$. From Proposition~\ref{prop_app:gaussian_align} we have
\begin{align*}
    U&\colon=\mathrm{Log}_{\bar\Sigma}\Sigma=(T-I)\bar\Sigma+\bar\Sigma(T-I),\\
    U'&\colon=\mathrm{Log}_{\bar\Sigma}\Sigma'=(T'-I)\bar\Sigma+\bar\Sigma(T'-I),
\end{align*}
where
\begin{equation*}
\begin{aligned}
    T&\colon= \bar\Sigma^{-1/2}(\bar\Sigma^{1/2}\Sigma\bar\Sigma^{1/2})^{1/2}\bar\Sigma^{-1/2}=\frac{2}{a+b}D^{1/2},\\
    T'&\colon= \bar\Sigma^{-1/2}(\bar\Sigma^{1/2}\Sigma'\bar\Sigma^{1/2})^{1/2}\bar\Sigma^{-1/2}=\frac{2}{a+b}P_\theta D^{1/2}P_\theta^\top,\\
\end{aligned}
\end{equation*}
and easily get
\[U=\frac{a^2-b^2}{2}J, \quad U'=\frac{a^2-b^2}{2}P_{\theta}JP_{\theta}^\top,\quad \text{where}\quad P_\theta JP_\theta^\top=\left(\begin{matrix}\cos 2\theta & \sin 2\theta\\ \sin 2\theta & -\cos 2\theta\end{matrix}\right)\]
and $J=\mathrm{diag}(1, -1)$. Thus after some computations we obtain
\begin{equation}\label{lot_dist}
\begin{aligned}
\|U\|^{BW}_{\bar \Sigma} &=\|U'\|^{BW}_{\bar \Sigma}=|a-b|/\sqrt{2},\\
BW_{2,\bar \Sigma}(\Sigma,\Sigma')&=\|U-U'\|^{BW}_{\bar\Sigma}=\sqrt{2}|(a-b)\sin\theta|.
\end{aligned}
\end{equation}
To compute the curvature tensor, we use the following formula from~\cite[Table 4.7]{thanwerdas2022riemannian}
\[R_{\bar \Sigma}(U,U',U,U')=\frac{3}{2}\sum_{i,j}\frac{d_id_j}{d_i+d_j}[U_0,U_0']_{ij}^2\]
where $[A,B]=AB-BA$ is the Lie bracket of matrices, $U_0$ and $U'_0$ are the only symmetric matrices verifying the Sylvester equations $U=U_0\bar\Sigma+\bar\Sigma U_0$ and $U'=U'_0\bar\Sigma+\bar\Sigma U'_0$ respectively. Since $\bar \Sigma$ is a multiple of the identity, we easily get
\[U_0=\frac{a-b}{a+b}J, \quad U'_0=\frac{a-b}{a+b}P_\theta JP_\theta^\top\]
and straightforward computations yield
\begin{equation}\label{curvature}
R_{\bar \Sigma}(U,U',U,U')=\frac{3}{2}\frac{(a-b)^4}{(a+b)^2}\sin^2 2\theta.
\end{equation}
Finally, putting together \eqref{geod_dist_approx}, \eqref{lot_dist} and \eqref{curvature} and we obtain
\begin{align*}
    BW_2^2(\Sigma, \Sigma')=BW^2_{2,\bar \Sigma}(\Sigma, \Sigma')-2\frac{(a-b)^4}{(a+b)^2}\sin^2\theta\cos^2\theta+O((a-b)^6),
\end{align*}
and dividing by the squared linearized optimal transport distance yields the desired result.
\end{proof}

\subsection{Implementation of GPCA for Gaussian distributions}\label{app:implementation_gauss}

As described in Section~\ref{sec:GPCA_gaussian}, the first and second components of geodesic PCA are respectively found by solving the minimization problems in \eqref{eq:1st_gaussian} and \eqref{eq:2nd_gaussian}. The geodesic components are given by
$$\Sigma_i(t)=(A_i+tX_i)(A_i+tX_i)^\top, \quad\text{for}\quad i=1,2,$$ 
where $A_1\in GL_d$ and $X_1\in\hor_{A_1}$ are minimizers of \eqref{eq:1st_gaussian}, and $A_2\in GL_d$ and $X_2\in\hor_{A_2}$ minimizers of \eqref{eq:2nd_gaussian}. The matrix $\pi(A_2)$ is the crossing point through which all geodesic components intersect, see Figure~\ref{fig:second_comp}. The higher order components are found in a analogous way: for the $k$-th component, we search for a horizontal segment $t\mapsto A_k+tX_k$ where $A_k$ 
\color{black}is set to the previous position in the fiber, $A_k = A_{k-1}$ (which implies that the horizontal segments parameterizing the geodesics in $\mathrm{GL}_d$ intersect at the same point) \color{black} 
and the horizontal velocity vector $X_k$ is orthogonal to the lifts of the velocity vectors of the previous component. Thus, the $k$-th component, $k\geq 3$, solves:
\color{black}
\begin{equation}
\label{eq:higher_gaussian}
\begin{aligned}
&\inf\,\, F(A_k,X_k,(Q_i)_{i=1}^n)\\
\text{subject to}\quad &A_k=A_{k-1},\,\, X_k\in\hor_{A_k},\,\, \|X_k\|^2=1,\\
& \langle X_k,X_{k-\ell}\rangle=0,\,\,1\leq \ell \leq k-1,\,\, Q_1,\hdots,Q_n\in SO_d.
\end{aligned}
\end{equation} 
\color{black}
Following \cite{huckemann2010intrinsic} and \cite{calissano2023populations}, we propose an iterative algorithm to implement these components, that, for each component, alternates two steps:
\begin{itemize}
    \item[(Step 1)] minimization of the objective function $F$ (see \eqref{eq:1st_gaussian}) with respect to $(Q_i)_{i=1}^n$ for fixed $(A,X)$,
    \item[(Step 2)] minimization of the objective function $F$ with respect to $(A,X)$ for fixed $(Q_i)_{i=1}^n$.
\end{itemize}
In dimension $d=2$, any rotation matrix $Q$ can be parametrized by a scalar angle $\theta$ and both steps are solved using the Sequential Least Squares Programming (SLSQP) algorithm (see e.g. \cite{ma2024improved}) available on the \emph{scipy python library} and given by \cite{2020SciPy-NMeth}. In higher dimension, each minimization with respect to a rotation matrix is performed using Riemannian gradient descent on $SO_d$, relying on the Riemannian geometry of $SO_d$ induced by the standard Frobenius metric of the ambient space $\R^{d\times d}$. In particular we use the exponential map implemented in the \emph{Python library geomstats} developed by \cite{geomstats}. More details on the Riemannian geometry of $SO_d$ and the Riemannian gradient descent procedure can be found e.g. in \cite[Sections 7.4 and 4.3]{boumal2023introduction}.

Unfortunately, we cannot ensure the convergence of the iterates of the proposed block alternating algorithm, as classical arguments require uniqueness of the minimizer at each iterations as proven in \cite{powell1973search}. This is unachievable in our problem: the line with base point $A$ and direction $X\in\hor_A$ and the line with base point $AQ$ and direction $XQ\in\hor_{AQ}$ for $Q\in O_d$ project onto the same geodesic in the bottom space. However, regarding (Step 1), and thanks to Theorem 3.7 in \cite{huang2022riemannian}, we have for fixed $(A,X)$ that the cost function $f\colon (Q_1,\ldots,Q_n)\mapsto F(A,X,(Q_i)_{i=1}^n)$ has the Riemannian Kurdyka-Lojasiewicz property at any point of $(O_d)^n$. Finally, we have the convergence of the iterates towards an accumulation point thanks to Theorem 3.14 in \cite{zhou2024inexact}. The three assumptions in this theorem are verified in our case : Assumption (3.5) ($L$-Retraction Smoothness) is obtained because $\text{grad} f$ is Lipschitz, and Corollary 10.54 in \cite{boumal2023introduction}; Assumption (3.7) (bounded from below) directly holds because $f\geq 0$; Assumption (3.8) (ndividual Retraction Lipschitzness) is verified thanks to Corollary 10.47 in \cite{boumal2023introduction}.

\paragraph{Scalability of the algorithm} Surely, the computational time of our algorithm for Gaussian distributions will increase with the dimension. However, the algorithm can be made less sensitive to the number of input covariance matrices by parallelizing (Step 2) of our algorithm, which consists in updating the orthogonal matrices $(Q_i)_{i=1}^n$. This would significantly reduce the overall computational cost of the algorithm. Also, we currently use the scipy toolbox to solve (Step 1), which could also be accelerated using a more powerful optimization toolbox. 

\section{Hyperparameters}\label{appendix:hyperparams}

\subsection{Hyperparameters setting}
\begin{table}[H]
    \centering
    \renewcommand{\arraystretch}{1} 
    \begin{tabular}{>{\centering\arraybackslash}p{0.4\linewidth} | >{\centering\arraybackslash}p{0.55\linewidth}}
        \textbf{Hyperparameter} & \textbf{Value} \\
        \hline
        \multirow{2}{*}{ $f_\psi$ architecture} 
        & dense MLP \\ 
        & $d \shortrightarrow 128 \shortrightarrow 128 \shortrightarrow 128 \shortrightarrow 128 \shortrightarrow 1$ \\
        & ELU activation functions \\
        \hline
        \multirow{4}{*}{ $f_\psi$ optimizer} 
        & Adam \\
        & step size $= 0.0005$ \\
        & $\beta_1 = 0.9$ \\
        & $\beta_2 = 0.999$ \\
        \hline
        \multirow{2}{*}{ $\varphi_\theta$ architecture} 
        & dense MLP \\ 
        & $d \shortrightarrow 128 \shortrightarrow 128 \shortrightarrow 128 \shortrightarrow 128 \shortrightarrow d$ \\
        & RELU activation functions \\
        \hline
        \multirow{4}{*}{ $\varphi_\theta$ optimizer} 
        & Adam \\
        & step size $= 0.0005$ \\
        & $\beta_1 = 0.9$ \\
        & $\beta_2 = 0.999$ \\
        \hline
        \multirow{4}{*}{ $t_i$ optimizer} 
        & Adam \\
        & step size $= 0.005$ \\
        & $\beta_1 = 0.9$ \\
        & $\beta_2 = 0.999$ \\
        \hline
        batch size & 1024 \\
        \hline
        number of gradient steps first component & 120,000 \\
        \hline
        number of gradient steps second component & 200,000 \\
        \hline
        $\lambda_{\mathcal{O}}$ & $1.0$ \\
        \hline
        $\lambda_{\mathcal{I}}$ & $1.0$
    \end{tabular}
    \caption{Hyperparameters used across all experiments.}
    \label{tab:hparam}
\end{table}

All experiments were conducted on a single V100 GPU with 32GB of memory, using a shared set of hyperparameters detailed in Table~\ref{tab:hparam}. The same hyperparameters are used for computing both the first and second geodesic components, except for the number of gradient steps (see Table~\ref{tab:hparam}), which is increased for the second component. This is likely due to the additional complexity introduced by the intersection and orthogonality constraints enforced through regularization. Both $f_{\psi}$ and $\varphi_\theta$ are implemented as standard multilayer perceptrons (MLPs) with four hidden layers of width 128. We use ELU activation functions in $f_{\psi}$ because its gradient is used to parameterize a transport map in our formulation, and ELUs are commonly employed in such settings. The Sinkhorn divergence $S_\varepsilon$ is used in the loss function as a surrogate for the squared Wasserstein distance to compute the geodesic components. The regularization parameter $\varepsilon$ must be adapted to the scale of the data; we set it as $\varepsilon = 0.01 \; \mathbb{E}_{x,x' \sim \nu_i} \|x - x'\|^2$, where the expectation is approximated via Monte Carlo using the current minibatch samples. Note that setting $\varepsilon$ this way is the default configuration in the OTT-JAX library. For computing the second geodesic component, we fix the regularization coefficients $\lambda_{\mathcal{O}}$ and $\lambda_{\mathcal{I}}$ to $1.0$, which we found to be robust across all experiments. While increasing them (e.g., to $10.0$) typically yields similar results, excessively large values may degrade performance. Conversely, if these regularization terms are too small, the algorithm tends to recover the first component as the second, due to its lower cost. In practice, we monitor the regularization terms during optimization to ensure they decrease sufficiently relative to their initial values, confirming that the optimization effectively optimize the intersection and orthogonality constraints. To determine the hyperparameters in Table~\ref{tab:hparam}, we performed a grid search over the optimizer learning rate for the $t_i$ in ${5e^{-4}, 1e^{-3}, 5e^{-3}, 1e^{-2}}$, and over the regularization coefficients $\lambda_{\mathcal{O}}$ and $\lambda_{\mathcal{I}}$ in ${0.1, 1.0, 10.0, 100.0}$. We found that setting both regularization terms to $1.0$ consistently yielded good performance across all experiments, see Section \ref{sec:impact_regularizations}.
\color{black}
\paragraph{Note on $\varphi$ parameterization.}
Note that although $\varphi$ is theoretically required to be a diffeomorphism in Otto’s parameterization of geodesics (equation~\ref{geod_otto}), we parameterize it using a simple MLP. Initially, we experimented with normalizing flows to ensure invertibility, but observed that a standard MLP yielded similar results. In Otto's geodesic framework, $\varphi$ serves to modify the reference measure $\rho$ and define the measure at $t = 0$ along the geodesic. If $\varphi$ is not a diffeomorphism and the pushforward $\varphi_\# \rho$ is not absolutely continuous, the resulting geodesic becomes degenerate, which may hinder optimization of the loss equation in \eqref{eq:pca_intro}. In practice, however, we found that the MLP $\varphi_\theta$ reliably produces absolutely continuous measures, which is sufficient for our method.

\subsection{Impact of the regularizations on GPCA}\label{sec:impact_regularizations} 
For the estimation of the second GPCA component, we introduce two regularization terms, $\mathcal{I}(\xi_{\theta, \psi}, \xi_{\theta_2, \psi_2}, t_{\text{inter}}^1, t_{\text{inter}}^2)$ and $\mathcal{O}(\nabla f_\psi(\varphi_\theta), \nabla f_{\psi_2}(\varphi_{\theta_2}))$, with their associated regularization coefficients $\lambda_I$ and $\lambda_O$. The first term enforces that the two components intersect, while the second ensures that the components remain orthogonal. Experimentally, we observe that setting both coefficients to $\lambda_I = \lambda_O = 1.0$ robustly enforces these constraints across all experiments while still producing meaningful principal components. Conversely, if these regularization terms are too small, the algorithm tends to recover the first component as the second, at it gives the lowest cost. In practice, we monitor the regularization terms during optimization to ensure they decrease sufficiently relative to their initial values. This permits to confirm that the optimization effectively optimize the intersection and orthogonality constraints. This section aims at quantifying the impact of the two regularizing coefficients $\lambda_I$ and $\lambda_O$ on the computed geodesics. We focus on the 3D point-cloud experiments with lamps.

\subsubsection{Orthogonality regularization}

In this part, we set the regularization term $\lambda_I$ to 1.0 and compute GPCA for different values of $\lambda_O$. The resulting second component is shown in Figure~\ref{fig:lambda_O_reg}. The GPCA cost of this component, as defined in~\eqref{eq:pca_base_optim_t}, together with the quantity measuring the orthogonality between components, $\mathcal{O}(\nabla f_\psi(\varphi_\theta), \nabla f_{\psi_2}(\varphi_{\theta_2}))$, are reported in Table~\ref{tab:lambda_O_reg}. The quantities reported in Table~\ref{tab:lambda_O_reg} are estimated on batches of size $2048$. The variance is computed over 100 runs for the orthogonality measure and 5 runs for the GPCA cost. Note that each run of the orthogonality estimation already involves computing 100 Wasserstein distances, since we have 100 point clouds.

Note that the GPCA cost of the second component should be compared with that of the first component, which is around $3.0$. Table~\ref{tab:lambda_O_reg} shows that for low values of $\lambda_O$ (i.e., $0.001$ and $0.01$), the orthogonality quantity is large, and the recovered "second" component is in fact identical to the first component, as illustrated in Figure~\ref{fig:lambda_O_reg}. This is also reflected in the GPCA cost (see Table~\ref{tab:lambda_O_reg}), which matches the one of the first component. For higher values of $\lambda_O$ ($0.1$, $1.0$, $10.0$, $100.0$), the algorithm successfully recovers a distinct second component. For the highest value (i.e., $\lambda_O = 100.0$), a loss of performance is observed.

\begin{table}[h]
\centering
\begin{tabular}{c|c|c}
\toprule
$\lambda_O$ & Orthogonality: $\mathcal{O}(\nabla f_\psi(\varphi_\theta), \nabla f_{\psi_2}(\varphi_{\theta_2}))$ & GPCA cost (second component) \\
\midrule
0.001  & $0.909 \;\pm\; 0.005$        & $2.96 \;\pm\; 0.01$ \\
0.01   & $0.811 \;\pm\; 0.008$        & $3.00 \;\pm\; 0.01$  \\
0.1    & $2.1\times 10^{-3} \;\pm\; 3\times 10^{-4}$ 
                                      & $5.75 \;\pm\; 0.02$  \\
1.0    & $3.1\times 10^{-4} \;\pm\; 6.5\times 10^{-5}$ 
                                      & $5.76 \;\pm\; 0.02$  \\
10.0   & $2.2\times 10^{-4} \;\pm\; 5\times 10^{-5}$ 
                                      & $5.89 \;\pm\; 0.02$  \\
100.0  & $1.1\times 10^{-5} \;\pm\; 2\times 10^{-6}$ 
                                      & $5.99 \;\pm\; 0.02$  \\
\bottomrule
\end{tabular}
\caption{Orthogonality regularization value and second-component loss for different values of $\lambda_O$.}
\label{tab:lambda_O_reg}
\end{table}

\label{app:gpca_reg_impact}
\begin{figure}[H]
    \centering
    \includegraphics[width=1.0\linewidth, clip,trim=0cm 2.4cm 0cm 3.0cm]{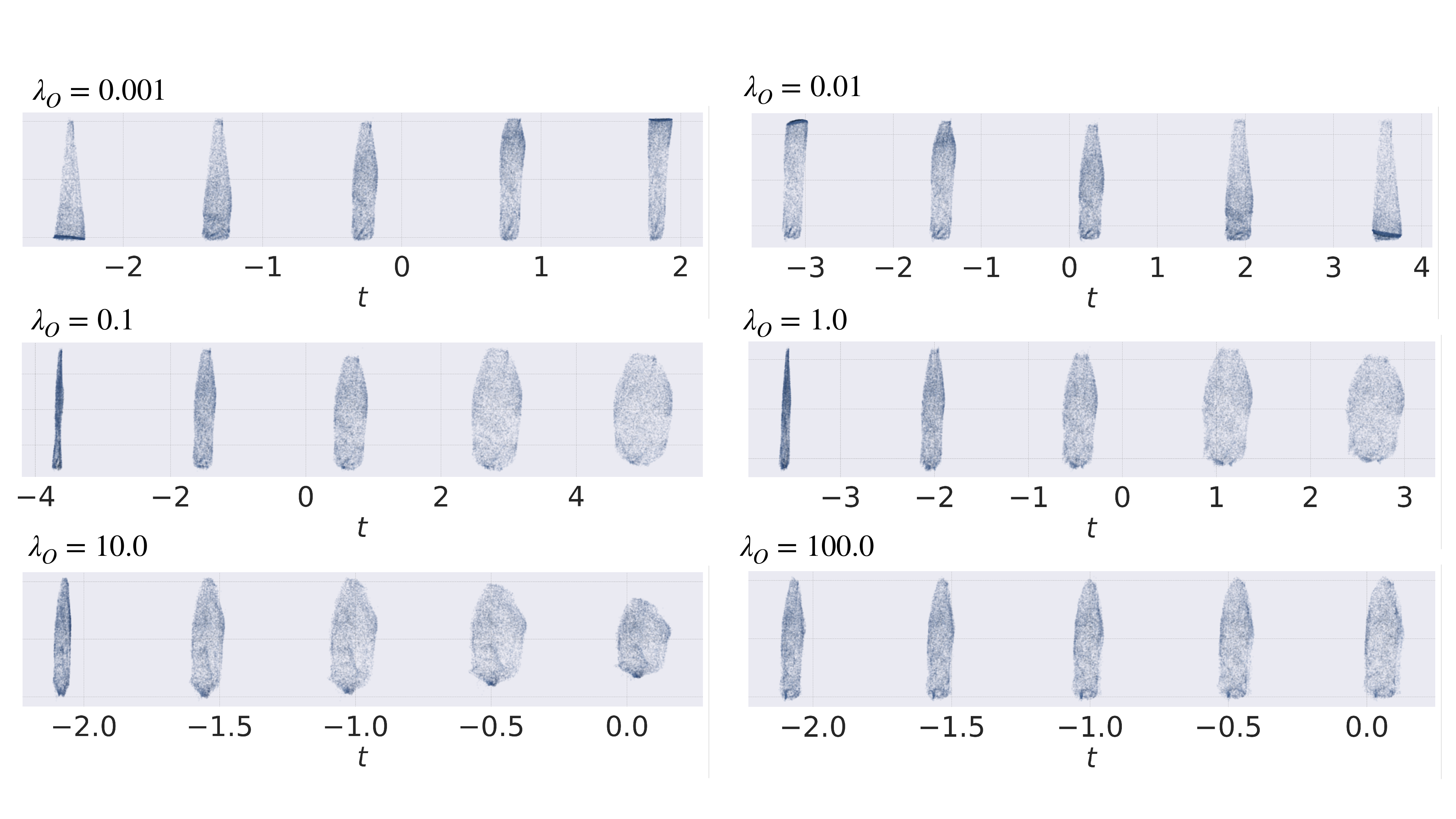}
\caption{Empirical distributions sampled uniformly along the geodesics associated with the second GPCA principal component for different values of the regularization coefficient $\lambda_O$. In all experiments, the other  regularization coefficient is fixed at $\lambda_I = 1.0$.
}
    \label{fig:lambda_O_reg}
\end{figure}

\subsubsection{Regularization on the intersection of the geodesics}

In this part, we set the regularization term $\lambda_O$ to 1.0 and compute GPCA for different values of $\lambda_I$. The second component is displayed in Figure~\ref{fig:lambda_I_reg}; the GPCA cost of this component, as well as the quantity measuring the intersection of the components, $\mathcal{I}(\xi_{\theta, \psi}, \xi_{\theta_2, \psi_2}, t_{\text{inter}}^1, t_{\text{inter}}^2)$, are reported in Table~\ref{tab:lambda_I_reg}. The quantities reported in Table~\ref{tab:lambda_I_reg} are estimated on batches of size $2048$. The variance is computed over 100 runs for the intersection measure and 5 runs for the GPCA cost.

We observe from the recovered geodesics in Figure~\ref{fig:lambda_I_reg} that this regularization term plays a less significant role than the orthogonality term. Moreover, Table~\ref{tab:lambda_I_reg} shows that increasing $\lambda_I$ does not affect negatively the GPCA cost of the recovered component.

\label{app:gpca_reg_impact2}
\begin{figure}[H]
    \centering
    \includegraphics[width=1.0\linewidth, clip,trim=0cm 2.5cm 0cm 2.5cm]{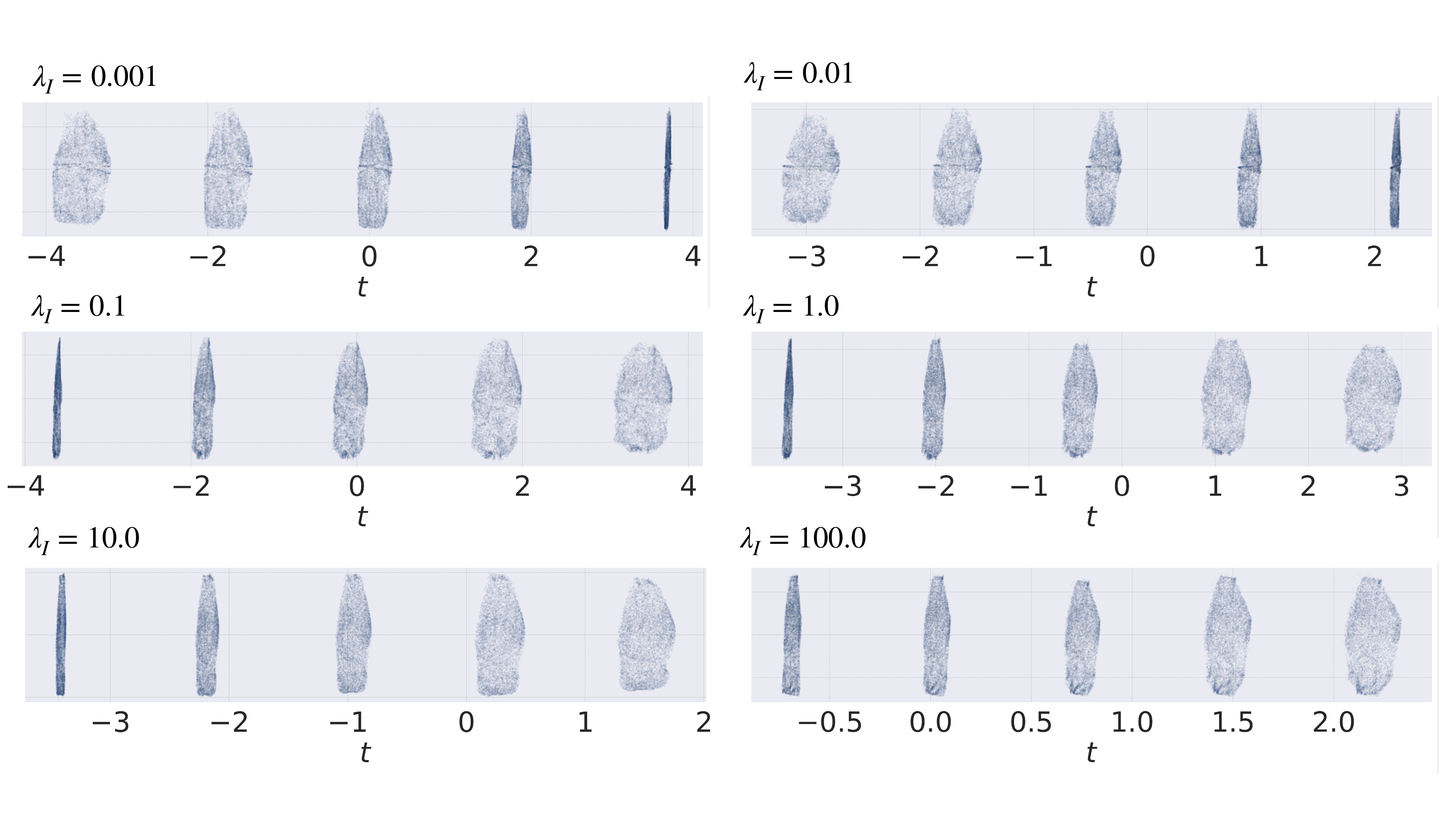}
\caption{Empirical distributions sampled uniformly along the geodesics associated with the second GPCA principal component for different values of the regularization coefficient $\lambda_I$. In all experiments, the other regularization coefficient is fixed at $\lambda_O = 1.0$.
}
    \label{fig:lambda_I_reg}
\end{figure}

\begin{table}[h]
\centering
\begin{tabular}{c|c|c}
\toprule
$\lambda_I$ & Intersection: $\mathcal{I}(\xi_{\theta, \psi}, \xi_{\theta_2, \psi_2}, t_{\text{inter}}^1, t_{\text{inter}}^2)$ & GPCA cost (second component) \\
\midrule
0.001  & $6.5\times 10^{-2} \;\pm\; 1\times 10^{-3}$ 
       & $5.76 \;\pm\; 0.02$ \\
0.01  & $2.3\times 10^{-2} \;\pm\; 4\times 10^{-4}$ 
       & $5.77 \;\pm\; 0.02$ \\
0.1  & $2.7\times 10^{-3} \;\pm\; 1\times 10^{-4}$ 
       & $5.77 \;\pm\; 0.01$ \\
1.0    & $1.0\times 10^{-3} \;\pm\; 6\times 10^{-5}$ 
       & $5.76 \;\pm\; 0.02$ \\
10.0   & $7.2\times 10^{-5} \;\pm\; 3\times 10^{-6}$ 
       & $5.74 \;\pm\; 0.02$ \\
100.0  & $3.1\times 10^{-5} \;\pm\; 1\times 10^{-6}$ 
       & $5.91 \;\pm\; 0.02$ \\
\bottomrule
\end{tabular}
\caption{Squared Euclidean distance between $\xi_1(t^1_{\text{inter}})$ and $\xi_2(t^2_{\text{inter}})$ and second-component loss for different values of $\lambda_I$.}
\label{tab:lambda_I_reg}
\end{table}

\subsubsection{Scalability of our \textsc{GPCAgen} algorithm}
For general distributions, there are two types of “scaling” that can affect the algorithm:
\begin{enumerate}
\item Number of probability measures ($n$): The number of measures $\nu_i$ directly determines the iterations of the inner loop in Algorithm 1 (line 3). Consequently, the training time scales linearly with $n$.
\item Dimension of the space ($d$): As the dimension of the space in which the $\nu_i$ lies increases, the main challenge consists in accurately estimating the maximum and minimum eigenvalues that the Hessian of $f$ can take. As discussed with reviewer oUMT, in high dimensions, it becomes necessary to use algorithms that avoid computing the full Hessian and instead rely on matrix-vector products, such as the LOBPCG algorithm \cite{duersch2018robust}. Furthermore, rather than relying solely on the samples in the training batch, an adversarial approach would be needed to track the eigenvectors corresponding to the worst-case eigenvalues.
\end{enumerate}

\section{Use of Large Language Models (LLMs)} LLMs were used only to assist with polishing the writing; all research ideas, experiments, and analyses were conducted independently by the authors.
\end{document}